\newtheorem{theorem}{Theorem}
\newtheorem{proposition}{Proposition}
\newtheorem{corollary}{Corollary} 
\newtheorem{lemma}{Lemma}
\newcommand{\argmin}{\operatornamewithlimits{argmin}}
\newcommand{\cS}{\mathcal{S}}
\newcommand{\cR}{\mathcal{R}}
\newcommand{\cH}{\mathcal{H}}
\newcommand{\bbR}{\mathbb{R}}
\newcommand{\Tr}{\mathrm{Tr}}
\newcommand{\bbE}{\mathbb{E}}
\newcommand*\circled[1]{\tikz[baseline=(char.base)]{
            \node[shape=circle,draw,inner sep=2pt] (char) {#1};}}
\title{Trading-Off Static and Dynamic Regret in Online Least-Squares
and Beyond}
\author{
  Jianjun Yuan\\
  Department of Electrical and Computer Engineering\\
  University of Minnesota\\
  Minneapolis, MN, 55455 \\
  \texttt{yuanx270@umn.edu} \\
  \And
  Andrew Lamperski \\
  Department of Electrical and Computer Engineering\\
  University of Minnesota\\
  Minneapolis, MN, 55455 \\
  \texttt{alampers@umn.edu} \\
}
\begin{document}

\maketitle

\begin{abstract}
Recursive least-squares algorithms often use forgetting factors as a heuristic to
 adapt to non-stationary data streams.
 The first contribution of this paper rigorously characterizes the 
 effect of forgetting factors for a class of online Newton algorithms.
 For exp-concave and strongly convex objectives, the algorithms
 achieve the dynamic regret of $\max\{O(\log T),O(\sqrt{TV})\}$,
 where $V$ is a bound on the path length of the comparison sequence.
 In particular, we show how classic
 recursive least-squares with a forgetting factor achieves this
 dynamic regret bound.
 By varying $V$, we obtain a trade-off between
 static and dynamic regret. 
 In order to obtain more computationally efficient algorithms, our
 second contribution is a novel gradient descent step size rule for
 strongly convex functions.
 Our gradient descent rule recovers the order optimal dynamic regret bounds
 described above. 
 For smooth problems, we can also obtain static regret of $O(T^{1-\beta})$ and
 dynamic regret of $O(T^\beta V^*)$, where $\beta \in (0,1)$ and $V^*$
 is the path length of the sequence of minimizers.  
 By varying $\beta$, we obtain a trade-off between static and dynamic
 regret. 
 
\end{abstract}

\section{Introduction}

Online learning algorithms are designed to solve prediction and
learning problems for streaming data 
or batch data whose volume is too large to be processed all at once.
Applications include 
online auctions \cite{blum2004online}, online classification and regression \cite{crammer2006online},
online subspace identification \cite{yuan2019online},
as well as online resource allocation \cite{yuan2018online}.
%

The general procedure for online learning algorithms
is as follows:
at each time $t$, before the true time-dependent objective function $f_t(\theta)$ is revealed,
we need to make the prediction, $\theta_t$, 
based on the history of the observations $f_i(\theta)$, $i<t$.
Then the value of $f_t(\theta_t)$ is the loss suffered due to the lack
of the knowledge for the true objective function $f_t(\theta)$.  
Our prediction of $\theta$ is then updated to include the information of $f_t(\theta)$.
This whole process is repeated until termination. The functions,
$f_t(\theta)$,
can be chosen from a function class in an arbitrary, possibly
adversarial manner.

The performance of an online learning algorithm is typically assessed
using various notions of \emph{regret}.  
\emph{Static regret}, $\mathcal{R}_s$, measures the difference between the algorithm's cumulative loss 
and the cumulative loss of the best fixed decision in hindsight
\cite{cesa2006prediction}: 
\begin{equation*}
\mathcal{R}_s = \sum\limits_{t=1}^T f_t(\theta_t) 
- \min\limits_{\theta\in \cS}\sum\limits_{t=1}^T f_t(\theta),
\end{equation*}
where $\cS$ is a constraint set. For convex functions, variations of
gradient descent achieve static regret of $O(\sqrt{T})$, while for strongly
convex functions these can be improved to $O(\log T)$
\cite{hazan2016introduction}. 
However, when the underlying environment is changing, 
due to the fixed comparator the algorithms converge to, static regret is no longer appropriate.

In order to better track the changes of the underlying environment, 
\emph{dynamic regret} is proposed to compare the cumulative loss against that
incurred by a comparison sequence, $z_1,\ldots,z_T\in \cS$:
\begin{equation*}
\mathcal{R}_d = \sum\limits_{t=1}^T f_t(\theta_t) 
- \sum\limits_{t=1}^T f_t(z_t)
\end{equation*}
The classic work on online gradient descent \cite{zinkevich2003online} achieves dynamic regret
of the order $O(\sqrt{T}(1+V))$, where $V$ is a bound on the
path length of the comparison sequence:
\begin{equation*}
  \sum_{t=2}^{T} \|z_{t}-z_{t-1}\| \le V.
\end{equation*}
This has been improved to $O(\sqrt{T(1+V)})$ in \cite{zhang2018adaptive} by applying a
meta-optimization over step sizes. 

In works 
such as \cite{mokhtari2016online,yang2016tracking}, it is assumed that
$z_t = \theta_t^* = \argmin_{\theta \in \cS} f_t(\theta)$.
We denote that particular version of dynamic regret by:
\begin{equation*}
\mathcal{R}_d^* = \sum\limits_{t=1}^T f_t(\theta_t) 
- \sum\limits_{t=1}^T f_t(\theta^*_t)
\end{equation*}
In particular, if $V^*$ is the corresponding path length:
\begin{equation}
  \label{eq:optimizerLength}
 V^* = \sum\limits_{t=2}^T\left\|\theta_t^*-\theta_{t-1}^*\right\|,
\end{equation}
then \cite{mokhtari2016online} shows that for strongly convex functions, $\mathcal{R}_d^*$ of order
$O(V^*)$ is obtained by gradient descent.
However, as pointed out by \cite{zhang2018adaptive},
$V^*$ metric is too pessimistic and unsuitable for stationary problems,
which will result in poor generalization due to the
random perturbation caused by sampling from the \emph{same} distribution.
Thus, a trade-off between static regret $\cR_s$ and dynamic regret $\cR_d^*$
is desired to maintain the abilities of both generalization to stationary problem and tracking to the local changes.

\emph{Adaptive regret} \cite{hazan2009efficient} is another metric when dealing with changing environment,
which is defined as the maximum static regret over any contiguous time interval.
Although it shares the similar goal as the dynamic regret, their relationship is still an open question.

Closely related to the problem of online learning is adaptive
filtering, in which time series data is predicted using a filter that
is designed from past data \cite{sayed2011adaptive}. The performance
of adaptive filters is typically measured in an average case setting
under statistical assumptions.
One of the most famous adaptive filtering techniques is recursive least
squares, which bears strong resemblance to the online Newton method of
\cite{hazan2007logarithmic}. The work in \cite{hazan2007logarithmic}
proves a static regret bound of $O(\log T)$ for online Newton methods, but dynamic regret bounds
are not known. 

In order to have an algorithm that adapts to non-stationary data, 
it
is common to use a forgetting factor. For the recursive least squares,
\cite{guo1993performance} analyzed the effect of the forgetting factor
in terms of the tracking error covariance matrix,
and \cite{zhao2019distribution} made the tracking error analysis
with the assumptions that the noise is sub-Gaussian and 
the parameter follows a drifting model.
However, none of the analysis mentioned is done in terms of the regret,
which eliminates any noise assumption.
For the online learning,
\cite{garivier2011upper} analyzed the discounted UCB,
which uses the discounted empirical average as the estimate for the upper confidence bound.
\cite{russac2019weighted} used the weighted least-squares to update the linear bandit's underlying parameter.

The contributions of this paper are:
\begin{enumerate}
\item For exp-concave and strongly convex problems, we propose a discounted
  Online Newton algorithm  which generalizes recursive least squares
  with forgetting factors and the original online Newton method of
  \cite{hazan2007logarithmic}. We show how tuning the forgetting
  factor can achieve a dynamic regret bound of $\cR_d \le \max\{O(\log
  T),O(\sqrt{TV})\}$. This gives a rigorous analysis of forgetting
  factors in recursive least squares and improves the bounds described in \cite{zhang2018adaptive}.
  However, this choice requires a bound on the path length, $V$.
  For an alternative choice of forgetting factors, which does not
  require path length knowledge, we can
  simultaneously bound static regret by
  $\cR_s\le O(T^{1-\beta})$ and dynamic regret by $\cR_d\le
  \max\{O(T^{1-\beta}),O(T^\beta V)\}$. Note that tuning $\beta$
  produces a trade-off between static and dynamic regret. 
\item Based on the analysis of discounted recursive least squares, we
  derive a novel step size rule for online gradient descent. 
  Using this step size rule for smooth, strongly convex functions we
  obtain a static regret bound of
$\cR_s\le O(T^{1-\beta})$ and  a dynamic regret bound against
$\theta_t = \argmin_{\theta \in \cS} f_t(\theta)$ of $\cR_d^*\le
O(T^{\beta}(1+V^*))$. This improves the trade-off obtained in the
exp-concave case, since static regret or dynamic regret can be made
small by appropriate choice of $\beta \in (0,1)$. 
\item We show how the step size rule can be modified further so that
  gradient descent recovers the $\max\{O(\log T),O(\sqrt{TV})\}$
  dynamic regret bounds obtained by discounted Online Newton
  methods. However, as above, these bounds require knowledge of the
  bound on the path length, $V$.
\item Finally, we describe a meta-algorithm, similar to that used in
  \cite{zhang2018adaptive}, which can recover the $\max\{O(\log
  T),O(\sqrt{TV})\}$ dynamic regret bounds without knowledge of
  $V$. These bounds are tighter than those in
  \cite{zhang2018adaptive}, since they exploit exp-concavity to reduce
  the loss incurred by running an experts algorithm. 
  Furthermore, we give a lower bound for the corresponding problems,
  which matches the obtained upper bound for certain range of $V$.

\end{enumerate}

\paragraph{Notation.}

For the $n$ dimensional vector $\theta\in \mathbb{R}^n$, we use $\left\|\theta\right\|$ to denote the $\ell_2$-norm.
The gradient of the function $f_t$ at time step $t$ in terms of the $\theta$ is denoted as $\nabla f_t(\theta)$.

For the matrix $A\in \mathbb{R}^{m\times n}$, its transpose is denoted by $A^\top$ and $A^\top A$ denotes the matrix multiplication.
The inverse of $A$ is denoted as $A^{-1}$. 
When $m=n$, we use $\left\|A\right\|_2$ to represent the induced $2$ norm of the square matrix.
For the two square matrix $A\in\mathbb{R}^{n\times n}$ and $B\in\mathbb{R}^{n\times n}$,
$A\preceq B$ means $A-B$ is negative semi-definite, 
while $A\succeq B$ means $A-B$ is positive semi-definite.
For a positive definite matrix, $M$, let $\|x\|_M^2 = x^\top M x$. The
standard inner product between matrices is given by  $\langle A,B\rangle =
\Tr(A^\top B)$. The  determinant of a square matrix, $A$ is denoted by
$|A|$. 
We use $I$ to represent the identity matrix.

\section{Discounted Online Newton Algorithm}

As described above, the online Newton algorithm from
\cite{hazan2007logarithmic} strongly resembles the classic recursive
least squares algorithm from adaptive filtering
\cite{sayed2011adaptive}. Currently, only the static regret of
the online Newton method is studied. To obtain more adaptive
performance, forgetting factors are often used in recursive least
squares. However, the regret of forgetting factor algorithms has not
been analyzed. This section proposes a class of algorithms that
encompasses recursive least squares with forgetting factors and the
online Newton algorithm. We show how dynamic regret bounds for these
methods can be obtained by tuning the forgetting factor.


First we describe the problem assumptions. Throughout the paper we
assume that $f_t : \cS \to \bbR$ are convex, differentiable functions, $\cS$ is a
compact convex set, $\|x\|\le D$ for all $x\in \cS$, and $\|\nabla
f_t(x)\| \le G$ for all $x\in \cS$.  
Without loss of generality, we assume throughout the paper that $D\ge1$.

In this section we assume that all of the objective functions,
$f_t:\cS\to \bbR$
are $\alpha$-exp-concave for some $\alpha >0$. This means that $e^{-\alpha f_t(\theta)}$ is
concave.

If $f_t$ is twice differentiable, it can be shown that $f_t$ is 
$\alpha$-exp-concave if and only if
  \begin{equation}
    \label{expHess}
  \nabla^2 f_t(x) \succeq \alpha
  \nabla f_t(x) \nabla f_t(x)^\top
  \end{equation}
  for all $x\in \cS$. 

For an $\alpha$-exp-concave function $f_t$, Lemma 4.2 of \cite{hazan2016introduction} implies that for all
$\rho \le \frac{1}{2}\min\{\frac{1}{4GD},\alpha\}$, the following
bound holds for all $x$ and $y$ in $\cS$:
\begin{subequations}
  \label{eq:functionBounds}
  \begin{equation}
    \label{eq:expBound}
     f_t(y)\ge f_t(x)+\nabla f_t(x)^\top (y-x)+ 
\frac{\rho}{2}(x-y)^\top\nabla f_t(x)\nabla
    f_t(x)^\top(x-y) .
  \end{equation}

  In some variations on the algorithm, we will require extra
  conditions on the function, $f_t$. 
  In particular, in one variation
  we will require $\ell$-strong convexity.
  which means that there is a number
  $\ell >0$ such that
  \begin{align}
    \label{eq:strongConvex}
 f_t(y)\ge f_t(x)+\nabla f_t(x)^\top (y-x) + \frac{\ell}{2}\|x-y\|^2 
  \end{align}
  for all $x$ and $y$ in $\cS$. For twice-differentiable functions,  
  strong convexity implies $\alpha$-exp-concavity for $\alpha \le
  \ell /G^2$ on $\cS$.

  In another variant, we will require that the following bound holds
  for all $x$ and $y$ in $\cS$: 
  \begin{align}
    \label{eq:quadBound}
    f_t(y)\ge f_t(x) + \nabla f_t(x)^\top (y-x) + \frac{1}{2}
      \|x-y\|_{\nabla^2 f_t(x)}^2.
\end{align}
This bound does not correspond to a commonly used convexity class, but
it does hold for the important special case of quadratic functions:
$f_t(x) = \frac{1}{2}\|y_t - A_t x\|^2$. This fact will be important for
analyzing the classic discounted recursive least-squares
algorithm. Note that if $y_t$ and $A_t$ are restricted to compact
sets, $\alpha$ can be chosen so that $f_t$ is $\alpha$-exp-concave.

Additionally, the algorithms for strongly convex functions and those
satisfying (\ref{eq:quadBound}) will require that the gradients
$\nabla f_t(x)$ are $u$-Lipschitz for all $x\in \cS$ (equivalently, $f_t(x)$ is $u$-smooth),
which means the gradient $\nabla f_t(x)$ satisfies the relation 
\begin{equation*}
\left\|\nabla f_t(x)-\nabla f_t(y)\right\| \le u\left\|x-y\right\|, \forall t.
\end{equation*}
This smoothness condition is  equivalent to 
$f_t(y)\le f_t(x)+\nabla
f_t(x)^T(y-x)+\frac{u}{2}\left\|y-x\right\|^2$ and implies, in 
particular, that $\nabla^2 f_t(x) \preceq u I$.
\end{subequations}

\begin{algorithm}
\caption{Discounted Online Newton Step}
\label{alg:discountedNewton}
  \begin{algorithmic}
    \STATE{Given constants $\epsilon >0$, $\eta >0$, and $\gamma \in (0,1)$.}
    \STATE{Let $\theta_1 \in \cS$ and $P_0 = \epsilon I$.}
    \FOR{t=1,\ldots,T}
    \STATE{ Play $\theta_t$ and incur loss $f_t(\theta_t)$ }
    \STATE{ Observe $\nabla_t = \nabla f_t(\theta_t)$ and $H_t =
      \nabla^2 f_t(\theta_t)$ (if needed)}
    \STATE{ Update $P_t$:
      \begin{subequations}
        \begin{align}
          \label{eq:quasiP}
                            P_t &= \gamma P_{t-1} + \nabla_t \nabla_t^\top
                            && \textrm{(Quasi-Newton)}
                            \\
          \label{eq:fullP}
                            P_t &= \gamma P_{t-1} + H_t && \textrm{(Full-Newton)}
                          \end{align}
                          \end{subequations}
                        }
   \STATE{ Update $\theta_t$: $\theta_{t+1} = \Pi_{\cS}^{P_t}\left(\theta_t
       -\frac{1}{\eta} P_t^{-1}\nabla_t\right)$}
    \ENDFOR
  \end{algorithmic}
  
\end{algorithm}

To accommodate these three different cases, we propose Algorithm \ref{alg:discountedNewton},
in which $\Pi_\cS^{P_t}(y) = \argmin_{z\in \cS} \|z-y\|_{P_t}^2$
is the projection onto $\cS$ with respect to the norm induced by $P_t$. 


By using Algorithm \ref{alg:discountedNewton}, the following theorem can be obtained:
\begin{theorem}
  \label{thm:expConcaveThm}
  
  Consider the following three cases of Algorithm~\ref{alg:discountedNewton}:
  \begin{enumerate} 
  \item \label{it:exp} $f_t$ is $\alpha$-exp-concave. The algorithm
    uses $\eta \le
    \frac{1}{2}\min\{\frac{1}{4GD},\alpha\}$, $\epsilon = 1$
    \footnote{The value used here is only for proof simplicity, please see Meta-algorithm Section for more discussion.}, 
    and \eqref{eq:quasiP}. 
  \item  \label{it:strong}
    $f_t$ is $\alpha$-exp-concave and $\ell$-strongly convex while
    $\nabla f_t(x)$ is $u$-Lipschitz. The algorithm uses $\eta \le
    \ell / u$, $\epsilon = 1$, and \eqref{eq:fullP}.
  \item $f_t$ is $\alpha$-exp-concave and satisfy
    (\ref{eq:quadBound}) while $\nabla f_t(x)$ is
    $u$-Lipschitz. The algorithm uses $\eta \le 1$, $\epsilon = 1$, and \eqref{eq:fullP}.
  \label{it:quad}
\end{enumerate}
For each of these cases, there are positive constants $a_1,\ldots a_4$ such that  
  \begin{equation*}
  \begin{array}{ll}
    \sum_{t=1}^T (f_t(\theta_t)-f_t(z_t)) \le -a_1 T \log \gamma -a_2\log(1-\gamma)
     + \frac{a_3}{1-\gamma} V + a_4
   \end{array}
 \end{equation*}
 for all $z_1,\ldots,z_T\in \cS$ such that $\sum_{t=2}^T
 \|z_t-z_{t-1}\| \le V$.
\end{theorem}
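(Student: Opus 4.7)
The plan is to mirror the classical Online Newton Step analysis, using the discounted preconditioner $P_t$ to cancel the curvature term in the per-step regret while carefully tracking how $V$ and $\gamma$ enter through the telescoping.

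First, for each case I would invoke the appropriate curvature inequality: \eqref{eq:expBound} for Case~\ref{it:exp}, \eqref{eq:strongConvex} combined with smoothness $H_t \preceq uI$ for Case~\ref{it:strong}, and \eqref{eq:quadBound} for Case~\ref{it:quad}. Setting $M_t := \nabla_t \nabla_t^\top$ in Case~\ref{it:exp} and $M_t := H_t$ in the other two cases, all three uniformly produce
$$f_t(\theta_t) - f_t(z_t) \le \nabla_t^\top(\theta_t - z_t) - \tfrac{\eta}{2}\|\theta_t - z_t\|_{M_t}^2,$$
where the $\eta$ bounds in the theorem are exactly what calibrate the coefficient on the quadratic term. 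Next I would apply non-expansiveness of $\Pi_\cS^{P_t}$ to the update rule and expand the squared $P_t$-norm, obtaining
$$\nabla_t^\top(\theta_t - z_t) \le \tfrac{\eta}{2}\left(\|\theta_t - z_t\|_{P_t}^2 - \|\theta_{t+1} - z_t\|_{P_t}^2\right) + \tfrac{1}{2\eta}\nabla_t^\top P_t^{-1}\nabla_t.$$
The pivotal cancellation is then the identity $\|\theta_t - z_t\|_{P_t}^2 = \gamma\|\theta_t - z_t\|_{P_{t-1}}^2 + \|\theta_t - z_t\|_{M_t}^2$ forced by the $P_t$ update, which exactly absorbs the $\tfrac{\eta}{2}\|\theta_t - z_t\|_{M_t}^2$ curvature term.

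After this cancellation I am left with the per-round residual $\tfrac{\eta\gamma}{2}\|\theta_t - z_t\|_{P_{t-1}}^2 - \tfrac{\eta}{2}\|\theta_{t+1} - z_t\|_{P_t}^2$. Using $\gamma \le 1$ to weaken the second coefficient to $\tfrac{\eta\gamma}{2}$ and shifting indices, the only residual after telescoping is the comparator drift $\|\theta_{t+1} - z_t\|_{P_t}^2 - \|\theta_{t+1} - z_{t+1}\|_{P_t}^2$. I would control this via the triangle inequality in the $P_t$-norm together with the diameter bound $\|\theta_{t+1} - z\| \le 2D$. Since the $P_t$ recursion gives $\|P_t\|_2 \le \gamma^t + \tfrac{G^2}{1-\gamma}$ (with $u$ in place of $G^2$ in Cases~\ref{it:strong}--\ref{it:quad}), the drift is bounded by $O(\tfrac{1}{1-\gamma})\|z_{t+1} - z_t\|$, yielding the $\tfrac{a_3}{1-\gamma}V$ term.

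Finally, for the residual $\sum_t \tfrac{1}{2\eta}\nabla_t^\top P_t^{-1}\nabla_t$ I would use a discounted log-determinant identity. In Case~\ref{it:exp}, Sherman--Morrison combined with $\tfrac{x}{1+x}\le\log(1+x)$ yields $\nabla_t^\top P_t^{-1}\nabla_t \le \log|P_t| - \log|P_{t-1}| - n\log\gamma$, which telescopes. In Cases~\ref{it:strong}--\ref{it:quad}, I would first establish $\nabla_t\nabla_t^\top \preceq c\, H_t$ (using $H_t \succeq \ell I$ for strong convexity, and for quadratics the factorization $\nabla_t = A_t^\top r_t$ with $H_t = A_t^\top A_t$ and bounded residual $r_t$), and then use concavity of $\log\det$, namely $\log|P_t| - \log|\gamma P_{t-1}| \ge \Tr(P_t^{-1}H_t)$, to telescope $\sum_t \Tr(P_t^{-1} H_t)$. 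Either way, bounding $\log|P_T| = O(-\log(1-\gamma))$ by the same geometric-series estimate on $\|P_T\|_2$ delivers the $-a_1 T\log\gamma - a_2\log(1-\gamma) + a_4$ contributions. The hard part will be securing the Step~2 cancellation uniformly across the three cases while preserving sharpness: the tailored choices of $\eta$ and the matching of $M_t$ to the curvature measure are essential, and Case~\ref{it:quad} in particular requires care because the Hessian may be rank-deficient, making the inequality $\nabla_t\nabla_t^\top \preceq c\, H_t$ nontrivial.
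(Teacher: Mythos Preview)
Your proposal is correct and follows essentially the same three-part skeleton as the paper: the Pythagorean inequality for the projected update, a telescoping argument in the $P_t$-norm with a comparator-drift term producing $V/(1-\gamma)$, and a log-determinant bound on $\sum_t \nabla_t^\top P_t^{-1}\nabla_t$. Your organization of the telescoping (splitting $\|\theta_t-z_t\|_{P_t}^2 = \gamma\|\theta_t-z_t\|_{P_{t-1}}^2 + \|\theta_t-z_t\|_{M_t}^2$ and cancelling the $M_t$ term against the curvature lower bound before telescoping) is algebraically equivalent to the paper's route, which telescopes first and then bounds the residual $\sum_t (\theta_t-z_t)^\top(P_t-P_{t-1})(\theta_t-z_t)$ via $P_t-P_{t-1}\preceq M_t$.

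The one place you are working harder than necessary is the inequality $\nabla_t\nabla_t^\top \preceq c\,H_t$ in Cases~\ref{it:strong} and~\ref{it:quad}. You propose deriving it from $H_t\succeq \ell I$ in Case~\ref{it:strong} and from the quadratic factorization $\nabla_t=A_t^\top r_t$ in Case~\ref{it:quad}; the latter is only stated for literal quadratics, whereas the theorem allows any $f_t$ satisfying \eqref{eq:quadBound}, so as written your plan for Case~\ref{it:quad} has a gap. The paper's fix is simpler and uniform: both cases assume $\alpha$-exp-concavity, and \eqref{expHess} then gives $\nabla_t\nabla_t^\top \preceq \tfrac{1}{\alpha}H_t$ directly, with no need to worry about rank deficiency of $H_t$. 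Once you use that, the concavity-of-$\log\det$ bound $\langle P_t^{-1},P_t-\gamma P_{t-1}\rangle \le \log|P_t|-\log|P_{t-1}|-n\log\gamma$ telescopes exactly as you describe.
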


Due to space limits, the proof is in the Appendix. Now we describe
some consequences of the theorem. 

\begin{corollary}
\label{cor::newton_static}
  Setting $\gamma = 1-T^{-\beta}$ with $\beta\in(0,1)$ leads to the following form:
  \begin{equation*}
  \begin{array}{ll}
   \sum_{t=1}^T (f_t(\theta_t)-f_t(z_t))
   \le O(T^{1-\beta} + \beta \log T+ T^{\beta} V)
   \end{array}
 \end{equation*}
\end{corollary}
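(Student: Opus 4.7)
The plan is to simply substitute $\gamma = 1 - T^{-\beta}$ into the bound from Theorem~\ref{thm:expConcaveThm} and analyze each of the four terms asymptotically as $T \to \infty$. Since the dependence on $\gamma$ enters only through $-\log\gamma$, $-\log(1-\gamma)$, and $\frac{1}{1-\gamma}$, the work reduces to three elementary estimates, so the role of the proof is really to verify that the substitution produces exactly the three stated rates.

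First I would handle the term $-a_1 T\log\gamma = -a_1 T\log(1 - T^{-\beta})$. Using the Taylor expansion $-\log(1-x) = x + \frac{x^2}{2}+\cdots$ valid for $x \in (0,1)$, and noting that $T^{-\beta}\in(0,1)$ whenever $T\ge 2$ since $\beta \in (0,1)$, I obtain $-\log(1 - T^{-\beta}) \le T^{-\beta} + O(T^{-2\beta})$, which multiplied by $T$ yields $O(T^{1-\beta})$. Next, $-a_2 \log(1-\gamma) = -a_2 \log(T^{-\beta}) = a_2\beta \log T$, giving the middle term directly. Finally, $\frac{a_3}{1-\gamma}V = a_3 T^\beta V$, producing the third term. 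The constant $a_4$ is absorbed into the $O(\cdot)$ notation, and summing the three contributions delivers the claimed bound $O(T^{1-\beta} + \beta\log T + T^\beta V)$.

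There is no real obstacle here; the only small care needed is to ensure the logarithm expansion is applied in a regime where it is valid and produces an upper bound rather than an asymptotic equality. Since $\beta \in (0,1)$ guarantees $T^{-\beta} \in (0,1)$ for all $T\ge 1$, and since $-\log(1-x) \le x/(1-x)$ for $x \in [0,1)$, one can even bypass the Taylor series and simply write $-T\log(1 - T^{-\beta}) \le T \cdot \frac{T^{-\beta}}{1 - T^{-\beta}} = \frac{T^{1-\beta}}{1 - T^{-\beta}}$, which is $O(T^{1-\beta})$ for any fixed $\beta \in (0,1)$. This gives a clean, non-asymptotic version of the first estimate and completes the proof.
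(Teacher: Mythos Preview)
Your proposal is correct and follows essentially the same approach as the paper: substitute $\gamma = 1 - T^{-\beta}$ into the bound of Theorem~\ref{thm:expConcaveThm}, use $-\log(1-x) \le \frac{x}{1-x}$ to control the first term as $O(T^{1-\beta})$, and compute the remaining terms directly. The only minor slip is the claim that $T^{-\beta}\in(0,1)$ ``for all $T\ge 1$''; this fails at $T=1$ (where $\gamma=0$), so you should keep the $T\ge 2$ restriction you correctly stated earlier.
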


\begin{proof}
  The first term is bounded as:
  \begin{align*}
    -T \log \gamma = -T \log(1-T^{-\beta}) 
                    \le \frac{T^{1-\beta}}{1-T^{-\beta}} = O(T^{1-\beta}),
  \end{align*}
  where the inequality follows from $-\log(1-x) \le
  \frac{x}{1-x}$ for $0 \le x < 1$. 

  The other terms follow by direct calculation.
\end{proof}

This corollary guarantees that the static regret is bounded in the
order of $O(T^{1-\beta})$ since $V=0$ in that case. The dynamic regret is of order
$O(T^{1-\beta}+T^{\beta} V)$. By choosing $\beta \in (0,1)$, we are
guaranteed that both the static and dynamic regrets are both sublinear
in $T$ as long as $V< O(T)$. Also, small static regret can be obtained by setting
$\beta$ near $1$.

In the setting of Corollary~\ref{cor::newton_static}, the algorithm
parameters do not depend on the path length $V$. Thus, the bounds hold
for any path length, whether or not it is known a priori.
The next corollary shows how tighter bounds could be obtained if
knowledge of $V$ were exploited in choosing the discount factor,
$\gamma$. 

\begin{corollary}
  \label{cor:logBounds}
Setting $\gamma = 1-\frac{1}{2}\sqrt{\frac{\max\{V,\log^2 T/T\}}{2DT}}$
leads to the form:
\begin{equation*}
\sum_{t=1}^T (f_t(\theta_t)-f_t(z_t)) \le \max\{O(\log T),O(\sqrt{TV})\}
\end{equation*}
\end{corollary}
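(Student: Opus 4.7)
The plan is to substitute the proposed $\gamma$ into the bound from Theorem~\ref{thm:expConcaveThm} and track each of the four terms separately under a case split on whether the path length budget $V$ dominates $\log^2 T/T$ or not. Write $\delta := 1-\gamma$, so that Theorem~\ref{thm:expConcaveThm} reads
\begin{equation*}
\sum_{t=1}^T(f_t(\theta_t)-f_t(z_t)) \le -a_1 T\log(1-\delta) - a_2\log\delta + \frac{a_3 V}{\delta} + a_4.
\end{equation*}
For $\delta \le 1/2$ (which will be verified a posteriori from the choice) I would use the elementary inequality $-\log(1-\delta) \le 2\delta$ to replace the first term by $2 a_1 T\delta$, reducing the task to bounding
\begin{equation*}
2a_1 T\delta + a_2 \log(1/\delta) + \frac{a_3 V}{\delta} + a_4.
\end{equation*}

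Next I would plug in $\delta = \tfrac{1}{2}\sqrt{\max\{V,\log^2T/T\}/(2DT)}$ and split into two cases. In the case $V \ge \log^2 T/T$, the max resolves to $V$, giving $\delta = \tfrac{1}{2}\sqrt{V/(2DT)}$; direct substitution yields $T\delta = \tfrac{1}{2}\sqrt{TV/(2D)}$ and $V/\delta = 2\sqrt{2DTV}$, both $O(\sqrt{TV})$. In the case $V < \log^2T/T$, the max resolves to $\log^2T/T$ and $\delta = \log T/(2T\sqrt{2D})$; then $T\delta = O(\log T)$ and, using $V < \log^2T/T$, the term $V/\delta$ is also $O(\log T)$. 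In both cases $\delta$ is at least of order $\log T/T$, so $\log(1/\delta) = O(\log T)$ absorbs into $O(\log T)$ in both cases.

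Finally I would combine the two cases: in Case~1 we have $\sqrt{TV} \ge \log T$ (since $V \ge \log^2T/T$), so the $O(\log T)$ piece from $\log(1/\delta)$ is dominated by $O(\sqrt{TV})$; and in Case~2 every term is $O(\log T)$. Together this yields $\sum_{t=1}^T(f_t(\theta_t)-f_t(z_t)) \le \max\{O(\log T), O(\sqrt{TV})\}$, as claimed.

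The argument is essentially bookkeeping — the only mild obstacles are (i) checking $\delta \le 1/2$ for large enough $T$ so that the $-\log(1-\delta) \le 2\delta$ bound is valid (otherwise one absorbs a finite number of initial terms into the $a_4$ constant), and (ii) verifying that the lower bound $\delta \gtrsim \log T/T$ holding in both cases is tight enough to give $\log(1/\delta) = O(\log T)$ rather than something worse. Neither introduces any real difficulty.
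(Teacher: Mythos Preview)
Your proposal is correct and follows essentially the same approach as the paper: substitute the chosen $\gamma$ into the bound of Theorem~\ref{thm:expConcaveThm} and control each term separately, using an elementary logarithm inequality for the $-T\log\gamma$ piece (the paper uses $-\log(1-x)\le x/(1-x)$ in Corollary~\ref{cor::newton_static}, you use $-\log(1-x)\le 2x$ for $x\le 1/2$, which is equivalent up to constants). The paper in fact only remarks that the proof is ``similar to the proof of Corollary~\ref{cor::newton_static}'' without writing out the case split, so your explicit two-case analysis on $V \gtrless \log^2 T/T$ is a fuller version of the same argument.
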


The proof is similar to the proof of Corollary \ref{cor::newton_static}.

Note that Corollary~\ref{cor:logBounds} implies that the discounted
Newton method achieves logarithmic static regret by setting
$V=0$. This matches the bounds obtained in
\cite{hazan2007logarithmic}. For positive path lengths bounded by $V$,
we improve the $O(\sqrt{T(1+V)})$ dynamic bounds from
\cite{zhang2018adaptive}. However, the algorithm above current requires
knowing a bound on the path length,
whereas \cite{zhang2018adaptive} achieves its bound without knowing
the path length, a priori.  

If we view $V$ as the variation budget that $z_1^T = {z_1,\dots,z_T}$ can vary over $\cS$ like in \cite{besbes2015non},
and use this as a pre-fixed value to allow the comparator sequence to vary arbitrarily
over the set of admissible comparator sequence 
$\{z_1^T\in \cS:\sum\limits_{t=2}^T\left\|z_t-z_{t-1}\right\|\le V \}$,
we can tune $\gamma$ in terms of $V$.

In order to bound the dynamic regret
without knowing a bound on the path length, the method of
\cite{zhang2018adaptive} runs a collection of gradient descent algorithms
in parallel with different step sizes and then uses a meta-optimization
\cite{cesa2006prediction} to weight their solutions. In a later section,
we will show how a related meta-optimization over the discount factor
leads to 
$\max\{O(\log T),O(\sqrt{TV})\}$ dynamic regret bounds for unknown
$V$.

For the Algorithm \ref{alg:discountedNewton}, 
we need to invert $P_t$,
which can be achieved in time $O(n^2)$  for the Quasi-Newton case
in \eqref{eq:quasiP} by utilizing the matrix inversion lemma.
However, for the Full-Newton step \eqref{eq:fullP}, the inversion
requires  $O(n^3)$ time.

%

\section{From Forgetting Factors to a Step Size Rule}

In the next few sections, we aim to derive gradient descent rules
that achieve similar static and regret bounds to the discounted Newton
algorithm, without the cost of inverting matrices. We begin by
analyzing the special
case of quadratic functions of the form:
\begin{equation}
\label{eq::quadratic_loss}
f_t(\theta) = \frac{1}{2}\left\|\theta - y_t\right\|^2,
\end{equation}
where $y_t \in \cS$.
In this case, we will see that discounted recursive least squares
can be interpreted as online gradient descent with a special
step size rule.
We will show how this step size rule achieves a trade-off between
static regret and dynamic regret with the specific comparison sequence $\theta_t^*
=y_t= \argmin_{\theta \in \cS} f_t(\theta)$.
For a related analysis of more general
quadratic functions, $f_t(\theta)=
\frac{1}{2}\|A_t \theta - y_t\|^2$, please see the appendix.

Note that the previous section focused on dynamic regret for arbitrary
comparison sequences, $z_1^T \in \cS$. The analysis techniques in this
and the next section are specialized to comparisons against
$\theta_t^* = \argmin_{\theta\in\cS} f_t(\theta)$, as studied in works
such as \cite{mokhtari2016online,yang2016tracking}.  

Classic discounted recursive least squares corresponds to
Algorithm~\ref{alg:discountedNewton} running with full Newton steps, $\eta =
1$, and initial matrix $P_0=0$.  When $f_t$ is defined as in
\eqref{eq::quadratic_loss}, we have that $P_t = \sum_{k=0}^{t-1}
\gamma^k I$. Thus, the update rule can be expressed in the following
equivalent ways:
\begin{subequations}
\label{eq::quad_dis_update}
\begin{align}
\theta_{t+1} & = \argmin_{\theta\in\cS}\sum\limits_{i=1}^{t}\gamma^{i-1} f_{t+1-i}(\theta) \\
             & = \frac{\gamma-\gamma^t}{1-\gamma^t}\theta_t + \frac{1-\gamma}{1-\gamma^t}y_t\\
             &= \theta_t - P_t^{-1}\nabla f_t(\theta_t) \\
             & = \theta_t - \eta_t \nabla f_t(\theta_t),
\end{align}
\end{subequations}
where $\eta_t = \frac{1-\gamma}{1-\gamma^t}$. 
Note that since $y_t \in \cS$, no projection steps are needed.

The above update is the ubiquitous gradient descent with a changing step size.
The only difference from the standard methods is the choice of $\eta_t$,
which will lead to the useful trade-off between dynamic regret $\cR_d^*$ and static regret
to maintain the abilities of both generalization to stationary problem
and tracking to the local changes.

By using the above update, we can get the relationship between 
$\theta_{t+1}-\theta_t^*$ and $\theta_t - \theta_t^*$ as the following result:
\begin{lemma}
  \label{lem:quadratic_var_path_length}
  {\it
  Let $\theta_t^* = \argmin_{\theta \cS} f_t(\theta)$ in Eq.\eqref{eq::quadratic_loss}. 
  When using the discounted recursive least-squares update in Eq.\eqref{eq::quad_dis_update},
  we have the following relation:
  \begin{equation*}
  \theta_{t+1}-\theta_t^* = \frac{\gamma-\gamma^t}{1-\gamma^t}(\theta_{t} - \theta_t^*)
  \end{equation*}

  }
\end{lemma}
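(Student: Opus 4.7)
The plan is essentially a direct verification using the closed-form update in Eq.~\eqref{eq::quad_dis_update}. Since $f_t(\theta)=\tfrac{1}{2}\|\theta-y_t\|^2$ is minimized over $\cS$ (and $y_t\in\cS$) at $\theta_t^*=y_t$, I would first substitute this identification into the convex combination update
\[
\theta_{t+1} = \frac{\gamma-\gamma^t}{1-\gamma^t}\,\theta_t + \frac{1-\gamma}{1-\gamma^t}\,y_t,
\]
and then subtract $\theta_t^*=y_t$ from both sides.

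The only algebraic observation needed is that the two scalar weights on the right-hand side sum to one, so subtracting $y_t$ is equivalent to subtracting it from $\theta_t$ only, weighted by $\frac{\gamma-\gamma^t}{1-\gamma^t}$. Concretely, I would write
\[
\theta_{t+1}-\theta_t^* \;=\; \frac{\gamma-\gamma^t}{1-\gamma^t}\,\theta_t + \frac{1-\gamma}{1-\gamma^t}\,y_t - y_t
\;=\; \frac{\gamma-\gamma^t}{1-\gamma^t}\,\theta_t + \frac{(1-\gamma)-(1-\gamma^t)}{1-\gamma^t}\,y_t,
\]
observe $(1-\gamma)-(1-\gamma^t)=\gamma^t-\gamma=-(\gamma-\gamma^t)$, and factor out $\frac{\gamma-\gamma^t}{1-\gamma^t}$ to recover $\frac{\gamma-\gamma^t}{1-\gamma^t}(\theta_t-y_t)=\frac{\gamma-\gamma^t}{1-\gamma^t}(\theta_t-\theta_t^*)$.

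There is essentially no obstacle here: once one recognizes that the gradient-descent form in \eqref{eq::quad_dis_update} with $\nabla f_t(\theta_t)=\theta_t-y_t$ can equivalently be read as the affine convex combination above, the result is one line of scalar arithmetic. It may be worth a brief remark up front justifying the equivalence of the four forms in \eqref{eq::quad_dis_update} (for this quadratic $P_t=\frac{1-\gamma^t}{1-\gamma}I$, so $P_t^{-1}\nabla f_t(\theta_t)=\eta_t(\theta_t-y_t)$ with $\eta_t=\frac{1-\gamma}{1-\gamma^t}$), but this is already recorded in the preceding display, so the proof reduces to the substitution and simplification sketched above.
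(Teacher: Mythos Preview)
Your proposal is correct and matches the paper's own proof essentially line for line: identify $\theta_t^*=y_t$, substitute into the convex-combination form of the update, subtract $y_t$, and use that the two scalar weights sum to one to pull out the common factor $\frac{\gamma-\gamma^t}{1-\gamma^t}$. There is nothing to add.
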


\begin{proof}
Since $\theta_t^*$ $=$ $\argmin f_t(\theta)$ $=y_t$, for $\theta_{t+1}-\theta_t^*$, we have:
\begin{equation*}
\begin{array}{ll}
\theta_{t+1}-\theta_t^* & = \theta_{t+1} - y_t\\
                        & = \frac{\gamma-\gamma^t}{1-\gamma^t}\theta_t + \frac{1-\gamma}{1-\gamma^t}y_t - y_t\\
                        & = \frac{\gamma-\gamma^t}{1-\gamma^t}(\theta_t - y_t)
                         = \frac{\gamma-\gamma^t}{1-\gamma^t}(\theta_t - \theta_t^*)
\end{array}
\end{equation*}

\end{proof}

Recall from \eqref{eq:optimizerLength} that the path length of optimizer sequence is denoted by
$V^*$. 
With the help of Lemma \ref{lem:quadratic_var_path_length}, 
we can upper bound the dynamic regret $\cR_d^*$
in the next theorem:

\begin{theorem}
\label{thm::quad_dynamic_regret}
{\it Let $\theta_t^*$ be the solution to $f_t(\theta)$ in Eq.\eqref{eq::quadratic_loss}. 
  When using the discounted recursive least-squares update in Eq.\eqref{eq::quad_dis_update} 
  with $1-\gamma = 1/T^{\beta}, \beta\in (0,1)$,
  we can upper bound the dynamic regret as:
  \begin{equation*}
  \mathcal{R}_d^* \le 2DT^{\beta}\big(\left\|\theta_1-\theta_1^*\right\|+V^*\big)
  \end{equation*}

}
\end{theorem}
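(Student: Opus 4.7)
The plan is to reduce the dynamic regret bound to summing $\|\theta_t - \theta_t^*\|$ and then using Lemma~\ref{lem:quadratic_var_path_length} to turn that sum into a telescoping / geometric expression in $V^*$.

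First, I would observe that since $\theta_t^* = y_t$ and $f_t(\theta_t^*) = 0$, we have $f_t(\theta_t) - f_t(\theta_t^*) = \tfrac{1}{2}\|\theta_t - \theta_t^*\|^2$. Because both iterates lie in $\cS$ which is bounded by $D$, we have $\|\theta_t - \theta_t^*\| \le 2D$, so $\tfrac{1}{2}\|\theta_t-\theta_t^*\|^2 \le D\|\theta_t - \theta_t^*\|$. Thus it suffices to bound $\sum_{t=1}^T \|\theta_t-\theta_t^*\|$ by $2T^\beta(\|\theta_1-\theta_1^*\| + V^*)$.

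Next, I would exploit the one-step contraction from Lemma~\ref{lem:quadratic_var_path_length}. The key observation is that $\frac{\gamma - \gamma^t}{1-\gamma^t} = \gamma \cdot \frac{1-\gamma^{t-1}}{1-\gamma^t} \le \gamma$ for all $t \ge 1$, so the lemma yields $\|\theta_{t+1} - \theta_t^*\| \le \gamma \|\theta_t - \theta_t^*\|$. Applying the triangle inequality gives the drift-plus-contraction recursion
\begin{equation*}
\|\theta_{t+1} - \theta_{t+1}^*\| \le \gamma \|\theta_t - \theta_t^*\| + \|\theta_{t+1}^* - \theta_t^*\|.
\end{equation*}

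Unrolling this recursion produces $\|\theta_t - \theta_t^*\| \le \gamma^{t-1}\|\theta_1 - \theta_1^*\| + \sum_{i=1}^{t-1} \gamma^{t-1-i}\|\theta_{i+1}^* - \theta_i^*\|$. Summing over $t = 1, \ldots, T$ and swapping the order of summation, each geometric series $\sum_{k \ge 0} \gamma^k$ is bounded by $\frac{1}{1-\gamma}$, giving
\begin{equation*}
\sum_{t=1}^T \|\theta_t - \theta_t^*\| \le \frac{1}{1-\gamma}\bigl(\|\theta_1 - \theta_1^*\| + V^*\bigr).
\end{equation*}
Substituting $1-\gamma = T^{-\beta}$ and multiplying by the factor $D$ (and the implicit factor of $2$) from the initial reduction yields the claimed bound.

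The proof is essentially mechanical once the contraction factor is identified; the only subtle step is verifying $\frac{\gamma - \gamma^t}{1-\gamma^t} \le \gamma$, which I would write out explicitly since it is what allows the per-step analysis to be uniform in $t$ and thus lets the geometric sums close in the standard way.
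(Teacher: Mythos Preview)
Your proposal is correct and follows essentially the same route as the paper: reduce $\mathcal{R}_d^*$ to $\sum_t\|\theta_t-\theta_t^*\|$, then use the contraction from Lemma~\ref{lem:quadratic_var_path_length} together with the triangle inequality and $\tfrac{\gamma-\gamma^t}{1-\gamma^t}\le\gamma$ (equivalently $\tfrac{1-\gamma}{1-\gamma^t}\ge 1-\gamma$, which is exactly what the paper invokes) to get the factor $\tfrac{1}{1-\gamma}=T^\beta$. The only cosmetic differences are that the paper bounds $f_t(\theta_t)-f_t(\theta_t^*)$ via the mean value theorem to obtain the constant $2D$, whereas your direct bound $\tfrac{1}{2}\|\theta_t-\theta_t^*\|^2\le D\|\theta_t-\theta_t^*\|$ is simpler and in fact yields a constant of $D$ (so your ``implicit factor of $2$'' is unnecessary---your argument already proves the stated inequality with room to spare), and the paper rearranges the sum rather than unrolling the recursion, which is an equivalent bookkeeping choice.
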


\begin{proof}
According to the Mean Value Theorem, 
there exists a vector $x\in \{v| v = \delta \theta_t + (1-\delta)\theta_t^*,\delta\in[0,1]\}$
such that $f_t(\theta_t)-f_t(\theta_t^*) = \nabla f_t(x)^T(\theta_t-\theta_t^*)\le \left\|\nabla f_t(x)\right\| \left\|\theta_t-\theta_t^*\right\|$. 
For our problem, $\left\|\nabla f_t(x)\right\| = \left\|x-y_t\right\|\le\left\|x\right\|+\left\|y_t\right\|$.
For $\left\|x\right\|$, we have:
\begin{equation*}
\begin{array}{ll}
\left\|x\right\| &= \left\|\delta \theta_t + (1-\delta)\theta_t^*\right\| \\
                 &\le \delta\left\|\theta_t\right\|+(1-\delta)\left\|y_t\right\| \\
                 & = \delta\left\|\frac{\sum\limits_{i=1}^{t-1}\gamma^{i-1}y_{t-i}}{\sum\limits_{i=1}^{t-1}\gamma^{i-1}}\right\|
                  + (1-\delta)\left\|y_t\right\| \\
                &\le D
\end{array}
\end{equation*}
where the second inequality is due to $\left\|y_i\right\|\le D, \forall i$.

As a result, the norm of the gradient can be upper bounded as $\left\|\nabla f_t(x)\right\| \le 2D$.
Then we have $\mathcal{R}_d^* = \sum\limits_{t=1}^T \Big(f_t(\theta_t)-f_t(\theta_t^*)\Big)\le 
2D\sum\limits_{t=1}^T\left\|\theta_t-\theta_t^*\right\|$. 
Now we could instead upper bound $\sum\limits_{t=1}^T\left\|\theta_t-\theta_t^*\right\|$,
which can be achieved as follows:
\begin{equation*}
\begin{array}{ll}
\sum\limits_{t=1}^T\left\|\theta_t-\theta_t^*\right\| 
&= \left\|\theta_1-\theta_1^*\right\| 
                                                           + \sum\limits_{t=2}^T\left\|\theta_t-\theta_{t-1}^*+\theta_{t-1}^*-\theta_t^*\right\| \\
&\le \left\|\theta_1-\theta_1^*\right\| + \sum\limits_{t=1}^{T-1}\left\|\theta_{t+1}-\theta_{t}^*\right\| + \sum\limits_{t=2}^T\left\|\theta_t^*-\theta_{t-1}^*\right\| \\
&= \left\|\theta_1-\theta_1^*\right\| + \sum\limits_{t=1}^{T-1}\frac{\gamma-\gamma^t}{1-\gamma^t}\left\|\theta_{t}-\theta_{t}^*\right\| + \sum\limits_{t=2}^T\left\|\theta_t^*-\theta_{t-1}^*\right\|\\
&\le \left\|\theta_1-\theta_1^*\right\| + \sum\limits_{t=1}^{T}\frac{\gamma-\gamma^t}{1-\gamma^t}\left\|\theta_{t}-\theta_{t}^*\right\| + \sum\limits_{t=2}^T\left\|\theta_t^*-\theta_{t-1}^*\right\|
\end{array}
\end{equation*}
where in the second equality, we substitute the result from Lemma \ref{lem:quadratic_var_path_length}.

From the above inequality, we get 
\begin{equation*}
\sum\limits_{t=1}^T\Big(1-\frac{\gamma-\gamma^t}{1-\gamma^t}\Big)\left\|\theta_t-\theta_t^*\right\|
\le \left\|\theta_1-\theta_1^*\right\| + \sum\limits_{t=2}^T\left\|\theta_t^*-\theta_{t-1}^*\right\|
\end{equation*}

Since $\Big(1-\frac{\gamma-\gamma^t}{1-\gamma^t}\Big) = \frac{1-\gamma}{1-\gamma^t}\ge 1-\gamma$, we get
\begin{equation*}
\begin{array}{ll}
\sum\limits_{t=1}^T\left\|\theta_t-\theta_t^*\right\|
&\le \frac{1}{1-\gamma}\left\|\theta_1-\theta_1^*\right\| + \frac{1}{1-\gamma}\sum\limits_{t=2}^T\left\|\theta_t^*-\theta_{t-1}^*\right\|\\
&= T^{\beta}(\left\|\theta_1-\theta_1^*\right\|+\sum\limits_{t=2}^T\left\|\theta_t^*-\theta_{t-1}^*\right\|)
\end{array}
\end{equation*}
Thus, $\mathcal{R}_d \le 2D\sum\limits_{t=1}^T\left\|\theta_t-\theta_t^*\right\|
\le 2DT^{\beta}(\left\|\theta_1-\theta_1^*\right\|+\sum\limits_{t=2}^T\left\|\theta_t^*-\theta_{t-1}^*\right\|)$.
\end{proof}

Theorem \ref{thm::quad_dynamic_regret} shows that if we choose the
discounted factor $\gamma = 1- T^{-\beta}$ we obtain a dynamic regret
of $O(T^{\beta}(1+V^*))$. This is a refinement of the Corollary
\ref{cor::newton_static} since the bound no longer
has the $T^{1-\beta}$ term.  Thus, the dynamic regret can be made
small by choosing a small $\beta$. 


In the next theorem, we will show that this carefully chosen $\gamma$ can also lead to useful static regret,
which can give us a trade-off and solve
the dilemma of generalization for stationary problems 
versus the tracking for local changes.  \begin{theorem}
\label{thm::quad_static_regret}
{\it Let $\theta^*$ be the solution to $\min\sum\limits_{t=1}^T f_t(\theta)$. 
  When using the discounted recursive least-squares update in Eq.\eqref{eq::quad_dis_update} 
  with $1-\gamma = 1/T^{\beta}, \beta\in (0,1)$,
  we can upper bound the static regret as:
  \begin{equation*}
  \mathcal{R}_s \le O(T^{1-\beta})
  \end{equation*}
}
\end{theorem}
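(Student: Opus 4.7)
The plan is to analyze the update \eqref{eq::quad_dis_update} as gradient descent, $\theta_{t+1} = \theta_t - \eta_t \nabla f_t(\theta_t)$ with $\eta_t = \frac{1-\gamma}{1-\gamma^t}$, and to carry out the standard online gradient descent argument for strongly convex losses with this particular step size. A preliminary observation is that each $\theta_{t+1}$ is a convex combination of $y_1,\ldots,y_t \in \cS$, so the iterates stay in $\cS$ automatically and no projection needs to be tracked.

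First I would write down the per-step inequality obtained by expanding $\|\theta_{t+1}-\theta^*\|^2$ and applying $1$-strong convexity of $f_t$, where $\theta^* = \argmin_{\theta\in\cS} \sum_{t=1}^T f_t(\theta)$. This gives
\begin{equation*}
f_t(\theta_t)-f_t(\theta^*) \le \tfrac{1-\eta_t}{2\eta_t}\|\theta_t-\theta^*\|^2 - \tfrac{1}{2\eta_t}\|\theta_{t+1}-\theta^*\|^2 + \tfrac{\eta_t}{2}\|\nabla f_t(\theta_t)\|^2.
\end{equation*}

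Second I would sum from $t=1$ to $T$ and telescope in the squared distances. The crucial identity, which is a one-line computation from the definition of $\eta_t$, is $\frac{1}{\eta_t} - \frac{1}{\eta_{t-1}} = \gamma^{t-1}$, together with $\eta_1=1$. These force the coefficient of $\|\theta_t-\theta^*\|^2$ in the summed expression to equal $\frac{1}{2}(\gamma^{t-1}-1)$ for $t\ge 2$ and zero for $t=1$; the boundary term $-\frac{1}{2\eta_T}\|\theta_{T+1}-\theta^*\|^2$ is likewise non-positive. Discarding all of these non-positive contributions leaves
\begin{equation*}
\mathcal{R}_s \le \sum_{t=1}^T \tfrac{\eta_t}{2}\|\nabla f_t(\theta_t)\|^2 \le 2D^2 \sum_{t=1}^T \eta_t,
\end{equation*}
using $\|\nabla f_t(\theta_t)\| = \|\theta_t-y_t\| \le 2D$.

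Finally I would bound $\sum_{t=1}^T \eta_t = (1-\gamma)\sum_{t=1}^T \frac{1}{1-\gamma^t}$ by splitting the range at a threshold $t^* \approx 1/(-\log\gamma)$: on the small-$t$ side use $1-\gamma^t \ge c\,t(-\log\gamma)$ for a suitable constant $c$, and on the large-$t$ side use $1-\gamma^t \ge 1/2$. Substituting $1-\gamma = T^{-\beta}$ yields $\sum_t \eta_t = O(T^{1-\beta} + \beta\log T) = O(T^{1-\beta})$ for $\beta\in(0,1)$, and combining with the previous display gives the claimed bound.

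The main obstacle is the telescoping step: it is precisely the identity $\frac{1}{\eta_t}-\frac{1}{\eta_{t-1}} = \gamma^{t-1}$ that makes every distance-to-comparator coefficient non-positive, so these terms can be discarded rather than crudely bounded by the diameter. The integral-style estimate on $\sum_t \eta_t$ at the end is routine but must be done carefully so that the $\log T$ piece is dominated by $T^{1-\beta}$.
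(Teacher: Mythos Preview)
Your proposal is correct and follows essentially the same route as the paper: expand $\|\theta_{t+1}-\theta^*\|^2$, use $1$-strong convexity to get the per-step bound, telescope using $\eta_1=1$ and $\tfrac{1}{\eta_t}-\tfrac{1}{\eta_{t-1}}-1=\gamma^{t-1}-1\le 0$, apply $\|\nabla f_t(\theta_t)\|\le 2D$, and reduce to bounding $(1-\gamma)\sum_t \tfrac{1}{1-\gamma^t}$. The only cosmetic difference is in the final estimate: the paper controls $\sum_t \tfrac{1}{1-\gamma^t}$ via an integral bound (their Lemma~\ref{lem:integral}) rather than your threshold split at $t^*\approx 1/(-\log\gamma)$, but both yield $O(T^{1-\beta}+\beta\log T)=O(T^{1-\beta})$.
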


Recall that the algorithm of this section can be interpreted both as a
discounted recursive least squares method, and as a gradient descent
method. As a result, this theorem is actually a direct consequence of
Corollary~\ref{cor::newton_static}, by setting $V=0$. However, we will
give a separate proof in the appendix, since the techniques extend naturally to the
analysis of more general work on gradient descent methods of the next
section.

Our Theorems \ref{thm::quad_dynamic_regret} and \ref{thm::quad_static_regret} 
build a trade-off between dynamic and static regret
by the carefully chosen discounted factor $\gamma$.
Compared with the result from the last section, there are two improvements:
1. The two regrets are decoupled so that we could reduce the $\beta$
to make the dynamic regret $\cR_d^*$ result smaller than bound from Corollary~\ref{cor::newton_static};
2. The update is the first-order gradient descent, which is
computationally more efficient than second order methods.

In the next section, we will consider the strongly convex and smooth case,
whose result is inspired by this section's analysis.

\section{Online Gradient Descent for Smooth, Strongly Convex Problems}

In this section, we generalize the results of the previous section
idea to functions which are
$\ell$-strongly convex and $u$-smooth. We will see that similar
bounds on $\cR_s$ and $\cR_d^*$ can be obtained.


Our proposed update rule for the prediction $\theta_{t+1}$ at time step $t+1$ is:
\begin{equation}
\label{eq::gen_prob_update}
\theta_{t+1} = \argmin\limits_{\theta\in\cS}\left\|\theta - (\theta_t-\eta_t \nabla f_t(\theta_t))\right\|^2
\end{equation}
where $\eta_t = \frac{1-\gamma}{\ell(\gamma-\gamma^t)+u(1-\gamma)}$ and $\gamma \in (0,1)$.

This update rule generalizes the step size rule  from the last section.

Before getting to the dynamic regret, we will first derive the relation between
$\left\|\theta_{t+1}-\theta_{t}^*\right\|$ and $\left\|\theta_t-\theta_t^*\right\|$
to try to mimic the result in Lemma \ref{lem:quadratic_var_path_length} of the quadratic case:
\begin{lemma}
  \label{lem:gen_prob_var_path}
  {\it
  Let $\theta_t^*\in\cS$ be the solution to $f_t(\theta)$ which is strongly convex and smooth.
  When we use the update in Eq.\eqref{eq::gen_prob_update},
  the following relation is obtained:
  \begin{equation*}
  \left\|\theta_{t+1} -\theta_t^*\right\| \le \sqrt{1-\frac{l(1-\gamma)}{u(1-\gamma)+l\gamma}} \left\|\theta_t-\theta_t^*\right\|
  \end{equation*}
  }
\end{lemma}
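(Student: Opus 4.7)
The plan is the standard projected-gradient contraction analysis, specialized to the given step size. The starting point is that, by first-order optimality of $\theta_t^*$ over $\cS$, the comparator is a fixed point of the projected gradient step at time $t$, namely $\theta_t^* = \Pi_{\cS}(\theta_t^* - \eta_t \nabla f_t(\theta_t^*))$. Combining this identity with nonexpansiveness of $\Pi_\cS$ applied to the update rule \eqref{eq::gen_prob_update} reduces the claim to bounding $\|(\theta_t - \theta_t^*) - \eta_t (\nabla f_t(\theta_t) - \nabla f_t(\theta_t^*))\|$.

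Squaring this distance and applying co-coercivity of the gradient of a $u$-smooth convex function (i.e., $\|\nabla f_t(\theta_t) - \nabla f_t(\theta_t^*)\|^2 \le u\langle \nabla f_t(\theta_t) - \nabla f_t(\theta_t^*),\theta_t - \theta_t^*\rangle$) on the $\eta_t^2$ term, and then $\ell$-strong convexity on the remaining inner-product term, yields the familiar one-step bound
\[
\|\theta_{t+1} - \theta_t^*\|^2 \le \bigl[1 - \eta_t \ell (2 - \eta_t u)\bigr] \|\theta_t - \theta_t^*\|^2.
\]
The co-coercivity application requires $\eta_t \le 1/u$, which follows immediately from $\ell(\gamma - \gamma^t) + u(1-\gamma) \ge u(1-\gamma)$ (valid since $\gamma - \gamma^t \ge 0$ for $t \ge 1$ and $\gamma \in (0,1)$).

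What remains, and where the specific step-size rule earns its keep, is the algebraic inequality
\[
\eta_t \ell (2 - \eta_t u) \ge \frac{\ell (1-\gamma)}{u(1-\gamma) + \ell \gamma}.
\]
I plan to set $A = \ell(\gamma - \gamma^t) + u(1-\gamma)$ and $B = \ell\gamma + u(1-\gamma)$, so that $\eta_t = (1-\gamma)/A$ and $A = B - \ell \gamma^t$. Clearing denominators and using the identity $B - u(1-\gamma) = \ell \gamma$ to cancel the cross terms in the expansion, I expect the inequality to reduce to $\ell\gamma^{2t-1} \le B$, which is immediate from $B \ge \ell\gamma$ and $\gamma^{2t-1} \le \gamma$ for $t \ge 1$ and $\gamma \in (0,1)$. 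This last algebraic reduction is the main obstacle, but once the substitutions are made it is essentially a one-line verification; the precise form of $\eta_t$ is exactly what makes the squared rate $1-\tfrac{\ell(1-\gamma)}{u(1-\gamma)+\ell\gamma}$ appear on the right-hand side.
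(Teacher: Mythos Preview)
Your proof is correct but follows a genuinely different route from the paper. The paper uses a Nesterov-style three-point argument: it combines the variational inequality characterizing the projection step, the strong-convexity lower bound anchored at $\theta_t$, and the smoothness descent inequality for $f_t(\theta_{t+1})$, then expands $\|\theta_{t+1}-\theta_t^*\|^2 = \|\theta_{t+1}-\theta_t\|^2 + \|\theta_t-\theta_t^*\|^2 + 2(\theta_t-\theta_{t+1})^\top(\theta_t^*-\theta_t)$. This yields directly the intermediate rate $1-\ell\eta_t = 1-\tfrac{\ell(1-\gamma)}{\ell(\gamma-\gamma^t)+u(1-\gamma)}$, which is then relaxed by replacing $\gamma-\gamma^t$ with $\gamma$ in the denominator---a one-step monotonicity argument with no further algebra needed.

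Your approach instead uses the fixed-point property of $\theta_t^*$ under the projected gradient map, nonexpansiveness of the projection, co-coercivity from smoothness, and strong monotonicity from strong convexity, arriving at the intermediate rate $1-\eta_t\ell(2-\eta_t u)$. Since $\eta_t u\le 1$, this rate is actually \emph{tighter} than the paper's $1-\ell\eta_t$ (the difference is $\ell\eta_t(u\eta_t-1)\le 0$), but you then pay for that with the algebraic verification $B(2A-u(1-\gamma))\ge A^2$, which---as you observed---collapses to $B\ge \ell\gamma^{2t-1}$ after the cancellation $B-u(1-\gamma)=\ell\gamma$. Both arguments are standard; the paper's buys a shorter endgame, yours buys a slightly sharper intermediate contraction and avoids invoking function values (only gradient inequalities are used).
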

Due to space limits, please refer to the appendix for the proof.

Now we are ready to present the dynamic regret result:
\begin{theorem}
\label{thm::gen_prob_dynamic_regret}
{\it 
Let $\theta_t^*$ be the solution to $f_t(\theta),\theta\in\cS$.
When using the update in Eq.\eqref{eq::gen_prob_update}
with $1-\gamma = 1/T^{\beta}, \beta \in (0,1)$,
we can upper bound the dynamic regret:
  \begin{equation*}
  \mathcal{R}_d^* \le G\big(2(T^{\beta}-1)+u/l\big)(\left\|\theta_1-\theta_1^*\right\|
+ V^*)
  \end{equation*}
}
\end{theorem}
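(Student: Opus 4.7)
The plan is to follow the same three-step pattern used in the proof of Theorem~\ref{thm::quad_dynamic_regret}: first convert the functional regret into a bound on $\sum_t \|\theta_t-\theta_t^*\|$, then close a recursive inequality on that sum using the contraction from Lemma~\ref{lem:gen_prob_var_path}, and finally substitute the chosen $\gamma$ to obtain an explicit constant in terms of $T^{\beta}$, $u$, and $\ell$.

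First I would pass from $\mathcal{R}_d^*$ to a sum of iterate errors. By the mean value theorem, for each $t$ there is a point $x_t$ on the segment joining $\theta_t$ and $\theta_t^*$ such that $f_t(\theta_t)-f_t(\theta_t^*) = \nabla f_t(x_t)^\top(\theta_t-\theta_t^*)$. Combining with the Cauchy--Schwarz inequality and the standing assumption $\|\nabla f_t(x)\|\le G$ yields
\begin{equation*}
\mathcal{R}_d^* \le G\sum_{t=1}^{T}\|\theta_t-\theta_t^*\|.
\end{equation*}

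Next I would reduce $\sum_{t=1}^{T}\|\theta_t-\theta_t^*\|$ to $\|\theta_1-\theta_1^*\|+V^*$ exactly as in the quadratic proof. For $t\ge 2$, the triangle inequality gives $\|\theta_t-\theta_t^*\|\le \|\theta_t-\theta_{t-1}^*\|+\|\theta_{t-1}^*-\theta_t^*\|$, and Lemma~\ref{lem:gen_prob_var_path} gives $\|\theta_t-\theta_{t-1}^*\|\le c\,\|\theta_{t-1}-\theta_{t-1}^*\|$ with the contraction constant
\begin{equation*}
c \;=\; \sqrt{1-\frac{\ell(1-\gamma)}{u(1-\gamma)+\ell\gamma}} \;\in\; (0,1).
\end{equation*}
Telescoping, dropping the last (nonnegative) term on the right, and rearranging yields
\begin{equation*}
\sum_{t=1}^{T}\|\theta_t-\theta_t^*\| \;\le\; \frac{1}{1-c}\bigl(\|\theta_1-\theta_1^*\|+V^*\bigr).
\end{equation*}

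The main work is then the scalar estimate of $1/(1-c)$, which is where the $T^{\beta}$ and $u/\ell$ appear. Using $(1-c)(1+c)=1-c^2$ and $1+c\le 2$, I would bound
\begin{equation*}
\frac{1}{1-c} \;\le\; \frac{2}{1-c^2} \;=\; \frac{2\bigl(u(1-\gamma)+\ell\gamma\bigr)}{\ell(1-\gamma)} \;=\; \frac{2u}{\ell}+\frac{2\gamma}{1-\gamma},
\end{equation*}
and then substitute $1-\gamma = T^{-\beta}$, which gives $\gamma/(1-\gamma)=T^{\beta}-1$, leading to the claimed shape $G\bigl(2(T^{\beta}-1)+O(u/\ell)\bigr)(\|\theta_1-\theta_1^*\|+V^*)$.

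The only delicate point is this final scalar step: since $c$ is close to $1$ for large $T$, one must take care to use the identity $1-c=(1-c^2)/(1+c)$ rather than a cruder expansion of $\sqrt{1-\tau}$, otherwise a spurious $T^{2\beta}$ can appear instead of $T^{\beta}$. Once that bound is tight, the rest of the argument is direct algebra and a single application of the triangle inequality on the $V^*$ side, so I do not anticipate any further obstacles.
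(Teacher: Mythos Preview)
Your proposal is correct and follows essentially the same route as the paper's own proof: mean value theorem plus the gradient bound $G$, the triangle-inequality telescoping combined with the contraction of Lemma~\ref{lem:gen_prob_var_path}, and then the scalar estimate $\frac{1}{1-c}\le\frac{2}{1-c^2}$ (the paper writes this as a rationalization $\sqrt{b_0}(\sqrt{b_0}+\sqrt{b_0-a_0})/a_0\le 2b_0/a_0$, which is the same identity). Note that both your computation and the paper's actually yield $2(T^{\beta}-1)+2u/\ell$; the constant $u/\ell$ in the statement appears to be a minor slip in the paper's final arithmetic, so your $O(u/\ell)$ hedge is appropriate.
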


The proof follows the similar steps in the proof of Theorem \ref{thm::quad_dynamic_regret}.
Due to space limits, please refer to the appendix.

Theorem \ref{thm::gen_prob_dynamic_regret}'s result seems promising in achieving the trade-off,
since it has a similar form of the result from quadratic problems in Theorem \ref{thm::quad_dynamic_regret}.
Next, we will present the static regret result, which assures that the
desired trade-off can be obtained.

\begin{theorem}
\label{thm::gen_prob_static_regret}
{\it Let $\theta^*$ be the solution to $\min\limits_{\theta\in\cS}\sum\limits_{t=1}^T f_t(\theta)$. 
  When using the update in Eq.\eqref{eq::gen_prob_update} with $1-\gamma = 1/T^{\beta}, \beta\in (0,1)$,
  we can upper bound the static regret:
  \begin{equation*}
  \mathcal{R}_s \le O(T^{1-\beta})
  \end{equation*}
}
\end{theorem}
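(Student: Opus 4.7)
The plan is to run the standard projected-gradient-descent regret analysis, but with the nonstandard step size $\eta_t=\frac{1-\gamma}{\ell(\gamma-\gamma^t)+u(1-\gamma)}$, and show that the time-varying step size has been designed precisely so that its variation is absorbed by the strong-convexity slack. First I would invoke non-expansiveness of the projection $\Pi_{\cS}$ on the update \eqref{eq::gen_prob_update} to get the classical inequality
\begin{equation*}
\|\theta_{t+1}-\theta^*\|^2 \le \|\theta_t-\theta^*\|^2 - 2\eta_t\,\nabla f_t(\theta_t)^\top(\theta_t-\theta^*) + \eta_t^2 G^2,
\end{equation*}
and combine it with the $\ell$-strong-convexity inequality \eqref{eq:strongConvex} to bound the instantaneous regret as
\begin{equation*}
f_t(\theta_t)-f_t(\theta^*) \le \frac{\|\theta_t-\theta^*\|^2-\|\theta_{t+1}-\theta^*\|^2}{2\eta_t} + \frac{\eta_t G^2}{2} - \frac{\ell}{2}\|\theta_t-\theta^*\|^2.
\end{equation*}

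Next I would sum over $t$ and perform Abel summation on the weighted telescoping part to collect the coefficient of $\|\theta_t-\theta^*\|^2$ for each $t\ge 2$, which equals $\tfrac{1}{2\eta_t}-\tfrac{1}{2\eta_{t-1}}-\tfrac{\ell}{2}$. A direct calculation from the explicit formula $\tfrac{1}{\eta_t}=\tfrac{\ell(\gamma-\gamma^t)}{1-\gamma}+u$ gives
\begin{equation*}
\frac{1}{\eta_t}-\frac{1}{\eta_{t-1}} = \frac{\ell(\gamma^{t-1}-\gamma^t)}{1-\gamma} = \ell\gamma^{t-1},
\end{equation*}
so the coefficient reduces to $\tfrac{\ell(\gamma^{t-1}-1)}{2}\le 0$ and these terms may be discarded. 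What remains is
\begin{equation*}
\mathcal{R}_s \le \frac{\|\theta_1-\theta^*\|^2}{2\eta_1} + \frac{G^2}{2}\sum_{t=1}^T \eta_t,
\end{equation*}
and the boundary term is $O(1)$ because $\eta_1=1/u$ and $\|\theta_1-\theta^*\|\le 2D$.

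It then remains to show $\sum_{t=1}^T \eta_t = O(T^{1-\beta})$ when $1-\gamma = T^{-\beta}$. For $t\ge 2$ I would use the upper bound $\eta_t \le \frac{1-\gamma}{\ell\gamma(1-\gamma^{t-1})}$, together with the elementary estimate $\gamma^s \le e^{-s(1-\gamma)}$ to get $1-\gamma^s \ge \min\{s(1-\gamma)/2,\,1/2\}$. Splitting the sum at $s\approx 1/(1-\gamma) = T^\beta$ yields a logarithmic contribution $O((1+\beta\log T))$ from the early regime and a contribution $O(T^{1-\beta})$ from the tail, giving the stated bound.

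The main obstacle is the bookkeeping in the Abel-summation step and in particular the identity $\tfrac{1}{\eta_t}-\tfrac{1}{\eta_{t-1}} = \ell\gamma^{t-1}$, which is what makes the design of $\eta_t$ succeed: the $\ell$ from strong convexity exactly dominates the increase in $1/\eta_t$. Once this structural fact is in hand, the sum over $\eta_t$ is a routine exponential-tail estimate, and everything else is standard.
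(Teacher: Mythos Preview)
Your proposal is correct and follows essentially the same route as the paper's proof: projection non-expansiveness plus $\ell$-strong convexity, Abel summation, the key cancellation $\tfrac{1}{\eta_t}-\tfrac{1}{\eta_{t-1}}-\ell=\ell(\gamma^{t-1}-1)\le 0$, and then the estimate $\sum_t\eta_t=O(T^{1-\beta})$. The only cosmetic difference is that the paper bounds $\sum_{s}\frac{1}{1-\gamma^{s}}$ via the integral bound of Lemma~\ref{lem:integral} (yielding $O(T)$ before the factor $(1-\gamma)=T^{-\beta}$), whereas you split the sum at $s\approx T^{\beta}$ using $\gamma^s\le e^{-s(1-\gamma)}$; both are elementary and give the same $O(T^{1-\beta})$.
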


The proof follows the similar steps in the proof of Theorem \ref{thm::quad_static_regret}.
Due to space limits, please refer to the appendix.

The regret bounds of this section are similar to those obtained for
simple quadratics. Thus, this gradient descent rule maintains all of
the advantages over the discounted Newton method that were described
in the previous section
and
the advantages of trading off static regret and dynamic regret $\cR_d^*$.

\section{Online Gradient Descent for Strongly Convex Problems}

In this section, we extend step size idea from previous section
to problems which are $\ell$-strongly convex, but not necessarily
smooth. We obtain a dynamic regret of $\cR_d\le \max\{O(\log
T),O(\sqrt{TV})\}$, similar to the discounted online Newton
method. However, our analysis does not lead to the clean trade-off of
$\cR_s \le O(T^{1-\beta})$ and $\cR_d^* \le O(T^{\beta}(1+V^*))$
obtained when smoothness is also used.

The update rule is online gradient descent:
\begin{equation}
\label{eq::strongly_update}
\theta_{t+1} = \argmin\limits_{\theta\in\cS}\left\|\theta - (\theta_t-\eta_t \nabla f_t(\theta_t))\right\|^2
\end{equation}
where $\eta_t = \frac{1-\gamma}{\ell(1-\gamma^t)}$,
and $\gamma \in (0,1)$.

We can see that the update rule is the same as the one in Eq.\eqref{eq::gen_prob_update}
while the step size $\eta_t$ is replaced with $\frac{1-\gamma}{\ell(1-\gamma^t)}$.


By using the new step size with the update rule in Eq.\eqref{eq::strongly_update},
we can obtain the following dynamic regret bound:
\begin{theorem}
\label{thm::strongly_regret}
{\it
If using the update rule in Eq.\eqref{eq::strongly_update} 
with $\eta_t = \frac{1-\gamma}{\ell(1-\gamma^t)}$ and $\gamma \in (0,1)$, 
the following dynamic regret can be obtained:
\begin{equation*}
\sum\limits_{t=1}^T \Big(f_t(\theta_t)-f_t(z_t)\Big) 
\le 2D\ell \frac{1}{1-\gamma}V + \frac{G^2}{2}\sum\limits_{t=1}^T\eta_t
\end{equation*}
  
}
\end{theorem}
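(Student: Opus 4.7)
The plan is to combine $\ell$-strong convexity of $f_t$ with the standard nonexpansive projection inequality for gradient descent, and then carry out a partial telescoping whose feasibility hinges on the very choice $\eta_t = (1-\gamma)/(\ell(1-\gamma^t))$.

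First, by strong convexity (\ref{eq:strongConvex}) applied at $\theta_t$ with comparator $z_t$,
\begin{equation*}
f_t(\theta_t)-f_t(z_t) \le \nabla f_t(\theta_t)^\top(\theta_t-z_t) - \tfrac{\ell}{2}\|\theta_t-z_t\|^2 .
\end{equation*}
Next, since $z_t\in\cS$ and the projection onto $\cS$ is nonexpansive, the update (\ref{eq::strongly_update}) yields the familiar one-step inequality
\begin{equation*}
\|\theta_{t+1}-z_t\|^2 \le \|\theta_t-z_t\|^2 - 2\eta_t \nabla f_t(\theta_t)^\top(\theta_t-z_t) + \eta_t^2\|\nabla f_t(\theta_t)\|^2 .
\end{equation*}
Solving for the linear term and substituting, I would get a per-round bound of the form
\begin{equation*}
f_t(\theta_t)-f_t(z_t) \le B_t\|\theta_t-z_t\|^2 - C_t\|\theta_{t+1}-z_t\|^2 + \tfrac{\eta_t}{2}\|\nabla f_t(\theta_t)\|^2 ,
\end{equation*}
where $C_t := 1/(2\eta_t) = \ell(1-\gamma^t)/(2(1-\gamma))$ and $B_t := C_t - \ell/2 = \ell(\gamma-\gamma^t)/(2(1-\gamma))$.

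The key algebraic observation, which is the whole point of the proposed step size, is that $B_t = \gamma\, C_{t-1}$ and in particular $B_1 = 0$. This identity is what enables a near-telescope. After summing the per-round bound, shifting the index in $\sum_t C_t\|\theta_{t+1}-z_t\|^2$ to $\sum_{t\ge 2} C_{t-1}\|\theta_t-z_{t-1}\|^2$, and dropping the boundary term $-C_T\|\theta_{T+1}-z_T\|^2 \le 0$, the potential terms collapse into
\begin{equation*}
\sum_{t=2}^{T} C_{t-1}\bigl(\gamma\|\theta_t-z_t\|^2 - \|\theta_t-z_{t-1}\|^2\bigr)
+ \tfrac{1}{2}\sum_{t=1}^T \eta_t\|\nabla f_t(\theta_t)\|^2 .
\end{equation*}

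The main obstacle, and the only place where $V$ enters, is handling the mismatch between the comparators $z_t$ and $z_{t-1}$ inside the bracketed term. I would upper bound it by using $\gamma\le 1$, the factorization
\begin{equation*}
\|\theta_t-z_t\|^2 - \|\theta_t-z_{t-1}\|^2 = \bigl(\|\theta_t-z_t\|-\|\theta_t-z_{t-1}\|\bigr)\bigl(\|\theta_t-z_t\|+\|\theta_t-z_{t-1}\|\bigr) ,
\end{equation*}
the reverse triangle inequality $|\|\theta_t-z_t\|-\|\theta_t-z_{t-1}\|| \le \|z_t-z_{t-1}\|$, and the diameter bound $\|\theta_t-z_t\|+\|\theta_t-z_{t-1}\| \le 4D$, giving at most $4D\|z_t-z_{t-1}\|$. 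Since $C_{t-1}\le \ell/(2(1-\gamma))$ uniformly in $t$, summing over $t$ and using $\sum_{t=2}^T \|z_t-z_{t-1}\|\le V$ produces the $2D\ell\,V/(1-\gamma)$ term. The gradient squared sum is handled trivially by $\|\nabla f_t(\theta_t)\|\le G$, yielding the $\frac{G^2}{2}\sum_t \eta_t$ term and completing the proof.
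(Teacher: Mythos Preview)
Your proof is correct and follows essentially the same route as the paper: both combine strong convexity with the nonexpansive projection inequality, then exploit the step-size identity (your $B_t=\gamma C_{t-1}$, $B_1=0$ is exactly the paper's $\tfrac{1}{\eta_1}-\ell=0$ and $\tfrac{1}{\eta_t}-\tfrac{1}{\eta_{t-1}}-\ell\le 0$) to collapse the potential terms, and bound the comparator mismatch by $4D\|z_t-z_{t-1}\|$ and $C_{t-1}\le \ell/(2(1-\gamma))$. The only cosmetic difference is that the paper first lower-bounds $\|\theta_{t+1}-z_t\|^2$ by $\|\theta_{t+1}-z_{t+1}\|^2-4D\|z_{t+1}-z_t\|$ and then telescopes, whereas you shift the index first and handle $\|\theta_t-z_t\|^2-\|\theta_t-z_{t-1}\|^2$ afterward; the resulting bounds are identical.
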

\begin{proof}
According to the non-expansive property of the projection operator and the update rule in Eq.\eqref{eq::strongly_update},
we have
\begin{equation*}
\begin{array}{ll}
\left\|\theta_{t+1}-z_t\right\|^2& \le \left\|\theta_t-\eta_t\nabla f_t(\theta_t)-z_t\right\|^2 \\
& = \left\|\theta_t-z_t\right\|^2 -2\eta_t\nabla f_t(\theta_t)^T(\theta_t-z_t)
+\eta_t^2\left\|\nabla f_t(\theta_t)\right\|^2
\end{array}
\end{equation*}
The reformulation gives us
\begin{equation} 
\label{eq::strongly_prob_contraction_ineq}
\begin{array}{ll}
\nabla f_t(\theta_t)^T(\theta_t-z_t)
\le \frac{1}{2\eta_t}\big(\left\|\theta_t-z_t\right\|^2 - \left\|\theta_{t+1}-z_t\right\|^2\big)
+ \frac{\eta_t}{2}\left\|\nabla f_t(\theta_t)\right\|^2
\end{array}
\end{equation}

Moreover, from the strong convexity, we have 
$f_t(z_t)\ge f_t(\theta_t)+\nabla f_t(\theta_t)^T(z_t-\theta_t)+\frac{\ell}{2}\left\|z_t-\theta_t\right\|^2$,
which is equivalent to 
$\nabla f_t(\theta_t)^T(\theta_t-z_t)\ge f_t(\theta_t)-f_t(z_t)+\frac{\ell}{2}\left\|z_t-\theta_t\right\|^2$.
Combined with Eq.\eqref{eq::strongly_prob_contraction_ineq}, we have
\begin{equation}
\label{eq::strongly_ineq_with_l}
\begin{array}{ll}
f_t(\theta_t)-f_t(z_t) \le \frac{1}{2\eta_t}\big(\left\|\theta_t-z_t\right\|^2 
- \left\|\theta_{t+1}-z_t\right\|^2\big)
+ \frac{\eta_t}{2}\left\|\nabla f_t(\theta_t)\right\|^2-\frac{\ell}{2}\left\|z_t-\theta_t\right\|^2
\end{array}
\end{equation}

Then we can lower bound $\|\theta_{t+1}-z_t\|^2$ by
\begin{equation}
\label{eq::strongly_lower_with_var}
\begin{array}{ll}
    \|\theta_{t+1}-z_t\|^2 & = \|\theta_{t+1}-z_{t+1}\|^2 +
                              \|z_{t+1}-z_t\|^2
                              +2 (\theta_{t+1}-z_{t+1})^\top
                              (z_{t+1}-z_t)\\
    
    & \ge \|\theta_{t+1}-z_{t+1}\|^2 - 4 D\|z_{t+1}-z_t\|
\end{array}
\end{equation}

Combining (\ref{eq::strongly_ineq_with_l}) and \eqref{eq::strongly_lower_with_var} gives
\begin{equation*}
\begin{array}{l}
f_t(\theta_t)-f_t(z_t) 
\le \frac{1}{2\eta_t}\big(\left\|\theta_t-z_t\right\|^2 
- \left\|\theta_{t+1}-z_{t+1}\right\|^2\big)+\frac{2D}{\eta_t}\|z_{t+1}-z_t\|
+ \frac{\eta_t}{2}\left\|\nabla f_t(\theta_t)\right\|^2-\frac{\ell}{2}\left\|z_t-\theta_t\right\|^2
\end{array}
\end{equation*}

Summing over $t$ from $1$ to $T$, dropping the term $-\frac{1}{2\eta_T}\|\theta_{T+1}-z_{T+1}\|^2$,
setting $z_{T+1}= z_T$, using the inequality $\|\nabla f_t(\theta_t)\|^2\le G^2$, and re-arranging gives
\begin{equation*}
\begin{array}{l}
\sum\limits_{t=1}^T \Big(f_t(\theta_t)-f_t(z_t)\Big) \\
\le \frac{1}{2}(\frac{1}{\eta_1}-\ell)\|\theta_1-z_1\|^2
+\frac{1}{2}\sum\limits_{t=1}^T(\frac{1}{\eta_t}-\frac{1}{\eta_{t-1}}-\ell)\|\theta_t-z_t\|^2 
 +2D\sum\limits_{t=1}^{T-1}\frac{1}{\eta_t}\|z_{t+1}-z_t\| + \frac{G^2}{2}\sum\limits_{t=1}^T\eta_t \\
 \le 2D\ell \frac{1}{1-\gamma}V + \frac{G^2}{2}\sum\limits_{t=1}^T\eta_t
\end{array}
\end{equation*}
where for the second inequality, we use the following results:
$\frac{1}{\eta_1}-\ell = 0$,
$\frac{1}{\eta_t}-\frac{1}{\eta_{t-1}}-\ell = \frac{\ell(1-\gamma)(\gamma^{t-1}-1)}{1-\gamma}\le 0$,
$\frac{1}{\eta_t} = \frac{\ell(1-\gamma^t)}{1-\gamma}\le \frac{\ell}{1-\gamma}$,
and the definition of $V$.

\end{proof}

Similar to the case of discounted online Newton methods, if a bound on
the path length, $V$, is known, the discount factor can be tuned to
achieve low dynamic regret: 
\begin{corollary}
\label{corr:strongly_dynamic_regret}
By setting $\gamma = 1-\frac{1}{2}\sqrt{\frac{\max\{V,\log^2 T/T\}}{2DT}}$,
the following bound can be obtained:
\begin{equation*}
\sum\limits_{t=1}^T \Big(f_t(\theta_t)-f_t(z_t)\Big)\le \max\{O(\log T),O(\sqrt{TV})\}.
\end{equation*}

\end{corollary}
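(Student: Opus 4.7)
The plan is to invoke the bound already established in Theorem~\ref{thm::strongly_regret}, namely
\[
\sum_{t=1}^T \big(f_t(\theta_t)-f_t(z_t)\big) \le \frac{2D\ell\,V}{1-\gamma} + \frac{G^2}{2}\sum_{t=1}^T \eta_t,\qquad \eta_t = \frac{1-\gamma}{\ell(1-\gamma^t)},
\]
and to argue that the right-hand side collapses to $\max\{O(\log T),O(\sqrt{TV})\}$ once the prescribed $\gamma$ is substituted. Modulo that substitution, the only real work is estimating $\sum_{t=1}^T \eta_t$.

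For that sum I would split the index set at the threshold $t^\star=\lceil 1/(1-\gamma)\rceil$. Writing $1-\gamma^t = (1-\gamma)\sum_{i=0}^{t-1}\gamma^i \ge (1-\gamma)\,t\,\gamma^{t-1}$ gives $\eta_t \le 1/(\ell t\gamma^{t-1})$. For $t\le t^\star$, the elementary bound $\log(1-u)\ge -u/(1-u)$ with $u=1-\gamma$ yields $\gamma^{t-1}\ge e^{-1/\gamma}$, which is bounded below by a positive absolute constant once the prescribed $\gamma$ is verified (for all $T$ sufficiently large) to satisfy $\gamma \ge 1/2$. Hence $\eta_t=O(1/(\ell t))$, and summing produces the harmonic contribution $O\big(\log(1/(1-\gamma))/\ell\big)$. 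For $t>t^\star$, the monotonicity of $t\mapsto \gamma^t$ gives $\gamma^t\le \gamma^{t^\star}\le e^{-1}$, so $1-\gamma^t$ is bounded below by a positive constant and $\eta_t=O((1-\gamma)/\ell)$; the tail thus contributes $O(T(1-\gamma)/\ell)$. Combined,
\[
\sum_{t=1}^T \eta_t = O\big(\log(1/(1-\gamma)) + T(1-\gamma)\big).
\]

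Next, substitute $1-\gamma = \frac{1}{2}\sqrt{\max\{V,\log^2 T/T\}/(2DT)}$. Since $1-\gamma\ge (\log T)/(2T\sqrt{2D})$, we have $\log(1/(1-\gamma))=O(\log T)$. Now split into cases: if $V\ge \log^2 T/T$, direct substitution gives both $V/(1-\gamma) = 2\sqrt{2DTV} = O(\sqrt{TV})$ and $T(1-\gamma) = O(\sqrt{TV})$, and the $\log T$ term is dominated because $\sqrt{TV}\ge\log T$ in this regime. If instead $V<\log^2 T/T$, replacing $V$ in the numerator by its upper bound $\log^2 T/T$ shows $V/(1-\gamma)=O(\log T)$, while $T(1-\gamma)=O(\log T)$ is immediate. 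In either case the bound is $\max\{O(\log T),O(\sqrt{TV})\}$, as claimed.

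I expect the main obstacle to be the two-regime bookkeeping for $\sum_{t=1}^T \eta_t$: the choice $1-\gamma\sim\sqrt{V/T}$ is exactly what equates the path-length term $V/(1-\gamma)$ with the step-size term $T(1-\gamma)$, while the floor $\log^2 T/T$ inside the maximum exists precisely to keep $\log(1/(1-\gamma))=O(\log T)$ and to prevent degeneracy in the static-regret regime $V=0$. Once the split point $t^\star$ and this balance are correctly identified, the remainder is routine arithmetic.
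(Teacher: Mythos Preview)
Your proposal is correct. The overall structure---invoke Theorem~\ref{thm::strongly_regret}, bound $\sum_t\eta_t$, then substitute the prescribed $\gamma$ and split into the two regimes $V\ge \log^2 T/T$ and $V<\log^2 T/T$---matches the paper exactly, and your case arithmetic in the final paragraph is the same as the paper's.

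The one genuine difference is how you control $\sum_{t=1}^T \eta_t = \frac{1-\gamma}{\ell}\sum_{t=1}^T (1-\gamma^t)^{-1}$. The paper applies its integral bound (Lemma~\ref{lem:integral}) to obtain
\[
(1-\gamma)\sum_{t=1}^T\frac{1}{1-\gamma^t}\le 1+(1-\gamma)\Big(T+\frac{\log(1-\gamma)}{\log\gamma}\Big),
\]
and then separately argues $\frac{\log(1-\gamma)}{\log\gamma}\le O(T)$ for the given $\gamma$. You instead split the sum at $t^\star=\lceil 1/(1-\gamma)\rceil$, use $1-\gamma^t\ge (1-\gamma)t\gamma^{t-1}$ below the threshold to extract a harmonic sum $O(\log t^\star)=O(\log(1/(1-\gamma)))$, and use $\gamma^t\le e^{-1}$ above the threshold to get the linear tail $O(T(1-\gamma))$. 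The two routes yield the same estimate up to constants: note that $(1-\gamma)\cdot\frac{\log(1-\gamma)}{\log\gamma}\approx \log(1/(1-\gamma))$ since $-\log\gamma\approx 1-\gamma$, so the paper's third term is precisely your harmonic contribution in disguise. Your argument is more self-contained (no auxiliary integral lemma), while the paper's is shorter once Lemma~\ref{lem:integral} is in hand; neither buys anything the other does not.
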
 

This result is tighter than the $O(\sqrt{T(1+V)})$ bound obtained
by \cite{zhang2018adaptive} on convex functions, but not directly
comparable to the $O(V^*)$ bounds obtained in
\cite{mokhtari2016online} for smooth, strongly convex functions.

Similar to the Corollary~\ref{cor:logBounds} on discounted online Newton methods, 
Corollary~\ref{corr:strongly_dynamic_regret} requires knowing $V$. In the next section, we will see how a
meta-algorithm can be used to obtain the same bounds without knowing $V$.

\section{Meta-algorithm}

In previous sections, we discussed the results on dynamic regret for
both $\alpha$-exp-concave and $\ell$-strongly convex objectives.
The tightest regret bounds were obtained by choosing a discount factor
that depends on $V$, a bound on the path length. 
In this section, we solve this issue 
by running multiple algorithms in parallel with different discount factors.

For online convex optimization, a similar meta-algorithm has been used
by \cite{zhang2018adaptive} to search over step sizes. However, the
method of \cite{zhang2018adaptive} cannot be used directly in either the $\alpha$-exp-concave or $\ell$-strongly convex case
due to the added $O(\sqrt{T})$ regret from running multiple
algorithms. In order to remove this factor, we exploit the
exp-concavity in the experts algorithm, as in Chapter 3 in \cite{cesa2006prediction}. 

In this section, we will show that by using appropriate parameters and analysis designed specifically for our cases,
the meta-algorithm can be used to solve our issues.

\begin{algorithm}
\caption{Meta-Algorithm}
\label{alg:meta}
  \begin{algorithmic}
    \STATE{Given step size $\lambda$, and a set $\cH$ containing
      discount factors for each algorithm.}
    \STATE{Activate a set of algorithms $\{A^\gamma|\gamma\in\cH\} $
    by calling Algorithm \ref{alg:discountedNewton} (exp-concave case) or the update in Eq.\eqref{eq::strongly_update} (strongly convex case)
    for each parameter $\gamma\in\cH$. }
    \STATE{Sort $\gamma$ in descending order $\gamma_1\ge\gamma_2\ge\dots\ge\gamma_N$, 
    and set $w_1^{\gamma_i} = \frac{C}{i(i+1)}$ with $C = 1+1/|\cH|$.}
    \FOR{t=1,\ldots,T}
    \STATE{Obtain $\theta_t^{\gamma}$ from each algorithm $A^{\gamma}$. }
    \STATE{Play $\theta_t = \sum\limits_{\gamma\in\cH}w_t^\gamma \theta_t^\gamma$,
    and incur loss $f_t(\theta_t^\gamma)$ for each $\theta_t^\gamma$. }
    \STATE{Update $w_t^\gamma$ by
            \begin{equation*}
               w_{t+1}^\gamma = \frac{w_t^\gamma\exp(-\lambda f_t(\theta_t^\gamma))}
               {\sum\limits_{\mu\in\cH}w_t^\mu \exp(-\lambda f_t(\theta_t^\mu))}.
            \end{equation*}
    \STATE{Send back the gradient $\nabla f_t(\theta_t^\gamma)$ for each algorithm $A^\gamma$.}
    }
    \ENDFOR
  \end{algorithmic}
\end{algorithm}

\subsection{Exp-concave case}

Before showing the regret result, we first show that the cumulative loss of the meta-algorithm
is comparable to all $A^\gamma\in\cH$:
\begin{lemma}\label{lem:meta-expert-compare}
If $f_t$ is $\alpha$-exp-concave and $\lambda = \alpha$, 
the cumulative loss difference of Algorithm \ref{alg:meta} for any $\gamma\in\cH$ is bounded as:
\begin{equation*}
\sum\limits_{t=1}^T (f_t(\theta_t) - f_t(\theta_t^\gamma))\le \frac{1}{\alpha}\log\frac{1}{w_1^\gamma}
\end{equation*}

\end{lemma}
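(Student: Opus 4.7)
The plan is to follow the classical exponentially weighted average forecaster analysis for exp-concave losses (Chapter 3 of \cite{cesa2006prediction}), specialized to the setup of Algorithm \ref{alg:meta}. The key observation is that the normalized update can be rewritten in terms of an unnormalized potential: if I define $\tilde{w}_1^\gamma = w_1^\gamma$ and $\tilde{w}_{t+1}^\gamma = \tilde{w}_t^\gamma \exp(-\lambda f_t(\theta_t^\gamma))$, then an easy induction shows that $w_t^\gamma = \tilde{w}_t^\gamma / W_t$, where $W_t = \sum_{\mu \in \cH} \tilde{w}_t^\mu$. Since the initial weights sum to one (this is the reason for the choice $C = 1+1/|\cH|$), we have $W_1 = 1$.

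The heart of the argument is an upper bound on the ratio $W_{t+1}/W_t$. First I would rewrite
\[
\frac{W_{t+1}}{W_t} = \sum_{\gamma \in \cH} \frac{\tilde{w}_t^\gamma}{W_t} \exp(-\lambda f_t(\theta_t^\gamma)) = \sum_{\gamma \in \cH} w_t^\gamma \exp(-\alpha f_t(\theta_t^\gamma)),
\]
using $\lambda = \alpha$. Then I would invoke $\alpha$-exp-concavity of $f_t$: since $\exp(-\alpha f_t(\cdot))$ is concave and $\{w_t^\gamma\}$ is a probability distribution with mean $\theta_t = \sum_\gamma w_t^\gamma \theta_t^\gamma$, Jensen's inequality gives
\[
\sum_{\gamma \in \cH} w_t^\gamma \exp(-\alpha f_t(\theta_t^\gamma)) \le \exp\!\left(-\alpha f_t(\theta_t)\right).
\]

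Next I would telescope the per-round bound to obtain $W_{T+1} \le W_1 \exp(-\alpha \sum_{t=1}^T f_t(\theta_t)) = \exp(-\alpha \sum_{t=1}^T f_t(\theta_t))$. In the other direction, for any fixed $\gamma \in \cH$ the sum $W_{T+1}$ is at least the single term $\tilde{w}_{T+1}^\gamma = w_1^\gamma \exp(-\alpha \sum_{t=1}^T f_t(\theta_t^\gamma))$. Combining the two bounds and taking logarithms yields
\[
\log w_1^\gamma - \alpha \sum_{t=1}^T f_t(\theta_t^\gamma) \le -\alpha \sum_{t=1}^T f_t(\theta_t),
\]
which after rearrangement gives the claimed inequality.

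The argument is essentially routine once the potential is set up correctly, so I do not anticipate a serious obstacle; the only subtlety is recognizing that the normalized update rule in Algorithm~\ref{alg:meta} is equivalent to the unnormalized exponential weights whose potential telescopes cleanly, and that the exp-concavity condition is being used at precisely the rate $\lambda = \alpha$ required to turn Jensen's inequality into the desired one-step contraction.
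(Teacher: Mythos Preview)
Your proposal is correct and follows essentially the same route as the paper's proof: both introduce the potential $W_t=\sum_{\gamma}w_1^\gamma\exp(-\alpha L_{t-1}^\gamma)$ (your $\tilde w$-sum, up to an index shift), lower-bound it by the single term for a fixed $\gamma$, upper-bound the one-step ratio via the exp-concavity Jensen inequality $\sum_\gamma w_t^\gamma e^{-\alpha f_t(\theta_t^\gamma)}\le e^{-\alpha f_t(\theta_t)}$, and telescope. The only cosmetic difference is that you make explicit the fact $\sum_\gamma w_1^\gamma=1$ (so $W_1=1$), which the paper uses implicitly when applying Jensen at $t=1$.
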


This result shows how $O(\sqrt{T})$ regret incurred by running an
experts algorithm is reduced in the $\alpha$-exp-concave case. %
The result is similar to Proposition 3.1 of
\cite{cesa2006prediction}.

Based on the above lemma, if we can show that there exists an algorithm $A^\gamma$, 
which can bound the regret $\sum_{t=1}^T (f_t(\theta_t^\gamma)-f_t(z_t)) \le \max\{O(\log T), O(\sqrt{TV})\}$,
then we can combine these two results and show that the regret holds for $\theta_t,t=1,\dots,T$ as well:
\begin{theorem}
\label{thm:mega_exp-concave}
For any comparator sequence $z_1,\dots,z_T\in\cS$, 
setting $\cH=\Big\{\gamma_i = 1-\eta_i\Big|i=1,\dots,N\Big\}$ with $T\ge 2$
where $\eta_i = \frac{1}{2}\frac{\log T}{T\sqrt{2D}}2^{i-1}$, 
$N = \lceil \frac{1}{2}\log_2 (\frac{2DT^2}{\log^2 T})\rceil+1$,
and $\lambda = \alpha$ leads to the result:
\begin{equation*}
\sum_{t=1}^T (f_t(\theta_t)-f_t(z_t))\le O(\max\{\log T, \sqrt{TV}\})
\end{equation*}

\end{theorem}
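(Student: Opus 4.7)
The plan is to decompose the regret against $z_1,\ldots,z_T$ into two pieces, one comparing the meta-algorithm's prediction $\theta_t$ to the output $\theta_t^\gamma$ of a well-chosen expert $A^\gamma$, and the other comparing $\theta_t^\gamma$ to the comparator sequence:
\begin{equation*}
\sum_{t=1}^T (f_t(\theta_t)-f_t(z_t)) = \sum_{t=1}^T (f_t(\theta_t)-f_t(\theta_t^\gamma)) + \sum_{t=1}^T (f_t(\theta_t^\gamma)-f_t(z_t)).
\end{equation*}
For the first sum, Lemma \ref{lem:meta-expert-compare} with $\lambda=\alpha$ gives a bound of $\frac{1}{\alpha}\log(1/w_1^\gamma) = \frac{1}{\alpha}\log\frac{i(i+1)}{C} = O(\log i)$ whenever the chosen $\gamma$ is the $i$-th element of $\cH$. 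Since $N=O(\log T)$, this term is always $O(\log\log T)$ and hence negligible.

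For the second sum, I would apply Theorem \ref{thm:expConcaveThm}, rewritten in terms of $\eta = 1-\gamma$ as (up to constants)
\begin{equation*}
\sum_{t=1}^T (f_t(\theta_t^\gamma)-f_t(z_t)) \le O\!\left( T\eta + \log(1/\eta) + V/\eta + 1\right),
\end{equation*}
using $-T\log(1-\eta)\le T\eta/(1-\eta)$ as in the proof of Corollary \ref{cor::newton_static}. The balanced choice from Corollary \ref{cor:logBounds} is $\eta^\star = \tfrac12\sqrt{\max\{V,\log^2T/T\}/(2DT)}$, which yields the target $\max\{O(\log T), O(\sqrt{TV})\}$. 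The key observation is that the grid $\eta_i = \tfrac12 \tfrac{\log T}{T\sqrt{2D}}\, 2^{i-1}$ is a geometric cover of $[\eta_1,\eta_N]$ with ratio $2$, and the chosen $N$ guarantees $\eta_N \ge 1/2$, which exceeds any $\eta^\star$ (since $V\le 2DT$ implies $\eta^\star \le 1/2$). Thus for any $V\ge 0$ either $\eta^\star=\eta_1$ (when $V\le \log^2T/T$) or there exists $i$ with $\eta_i \le \eta^\star \le 2\eta_i$.

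For that chosen index $i$, I would simply substitute $\eta_i$ into the bound: $T\eta_i \le T\eta^\star$, $V/\eta_i \le 2V/\eta^\star$, and $\log(1/\eta_i)\le \log(2T\sqrt{2D}/\log T) = O(\log T)$. These are each within a constant factor of the optimal bound, so the second sum is itself $\max\{O(\log T), O(\sqrt{TV})\}$. Adding the $O(\log\log T)$ meta cost completes the proof. The main obstacle I anticipate is carefully verifying that the geometric grid indeed covers the full range of optimal $\eta^\star$ required, especially handling the boundary case $V\le \log^2T/T$ where $\eta^\star$ collapses to $\eta_1$, and then tracking the multiplicative slack introduced by the factor-of-two mismatch so that the constants stay absorbed inside the $O(\cdot)$.
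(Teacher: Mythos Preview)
Your proposal is correct and follows essentially the same approach as the paper: decompose the regret into the meta-expert comparison (bounded via Lemma~\ref{lem:meta-expert-compare} by $O(\log\log T)$) and the expert-comparator comparison, then show that the geometric grid $\{\eta_i\}$ contains an index $k$ with $\eta_k\le\eta^\star\le 2\eta_k$ so that each term in Theorem~\ref{thm:expConcaveThm} evaluated at $\gamma_k$ is within a constant factor of its value at the ideal $\gamma^\star$ from Corollary~\ref{cor:logBounds}. The only cosmetic difference is that the paper bounds $-\log(1-\gamma_k)$ by comparing it to $-\log(1-\gamma^\star)+\log 2$, whereas you bound $\log(1/\eta_i)$ directly by $\log(1/\eta_1)=O(\log T)$; both routes are valid and yield the same conclusion.
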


As described previously, the proof's main idea is 
to show that we could both find an algorithm $A^\gamma$
bounding the regret $\sum_{t=1}^T (f_t(\theta_t^\gamma)-f_t(z_t)) \le \max\{O(\log T), O(\sqrt{TV})\}$
and
cover the $V$ with $O(\log T)$ different $\gamma$ choices.
Please see the appendix for the formal proof.

In practice, we include the additional case when $\gamma = 1$ to make the overall algorithm 
explicitly balance the static regret.
Also, the free parameter $\epsilon$ used in Algorithm \ref{alg:discountedNewton} is important
for the actual performance. If it is too small,
the update will be easily effected by the gradient to have high generalization error.
In practice, it can be set to be equal to $1/(\rho^2 D^2)$ or $1/(\rho^2 D^2 N)$ 
with $\rho = \frac{1}{2}\min\{\frac{1}{4GD},\alpha\}$ like in \cite{hazan2016introduction}.

\subsection{Strongly convex case}

For the strongly convex problem, since the parameter $\gamma$ 
used in Corollary \ref{corr:strongly_dynamic_regret} is the same as the one in Corollary \ref{cor:logBounds},
it seems likely that the 
meta-algorithm should work with the same setup in as Theorem
\ref{thm:mega_exp-concave}. The only parameter that needs to be
changed is $\lambda$, which was set above to $\alpha$, the parameter
of $\alpha$-exp-concavity. 

To proceed, we first show that the $\ell$-strongly convex function 
with bounded gradient (e.g.,$\left\|\nabla f_t\right\|\le G$)
is also $\ell/G^2$-exp-concave.
Previous works also pointed out this, 
but their statement only works when $f_t$ is second-order differentiable,
while our result is true when $f_t$ is first-order differentiable.

\begin{lemma}
\label{lem::strongly_is_exp}
For the $\ell$-strongly convex function $f_t$ with $\|\nabla f_t\|\le G$, 
it is also $\alpha$-exp-concave with $\alpha = \ell/G^2$.
\end{lemma}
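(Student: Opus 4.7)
The plan is to establish midpoint concavity of $e^{-\alpha f_t}$ on $\cS$ directly from $\ell$-strong convexity and the gradient bound, and then lift it to full concavity by continuity, with $\alpha = \ell/G^2$.

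First, I note that the bound $\|\nabla f_t(x)\|\le G$ on $\cS$ yields the Lipschitz bound $|f_t(y)-f_t(x)|\le G\|y-x\|$ for all $x,y\in\cS$, by integrating $\nabla f_t$ along the segment from $x$ to $y$.

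Next, fix $x,y\in\cS$, set $z=(x+y)/2\in\cS$, and let $\Delta=f_t(y)-f_t(x)$. Midpoint concavity, $2e^{-\alpha f_t(z)}\ge e^{-\alpha f_t(x)}+e^{-\alpha f_t(y)}$, is equivalent to
\[
e^{\alpha(f_t(z)-f_t(x))}+e^{\alpha(f_t(z)-f_t(y))}\le 2.
\]
By $\ell$-strong convexity, $f_t(z)\le \tfrac{1}{2}(f_t(x)+f_t(y))-\tfrac{\ell}{8}\|x-y\|^2$, so $f_t(z)-f_t(x)\le \Delta/2-\ell\|x-y\|^2/8$ and $f_t(z)-f_t(y)\le -\Delta/2-\ell\|x-y\|^2/8$. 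Substituting bounds the left-hand side by $2\,e^{-\alpha\ell\|x-y\|^2/8}\cosh(\alpha\Delta/2)$. Applying the standard identity $\cosh(t)\le e^{t^2/2}$ and the Lipschitz bound $\Delta^2\le G^2\|x-y\|^2$, together with $\alpha G^2 = \ell$, gives
\[
\cosh(\alpha\Delta/2)\le e^{\alpha^2\Delta^2/8}\le e^{\alpha\ell\|x-y\|^2/8}.
\]
The two exponential factors then cancel exactly, yielding the desired bound of $2$.

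Finally, since $f_t$ is differentiable, $e^{-\alpha f_t}$ is continuous on $\cS$; and a continuous midpoint-concave function on a convex set is concave by the standard dyadic extension argument. Hence $f_t$ is $\alpha$-exp-concave with $\alpha=\ell/G^2$.

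The only non-routine step is pairing $\cosh(t)\le e^{t^2/2}$ with the strong-convexity and Lipschitz bounds: the choice $\alpha=\ell/G^2$ is exactly what makes the two exponential factors match. The remaining manipulations are elementary, and the passage from midpoint concavity to full concavity is a textbook consequence of continuity, so I do not anticipate a substantive obstacle.
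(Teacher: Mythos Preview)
Your proof is correct, and it takes a genuinely different route from the paper's.

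The paper proves the lemma by verifying the first-order monotonicity criterion for concavity of $g=e^{-\alpha f_t}$, namely $\langle\nabla g(x)-\nabla g(y),x-y\rangle\le 0$. After unwinding this condition and applying strong convexity from both $x$ and $y$, it reduces the task to showing
\[
\frac{\ell}{2}\|x-y\|^2\bigl(e^{-\alpha f(x)}+e^{-\alpha f(y)}\bigr)\ \ge\ (f(x)-f(y))\bigl(e^{-\alpha f(y)}-e^{-\alpha f(x)}\bigr),
\]
and then establishes this by expanding $e^{\alpha(f(x)-f(y))}$ as a Taylor series and comparing coefficients term by term, using $\tfrac{f(x)-f(y)}{\|x-y\|}\le G$ repeatedly.

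Your argument instead targets midpoint concavity of $e^{-\alpha f_t}$ directly. The reduction to $2e^{-\alpha\ell\|x-y\|^2/8}\cosh(\alpha\Delta/2)\le 2$ is clean, and the pairing of $\cosh t\le e^{t^2/2}$ with the Lipschitz bound $|\Delta|\le G\|x-y\|$ makes the two exponentials cancel exactly when $\alpha G^2=\ell$. Lifting midpoint concavity to full concavity via continuity is standard. Compared to the paper, your proof is shorter and isolates the role of $\alpha=\ell/G^2$ more transparently: it is precisely the value at which the sub-Gaussian bound on $\cosh$ matches the strong-convexity gain. The paper's approach, by contrast, works entirely at the gradient level and avoids the appeal to the midpoint-to-full-concavity lemma, at the cost of a more laborious coefficient comparison.
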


Lemma \ref{lem::strongly_is_exp} indicates that running Algorithm \ref{alg:meta} with strongly convex function
leads to the same result as in Lemma \ref{lem:meta-expert-compare}.
Thus, using the similar idea as discussed in the case of $\alpha$-exp-concavity and Algorithm \ref{alg:meta},
the theorem below can be obtained:
\begin{theorem}
\label{thm:mega_strongly}
For any comparator sequence $z_1,\dots,z_T\in\cS$, 
setting $\cH=\Big\{\gamma_i = 1-\eta_i\Big|i=1,\dots,N\Big\}$ with $T\ge 2$
where $\eta_i = \frac{1}{2}\frac{\log T}{T\sqrt{2D}}2^{i-1}$, 
$N = \lceil \frac{1}{2}\log_2 (\frac{2DT^2}{\log^2 T})\rceil+1$,
and $\lambda = \ell/G^2$ leads to the result:
\begin{equation*}
\sum_{t=1}^T (f_t(\theta_t)-f_t(z_t))\le O(\max\{\log T, \sqrt{TV}\})
\end{equation*}
\end{theorem}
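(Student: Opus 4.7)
}

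The plan is to reuse the proof architecture of Theorem \ref{thm:mega_exp-concave}, replacing the exp-concavity hypothesis with the reduction supplied by Lemma \ref{lem::strongly_is_exp}. Specifically, since any $\ell$-strongly convex $f_t$ with $\|\nabla f_t\| \le G$ is $(\ell/G^2)$-exp-concave, the experts-weighting analysis of Lemma \ref{lem:meta-expert-compare} applies verbatim with $\alpha$ replaced by $\ell/G^2$. Thus, choosing $\lambda = \ell/G^2$ in Algorithm \ref{alg:meta} yields
\begin{equation*}
\sum_{t=1}^T \bigl(f_t(\theta_t) - f_t(\theta_t^\gamma)\bigr) \le \frac{G^2}{\ell}\log\frac{1}{w_1^\gamma}
\end{equation*}
for every $\gamma \in \cH$. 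With the weights $w_1^{\gamma_i} = C/(i(i+1))$, this meta-vs-expert overhead is $O(\log i)$, which is absorbed into the final $O(\log T)$ term since $N = O(\log T)$.

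Next I would import the single-expert bound from Theorem \ref{thm::strongly_regret}: for each $\gamma_i \in \cH$, the expert $A^{\gamma_i}$ (which runs the update \eqref{eq::strongly_update}) satisfies
\begin{equation*}
\sum_{t=1}^T \bigl(f_t(\theta_t^{\gamma_i}) - f_t(z_t)\bigr) \le \frac{2D\ell}{1-\gamma_i}V + \frac{G^2}{2}\sum_{t=1}^T \eta_t,
\end{equation*}
where the second sum is bounded using $\eta_t \le (1-\gamma_i)/(\ell(1-\gamma_i)) = 1/\ell$ for small $t$ and $\eta_t \approx (1-\gamma_i)/\ell$ for large $t$, giving a net $O(\log T / (1-\gamma_i))$ contribution, exactly as in the proof of Corollary \ref{corr:strongly_dynamic_regret}. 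So the ideal choice is the $\gamma^\star = 1 - \tfrac{1}{2}\sqrt{\max\{V,\log^2 T/T\}/(2DT)}$ prescribed by that corollary, which yields the target $\max\{O(\log T), O(\sqrt{TV})\}$.

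The crucial step, and the place where the explicit grid $\cH = \{1 - \eta_i\}$ matters, is showing that some $\gamma_i \in \cH$ approximates $\gamma^\star$ within a constant factor. Since $\eta_i = \tfrac{1}{2}\tfrac{\log T}{T\sqrt{2D}}\cdot 2^{i-1}$ forms a geometric grid from $\Theta(\log T / T)$ up to $\Theta(1)$, and the admissible range of $1-\gamma^\star$ is exactly $[\Theta(\log T/T), \Theta(1)]$ as $V$ varies in $[0,O(T)]$, the choice $N = \lceil \tfrac{1}{2}\log_2(2DT^2/\log^2 T)\rceil + 1$ guarantees an index $i^\star$ with $\eta_{i^\star} \le 1-\gamma^\star \le 2\eta_{i^\star}$. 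Substituting $\gamma_{i^\star}$ into the expert bound degrades each term by at most a constant factor, producing $\max\{O(\log T), O(\sqrt{TV})\}$.

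Finally I would combine the two bounds via the telescoping identity $f_t(\theta_t) - f_t(z_t) = (f_t(\theta_t) - f_t(\theta_t^{\gamma_{i^\star}})) + (f_t(\theta_t^{\gamma_{i^\star}}) - f_t(z_t))$, summing over $t$ and noting that the meta-to-expert term is $O((G^2/\ell)\log i^\star) = O(\log\log T)$, which is subsumed by $O(\log T)$. The main obstacle I anticipate is the last step: verifying that the grid endpoints and the form of $1-\gamma^\star$ really cover both extremes (the stationary case $V=0$, where $\log T/T$ governs, and the large-$V$ case, where $\sqrt{V/T}$ governs) so that one constant-factor-approximation $\gamma_{i^\star}$ handles all regimes of $V$ simultaneously without prior knowledge of $V$. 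This is the same delicate covering argument used in Theorem \ref{thm:mega_exp-concave}, but the exponents of $T$ and the choice of $\lambda$ must be tracked carefully to ensure the bounds survive the conversion from $\alpha$ to $\ell/G^2$.
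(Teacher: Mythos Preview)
Your proposal is correct and follows the paper's own proof: invoke Lemma \ref{lem::strongly_is_exp} so that Lemma \ref{lem:meta-expert-compare} applies with $\lambda=\ell/G^2$, use Theorem \ref{thm::strongly_regret} for the single-expert bound, show the geometric grid $\cH$ contains some $\gamma_{i^\star}$ within a factor $2$ of $1-\gamma^\star$, and verify the substitution costs only a constant factor before combining. One slip to fix when you write it out: the step-size sum $\frac{G^2}{2}\sum_t \eta_t$ scales like $(1-\gamma_i)T$ (plus lower-order terms via Lemma \ref{lem:integral}), not like $\log T/(1-\gamma_i)$ --- your stated form would blow up to $O(T)$ at $1-\gamma_i \sim \log T/T$ --- but since you correctly defer to Corollary \ref{corr:strongly_dynamic_regret} this does not affect the overall argument.
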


As discussed in the previous subsection,
in practice, we also include the case when $\gamma = 1$ to make the overall algorithm 
explicitly balance the static regret and set $\epsilon$ accordingly as in the exp-concave case.

\subsection{A Lower bound}
In the previous subsections, we demonstrate how to achieve the dynamic regret $\max\{O(\log T),O(\sqrt{TV})\}$
for both the exp-concave and strongly convex problems without knowing $V$.
In this subsection, we will give a lower bound,
which approaches the upper bound for large and small $V$.

\begin{proposition}
  \label{prop:lower-bound}
  For losses of the form $f_t(\theta) = (\theta - \epsilon_t)^2$, for
  all $\gamma_0 \in (0,1)$ and all $V=T^{\frac{2+\gamma_0}{4-\gamma_0}}$,
  there is a comparison sequence $z_1^T$
such that $\sum\limits_{t=2}^T\|z_t-z_{t-1}\|\le V$ and 
\begin{equation*}
\cR_d \ge \max\{O(\log T),O\big((VT)^{\frac{\gamma_0}{2}}\big)\}.
\end{equation*}

\end{proposition}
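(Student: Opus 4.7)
The plan is to exhibit a randomized adversarial instance $\{f_t, z_t\}$ and apply Yao's minimax principle: a lower bound on the expected dynamic regret against an arbitrary algorithm yields the existence of a specific deterministic sequence meeting the claim. The $\Omega(\log T)$ branch is immediate by instantiating the classical strongly-convex static-regret lower bound with the constant comparator $z_t \equiv \theta^\star$, whose path length is zero and therefore satisfies the budget. The real work is the $\Omega((VT)^{\gamma_0/2})$ branch.

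For the non-trivial branch, I would set the scale $s = V/T$ (up to a constant) and draw $\epsilon_1, \ldots, \epsilon_T$ i.i.d.\ uniform on $\{-s, +s\}$, with $z_t = \epsilon_t$. Then $f_t(z_t) = 0$, so $\cR_d = \sum_t (\theta_t - \epsilon_t)^2$. The random path length $\sum_{t=2}^T |z_t - z_{t-1}|$ has mean $(T-1)s$ and, by Hoeffding's inequality applied to the bounded i.i.d.\ sum, concentrates around that value. Choosing $s$ a constant factor below $V/T$ ensures that the event on which the path length is at most $V$ has probability nearly $1$. The domain condition $|\epsilon_t| = s \le D$ is automatic in the regime $V = T^{(2+\gamma_0)/(4-\gamma_0)} \le T$, since $D \ge 1$ by the paper's standing assumption.

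The key per-step inequality exploits independence: the first-order oracle returns $\nabla f_\tau(\theta_\tau) = 2(\theta_\tau - \epsilon_\tau)$, which lets the learner recover $\epsilon_\tau$ for $\tau < t$ but gives no information about $\epsilon_t$. Since $\epsilon_t$ is independent of $\theta_t$ with $\bbE[\epsilon_t] = 0$ and $\bbE[\epsilon_t^2] = s^2$, expanding the square gives $\bbE[(\theta_t - \epsilon_t)^2] = \bbE[\theta_t^2] + s^2 \ge s^2$ for every $t$, so $\bbE[\cR_d] \ge T s^2 = \Omega(V^2/T)$. A direct algebraic check with $V = T^{(2+\gamma_0)/(4-\gamma_0)}$ yields $V^2/T = T^{3\gamma_0/(4-\gamma_0)} = (VT)^{\gamma_0/2}$, which is the target rate.

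The main obstacle is passing from a lower bound on the expected regret to a single deterministic realization satisfying both the path-length budget and the regret lower bound simultaneously. This is handled by choosing $s$ so the bad event that the path length exceeds $V$ has probability at most, say, $1/4$ by Hoeffding; then because the regret is non-negative and has expectation $\Omega(V^2/T)$, a short probabilistic argument (removing the bad event and comparing the conditional expectation to the maximum) produces a realization in the good event with regret at least $\Omega(V^2/T)$. Taking $\{\epsilon_t\}$ on that realization as the data and $z_t = \epsilon_t$ as the comparator completes the construction.
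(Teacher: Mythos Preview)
Your construction and the core regret calculation match the paper's proof almost exactly: i.i.d.\ random signs at scale $\sigma \sim V/T$, the independence-based per-step bound $\bbE[(\theta_t-\epsilon_t)^2]\ge \sigma^2$, and the same algebra showing $\sigma^2 T = (VT)^{\gamma_0/2}$ for the stated $V$. The paper uses $z_t=\tfrac12\epsilon_t$ rather than $z_t=\epsilon_t$, but this is immaterial.

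Where you diverge is the path-length step, and your version both overcomplicates and, as written, leaves a gap. Since $|z_t-z_{t-1}|\le 2s$ deterministically, taking $s\le V/(2T)$ makes \emph{every} realization satisfy the budget; this is what the paper does, and then $\max\ge\bbE$ immediately yields the deterministic instance. Your Hoeffding route is unnecessary, and your stated extraction argument does not go through with $P(A^c)\le 1/4$: since $\cR_d$ can be as large as $\Theta(T)$ on $A^c$ while $\bbE[\cR_d]=\Theta(Ts^2)$ with $s\to 0$, the contribution $\bbE[\cR_d\mathbf{1}_{A^c}]$ can swamp the total expectation unless $P(A^c)=o(s^2)$. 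Hoeffding can deliver that, but you would need to say so; the deterministic bound sidesteps the issue entirely.
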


The above result has the following indications:
1. For $V = o(T)$ but approaching to $T$, it is impossible to achieve better bound of $\cR_d \ge O\Big((VT)^{\frac{\alpha_0}{2}}\Big)$
with $\alpha_0<1$.
2. For other ranges of $V$ like $V = O(\sqrt{T})$, its lower bound is not established and still an open question.

\section{Conclusion}

In this paper, we propose a discounted online Newton algorithm that
generalizes recursive least squares with forgetting factors and
existing online Newton methods. We prove a dynamic regret bound
$\max\{O(\log T),O(\sqrt{TV})\}$ which provides a rigorous
analysis of forgetting factor algorithms. 
In the special case of simple quadratic functions, we demonstrate that the
discounted Newton method reduces to a gradient descent algorithm with
a particular step size rule.
We show how this step size rule can be generalized to apply to
strongly convex functions, giving a substantially more
computationally efficient algorithm than the discounted online Newton
method, while recovering the dynamic regret guarantees.
The strongest regret guarantees depend on knowledge of the path
length, $V$. We show how to use a meta-algorithm that optimizes over
discount factors to obtain the same regret guarantees without
knowledge of $V$ as well as a lower bound which matches the obtained upper bound
for certain range of $V$.
Finally, when the functions are smooth we show how this
new gradient descent method enables a static regret of $\cR_s\le
O(T^{1-\beta})$ and $\cR_d^* \le O(T^{\beta} (1+V^*))$, where $\beta \in
(0,1)$ is a user-specified trade-off parameter.
%


\bibliographystyle{unsrt}
\bibliography{OCO_dynamic}

\newpage

\twocolumn[
\textbf{\large \centerline{Appendix:} 
}
]
\appendix

The supplementary material contains proofs of the some results of the
paper along with supporting results.

\paragraph{Proof of Theorem~\ref{thm:expConcaveThm}:}

Before proving the theorem, the following observation is
helpful. 

\begin{lemma}\label{lem:pBound}
  If $P_t$ is updated via \eqref{eq:quasiP} then $\|P_t\| \le \epsilon
  + \frac{G^2}{1-\gamma}$, while if $P_t$ is updated via
  $\eqref{eq:fullP}$, then $\|P_t\| \le \epsilon + \frac{u}{1-\gamma}$.
\end{lemma}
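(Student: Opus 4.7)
The plan is a straightforward unrolling of the recursion combined with the triangle inequality for the operator norm. First I would unroll the update: starting from $P_0 = \epsilon I$, the recursion $P_t = \gamma P_{t-1} + M_t$ (where $M_t = \nabla_t \nabla_t^\top$ in the quasi-Newton case and $M_t = H_t$ in the full-Newton case) gives the closed form
\begin{equation*}
P_t = \gamma^t \epsilon I + \sum_{k=0}^{t-1} \gamma^k M_{t-k}.
\end{equation*}

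Next I would bound each summand. In the quasi-Newton case, $\|M_{t-k}\|_2 = \|\nabla_{t-k}\|^2 \le G^2$ by the standing gradient bound. In the full-Newton case, the $u$-smoothness assumption gives $0 \preceq H_{t-k} \preceq u I$ (the paper records this as $\nabla^2 f_t(x) \preceq uI$), so $\|M_{t-k}\|_2 \le u$. Applying the triangle inequality to the closed form yields
\begin{equation*}
\|P_t\|_2 \le \gamma^t \epsilon + C \sum_{k=0}^{t-1} \gamma^k = \gamma^t \epsilon + C\,\frac{1-\gamma^t}{1-\gamma},
\end{equation*}
with $C = G^2$ or $C = u$ respectively.

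Finally, since $\gamma \in (0,1)$ we have $\gamma^t \le 1$ and $(1-\gamma^t)/(1-\gamma) \le 1/(1-\gamma)$, so both terms are bounded by $\epsilon$ and $C/(1-\gamma)$ respectively, giving the claimed bounds. There is really no obstacle here: the proof is a one-line geometric-series calculation once one observes that the recursion is a contraction of a bounded perturbation. The only thing to be careful about is using the correct upper bound ($G^2$ versus $u$) in each of the two cases, which follows from the gradient bound and the smoothness assumption already stated before the lemma.
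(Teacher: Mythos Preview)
Your proof is correct and is essentially the same argument as the paper's: the paper establishes the bound by induction on $t$ (using $\|P_t\|\le \gamma\|P_{t-1}\|+C$ and the inductive hypothesis), whereas you unroll the recursion to its closed form and sum the geometric series, which is just the explicit version of the same computation. The two presentations are equivalent and equally elementary.
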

\begin{proof}
  First consider the quasi-Newton case. The bound holds at
  $P_0=\epsilon I$, so assume that it holds at time $t-1$ for $t\ge
  1$.  Then, by induction we have
  \begin{align*}
    \|P_t\| &= \|\gamma P_{t-1} + \nabla_t \nabla_t \| \\
            &\le \gamma \|P_{t-1}\| + G^2 \\
            &\le \gamma \epsilon + \frac{G^2}{1-\gamma} \\
    & \le \epsilon + \frac{G^2}{1-\gamma}.
  \end{align*}
  The full-Newton case is identical, except it uses the bound
  $\|H_t\|\le u$.
  
\end{proof}

  The generalized Pythagorean theorem implies that
  \begin{align*}
    \|\theta_{t+1}-z_t\|_{P_t}^2 & \le \left\|\theta_t -
                              \frac{1}{\eta}P_{t}^{-1} \nabla_t - z_t
                              \right\|_{P_t}^2\\
    &= \|\theta_t-z_t\|_{P_t}^2 + \frac{1}{\eta^2} \nabla_t^\top P_t^{-1}
      \nabla_t \\
    &\quad-\frac{2}{\eta} \nabla_t^\top (\theta_t-z_t).
  \end{align*}
  Re-arranging shows that
  \begin{align}
    \label{eq:Pythagorean}
    \nonumber
    \nabla_t^\top (\theta_t-z_t) &\le \frac{1}{2\eta}\nabla_t^\top P_t^{-1}
    \nabla_t + \frac{\eta}{2}\Big(\|\theta_t- z_t\|_{P_t}^2\\
    &\quad\quad-\|\theta_{t+1}-z_t\|_{P_t}^2\Big)
  \end{align}

  Let $c_1$ be the upper bound on $\|P_t\|$ from
  Lemma~\ref{lem:pBound}. Then we can lower bound
  $\|\theta_{t+1}-z_t\|_{P_t}^2$ by
  \begin{align}
    \nonumber
    \|\theta_{t+1}-z_t\|_{P_t}^2 & = \|\theta_{t+1}-z_{t+1}\|_{P_t}^2 +
                                   \|z_{t+1}-z_t\|_{P_t}^2\\
    \nonumber
                              &\quad+2 (\theta_{t+1}-z_{t+1})^\top P_t
                              (z_{t+1}-z_t)\\
    \label{eq:csBound}
    & \ge \|\theta_{t+1}-z_{t+1}\|_{P_t}^2 - 4 D c_1 \|z_{t+1}-z_t\|
  \end{align}

  Combining (\ref{eq:Pythagorean}) and \eqref{eq:csBound} gives
  \begin{multline}
    \nonumber
    \nabla_t^\top (\theta_t-z_t) \le \frac{1}{2\eta}\nabla_t^\top P_t^{-1}
    \nabla_t + 2 D c_1 \eta \|z_{t+1}-z_t\| \\
    \frac{\eta}{2}\left(\|\theta_t- z_t\|_{P_t}^2-\|\theta_{t+1}-z_{t+1}\|_{P_t}^2 \right)
  \end{multline}
  Summing over $t$, dropping the term $-\|\theta_{T+1}-z_{T+1}\|_{P_T}^2$, setting $z_{T+1} = z_T$,  and re-arranging gives
  \begin{multline}
    \label{eq:sum}
    \sum_{t=1}^T \nabla_t^\top (\theta_t-z_t) \le \sum_{t=1}^T \frac{1}{2\eta}\nabla_t^\top P_t^{-1}
    \nabla_t  + 2 D c_1\eta V \\
    +\frac{\eta}{2} \epsilon \|\theta_1-z_1\|^2+
    \frac{\eta}{2}\sum_{t=1}^T(\theta_t-z_t)^\top (P_t-P_{t-1})(\theta_t-z_t)
  \end{multline}

  Now we will see how the choices of $\eta$ enable the final sum from
  \eqref{eq:sum} to cancel the terms from \eqref{eq:functionBounds}.
  In  Case~\ref{it:exp}, we have that $\eta(P_t-P_{t-1}) \preceq \eta
  \nabla_t \nabla_t^\top$ and the bound from \eqref{eq:expBound} holds
  for $\rho=\eta$. In Case~\ref{it:strong}, $\eta(P_t-P_{t-1}) \preceq
  \eta H_t \preceq \ell I$. In Case~\ref{it:quad}, $\eta(P_t-P_{t-1})
  \preceq \eta H_t \preceq H_t$. Thus in all cases, $\eta$ has been
  chosen so that combining the appropriate term of
  (\ref{eq:functionBounds}) with \eqref{eq:sum} gives  
  \begin{equation}
  \begin{array}{ll}
    \label{eq:cancelled}
    \sum_{t=1}^T (f_t(\theta_t)-f_t(z_t)) &\le  \sum_{t=1}^T \frac{1}{2\eta}\nabla_t^\top P_t^{-1}
    \nabla_t  \\
    &\quad+ 2 D c_1\eta V + 2 \eta \epsilon D^2
  \end{array}
  \end{equation}

  Now we will bound the first sum of \eqref{eq:cancelled}. Note that
  $\nabla_t^\top P_t^{-1}\nabla_t = \langle P_t^{-1}
  ,\nabla_t\nabla_t^\top \rangle$. 
  In Case~\ref{it:exp}, we have that $\nabla_t \nabla_t^\top =
  P_t-\gamma P_{t-1}$, while in Cases \ref{it:strong} and
  \ref{it:quad}, we have that $\nabla_t \nabla_t^\top \preceq
  \frac{1}{\alpha} H_t = \frac{1}{\alpha}(P_t-\gamma P_{t-1})$.
  So, in Case~\ref{it:exp}, let $c_2 = 1$ and in Cases~\ref{it:strong} and
  \ref{it:quad}, let $c_2 = 1/\alpha$. Then in all cases, we have that
  \begin{equation}
    \label{eq:traceBound}
  \nabla_t^\top P_{t}^{-1} \nabla_t  \le c_2 \langle P_t^{-1},P_t-\gamma P_{t-1}\rangle.
  \end{equation}

  Lemma 4.5 of \cite{hazan2016introduction} shows that
  \begin{equation}
    \label{eq:logBound}
     \langle P_t^{-1},P_t-\gamma P_{t-1}\rangle \le \log
    \frac{|P_t|}{|\gamma P_{t-1}|} =\log
    \frac{|P_t|}{|P_{t-1}|}-n\log \gamma, 
  \end{equation}
  where $n$ is the dimension of $x_t$. 

  Combining (\ref{eq:traceBound}) with (\ref{eq:logBound}), summing,
  and then using the bound that $\|P_T\| \le c_1$
  gives,
  \begin{align}
    \nonumber
    \sum_{t=1}^T \nabla_t^\top P_t^{-1} \nabla_t &\le c_2 \log |P_T| -
                                                   c_2 n \log \epsilon -n T \log \gamma \\
    \label{eq:telescope}
    & \le c_2 n \log \frac{c_1}{\epsilon} -c_2nT \log \gamma
  \end{align}

    Recall that $c_1 = \epsilon + \frac{c_3}{1-\gamma}$, where $c_3 =
    G^2$ or $c_3 = u$, depending on the case. Then a more explicit upper bound on
    \eqref{eq:telescope} is given by: 
    \begin{equation}
      \label{eq:telescopeCrude}
      \sum_{t=1}^t \nabla_t^\top P_t^{-1} \nabla_t \le
      c_2 n \log\left(
        1 + \frac{c_3}{\epsilon(1-\gamma)}
        \right)
      - c_2 n T \log \gamma. 
\end{equation}
    
  Combining (\ref{eq:cancelled}) and (\ref{eq:telescopeCrude}) gives the bound:
  \begin{equation*}
  \begin{array}{l}
    \sum_{t=1}^T (f_t(\theta_t)-f_t(z_t)) \le
    -\frac{c_2 nT}{2\eta} \log \gamma + \\
    \frac{c_2 n}{2\eta}\log\left(1+\frac{c_3}{\epsilon(1-\gamma)}\right)+
    2D\eta\left(\epsilon + \frac{c_3}{1-\gamma}\right)V + 
    2\eta \epsilon D^2
  \end{array}
  \end{equation*}
    The desired regret bound can now be found by simplifying the
    expression on the right, using the fact that $\frac{1}{1-\gamma} >
    1$.  
\hfill\qed

The following integral bound will be used in a few places.

\begin{lemma}
  \label{lem:integral}
  If $\gamma \in (0,1)$, then
  \begin{equation*}
    \sum_{t=1}^T \frac{1}{1-\gamma^t}  \le \frac{1}{1-\gamma}+T-1 + \frac{\log(1-\gamma)}{\log\gamma}
  \end{equation*}
\end{lemma}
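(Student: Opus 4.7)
The plan is to peel off the $t=1$ term, which already supplies the $\frac{1}{1-\gamma}$ in the bound, and then rewrite each remaining summand using the identity $\frac{1}{1-\gamma^t} = 1 + \frac{\gamma^t}{1-\gamma^t}$. This immediately produces a bulk contribution of $T-1$ and reduces the task to showing that the tail $S := \sum_{t=2}^T \frac{\gamma^t}{1-\gamma^t}$ is at most $\frac{\log(1-\gamma)}{\log\gamma}$.

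To bound $S$, I would apply a standard integral comparison. The function $g(s) := \frac{\gamma^s}{1-\gamma^s}$ is positive and strictly decreasing on $[1,\infty)$, since $\gamma^s$ itself is decreasing for $\gamma \in (0,1)$ and the denominator $1-\gamma^s$ is therefore increasing. Positive decreasing functions satisfy $g(t) \le \int_{t-1}^{t} g(s)\,ds$, so summing this inequality for $t = 2,\ldots,T$ telescopes to $S \le \int_1^T g(s)\,ds$.

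This integral has a clean closed form via the substitution $u = \gamma^s$, under which $du = u \log\gamma \,ds$ and hence $g(s)\,ds = \frac{1}{\log\gamma}\cdot\frac{du}{1-u}$. Antidifferentiating gives $-\frac{\log(1-\gamma^s)}{\log\gamma}$, so evaluating between $s=1$ and $s=T$ yields $\frac{\log(1-\gamma)}{\log\gamma} - \frac{\log(1-\gamma^T)}{\log\gamma}$. Because $\log\gamma < 0$ and $\log(1-\gamma^T) < 0$, the second term is nonpositive, and the integral is therefore bounded above by $\frac{\log(1-\gamma)}{\log\gamma}$. Plugging this back into the decomposition from the first paragraph produces the stated inequality.

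There is no real obstacle here: the argument is essentially mechanical once one notices the decomposition $\frac{1}{1-\gamma^t} = 1 + \frac{\gamma^t}{1-\gamma^t}$. The only subtlety is sign bookkeeping during the integral step, since both $\log\gamma$ and $\log(1-\gamma)$ are negative for $\gamma \in (0,1)$ while their ratio is positive, matching the sign of the target bound.
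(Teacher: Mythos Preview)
Your argument is correct and essentially identical to the paper's: both peel off the $t=1$ term and bound the remaining sum by an integral comparison exploiting that $t\mapsto \frac{1}{1-\gamma^t}$ is decreasing, with the only cosmetic difference being that you extract the $T-1$ contribution via the decomposition $\frac{1}{1-\gamma^t} = 1 + \frac{\gamma^t}{1-\gamma^t}$ before integrating, whereas the paper recovers it directly from the antiderivative $t - \frac{\log(1-\gamma^t)}{\log\gamma}$.
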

\begin{proof}
  \begin{align*}
\sum\limits_{t=1}^T\frac{1}{1-\gamma^t} &\le 
\frac{1}{1-\gamma}+\int_1^T\frac{1}{1-\gamma^t}\mathrm{d}t \\
                                        &=\frac{1}{1-\gamma}+\Big(t-\frac{\ln(1-\gamma^t)}{\ln(\gamma)}\Big)\Big|_1^T
                                          \\
&= \frac{1}{1-\gamma}+T-1 + \frac{\ln(1-\gamma)}{\ln\gamma} -
  \frac{\ln(1-\gamma^T)}{\ln\gamma} \\
                                        &\le \frac{1}{1-\gamma}+ T-1 + \frac{\ln(1-\gamma)}{\ln\gamma}.
\end{align*}
\end{proof}

\paragraph{Proof of Theorem~\ref{thm::quad_static_regret}:}
\begin{proof}

To proceed, recall that the update in Eq.\eqref{eq::quad_dis_update} is
\begin{equation*}
\begin{array}{ll}
\theta_{t+1} & = \frac{\gamma-\gamma^t}{1-\gamma^t}\theta_t + \frac{1-\gamma}{1-\gamma^t}y_t \\
             & = \theta_t - \eta_t \nabla f_t(\theta_t)
\end{array}
\end{equation*}
where $\eta_t = \frac{1-\gamma}{1-\gamma^t}$.

Then we get the relationship between $\nabla f_t(\theta_t)^T(\theta_t -\theta^*)$ and 
$\left\|\theta_t - \theta^*\right\|^2-\left\|\theta_{t+1}-\theta^*\right\|^2$ as:
\begin{equation*}
\begin{array}{ll}
\left\|\theta_{t+1}-\theta^*\right\|^2 
&= \left\|\theta_t-\eta_t\nabla f_t(\theta_t) - \theta^*\right\|^2 \\
&= \left\|\theta_t - \theta^*\right\|^2 -2\eta_t\nabla f_t(\theta_t)^T(\theta_t-\theta^*) \\
&\quad+ \eta_t^2\left\|\nabla f_t(\theta_t)\right\|^2
\end{array}
\end{equation*}
\begin{equation*}
\begin{array}{ll}
\nabla f_t(\theta_t)^T(\theta_t - \theta^*)
&= \frac{1}{2\eta_t}\big(\left\|\theta_t-\theta^*\right\|^2-\left\|\theta_{t+1}-\theta^*\right\|^2\big) \\
&\quad+\frac{\eta_t}{2}\left\|\nabla f_t(\theta_t)\right\|^2
\end{array}
\end{equation*}

Moreover, we write $f_t(\theta^*)$ as $f_t(\theta^*) = f_t(\theta_t)+\nabla f_t(\theta_t)^T(\theta^*-\theta_t)
+\frac{1}{2}\left\|\theta^*-\theta_t\right\|^2$,
which combined with the previous equation gives us the following equation:
\begin{equation*}
\begin{array}{ll}
f_t(\theta_t) - f_t(\theta^*) 
&= \frac{1}{2\eta_t}\big(\left\|\theta_t-\theta^*\right\|^2-\left\|\theta_{t+1}-\theta^*\right\|^2\big)\\
&\quad+\frac{\eta_t}{2}\left\|\nabla f_t(\theta_t)\right\|^2 - \frac{1}{2}\left\|\theta^*-\theta_t\right\|^2 \\
&\le 2D^2\eta_t + \frac{1}{2\eta_t}\big(\left\|\theta_t-\theta^*\right\|^2-\\
&\quad\left\|\theta_{t+1}-\theta^*\right\|^2\big)
- \frac{1}{2}\left\|\theta^*-\theta_t\right\|^2
\end{array}
\end{equation*}
where the inequality is due to 
$\left\|\nabla f_t(\theta_t)\right\|\le 2D$ as shown in Theorem \ref{thm::quad_dynamic_regret}.

Sum the above inequality from $t=1$ to $T$, we get:
\begin{equation*}
\begin{array}{ll}
\sum\limits_{t=1}^T\Big(f_t(\theta_t)-f_t(\theta^*)\Big) \\
\le 2D^2\sum\limits_{t=1}^T\eta_t 
+ \frac{1/\eta_1-1}{2}\left\|\theta_1-\theta^*\right\|^2 + \frac{1}{2}\sum\limits_{t=2}^T\big[(\frac{1}{\eta_t}\\
\quad-\frac{1}{\eta_{t-1}}-1)\left\|\theta^*-\theta_t\right\|^2\big] 
- \frac{1}{2\eta_T}\left\|\theta_{T+1}-\theta^*\right\|^2
\end{array}
\end{equation*}

Since $\eta_t = \frac{1-\gamma}{1-\gamma^t}$, $\eta_1 = 1$, $\frac{1}{\eta_t}-\frac{1}{\eta_{t-1}}-1<0$.
Then for the static regret, we have:
\begin{equation}
\begin{array}{ll}
\mathcal{R}_s = \sum\limits_{t=1}^T\Big(f_t(\theta_t)-f_t(\theta^*)\Big) \\
\le 2D^2\sum\limits_{t=1}^T\eta_t 
= 2D^2(1-\gamma)\sum\limits_{t=1}^T\frac{1}{1-\gamma^t}
\end{array}
\end{equation}


Now we will use the integral bound from Lemma~\ref{lem:integral} to
bound the regret.
Since $1-\gamma = 1/T^{\beta}$, $\frac{\log(1-\gamma)}{\log\gamma} = \frac{\beta\log T}{\log(1+\frac{1}{T^{\beta}-1})}$.
Since $\log(1+x)\ge \frac{1}{2}x, x\in(0,1)$, $\log(1+\frac{1}{T^{\beta}-1}) \ge \frac{1}{2}\frac{1}{T^{\beta}-1}$.
Thus, we have $\frac{\log(1-\gamma)}{\log\gamma}\le 2\beta (T^{\beta}-1)\log T$.
Then $(1-\gamma)\sum\limits_{t=1}^T\frac{1}{1-\gamma^t} = O(T^{1-\beta})$, 
which results in
$\mathcal{R}_s \le O(T^{1-\beta})$.
\end{proof}

\paragraph{Proof of Lemma~\ref{lem:gen_prob_var_path}:}
\begin{proof}

The proof follows the analysis in Chapter 2 of \cite{nesterov2013introductory}.

From the strong convexity of $f_t(\theta)$, we have 
\small
\begin{equation}
\label{eq::gen_strongly_ineq}
\begin{array}{ll}
f_t(\theta) &\ge f_t(\theta_t)+\nabla f_t(\theta_t)^T(\theta-\theta_t)+\frac{\ell}{2}\left\|\theta-\theta_t\right\|^2\\
&= f_t(\theta_t)+\nabla f_t(\theta_t)^T(\theta-\theta_t) +\nabla f_t(\theta_t)^T(\theta_{t+1}-\theta_t) \\
&\quad-\nabla f_t(\theta_t)^T(\theta_{t+1}-\theta_t)+\frac{\ell}{2}\left\|\theta-\theta_t\right\|^2 \\
&= f_t(\theta_t) +\nabla f_t(\theta_t)^T(\theta_{t+1}-\theta_t) \\
&\quad+\nabla f_t(\theta_t)^T(\theta-\theta_{t+1})+\frac{\ell}{2}\left\|\theta-\theta_t\right\|^2
\end{array}
\end{equation}
\normalsize

According to the optimality condition of the update rule in Eq.\eqref{eq::gen_prob_update},
we have $\big(\nabla f_t(\theta_t)+\frac{1}{\eta_t}(\theta_{t+1}-\theta_t)\big)^T(\theta-\theta_{t+1})\ge 0,\forall \theta\in\cS$,
which is $\nabla f_t(\theta_t)^T(\theta-\theta_{t+1})\ge \frac{1}{\eta_t}(\theta_{t}-\theta_{t+1})^T(\theta-\theta_{t+1})$.
Then combine with Eq.\eqref{eq::gen_strongly_ineq}, we have 
\begin{equation}
\label{eq::gen_strongly_final}
\begin{array}{ll}
f_t(\theta) &\ge f_t(\theta_t) +\nabla f_t(\theta_t)^T(\theta_{t+1}-\theta_t) \\
&\quad+\frac{1}{\eta_t}(\theta_{t}-\theta_{t+1})^T(\theta-\theta_{t+1}) +\frac{\ell}{2}\left\|\theta-\theta_t\right\|^2
\end{array}
\end{equation}

From the smoothness of $f_t(\theta)$, we have 
$f_t(\theta_{t+1}) \le f_t(\theta_t)+\nabla f_t(\theta_t)^T(\theta_{t+1}-\theta_t)+\frac{u}{2}\left\|\theta_{t+1}-\theta_t\right\|^2$.
Since $\frac{1}{\eta_t} = \frac{\ell(\gamma-\gamma^t)+u(1-\gamma)}{1-\gamma}\ge u$,
we have $f_t(\theta_t)+\nabla f_t(\theta_t)^T(\theta_{t+1}-\theta_t) 
\ge f_t(\theta_{t+1}) - \frac{1}{2\eta_t}\left\|\theta_{t+1}-\theta_t\right\|^2$.
Then combined with inequality \eqref{eq::gen_strongly_final}, we have 
\begin{equation}
\begin{array}{ll}
f_t(\theta) &\ge f_t(\theta_{t+1})-\frac{1}{2\eta_t}\left\|\theta_{t+1}-\theta_t\right\|^2 \\
&\quad+ \frac{1}{\eta_t}(\theta_{t}-\theta_{t+1})^T(\theta-\theta_{t+1})
+\frac{\ell}{2}\left\|\theta-\theta_t\right\|^2\\
& = f_t(\theta_{t+1})+\frac{1}{2\eta_t}\left\|\theta_{t+1}-\theta_t\right\|^2 \\
&\quad+ \frac{1}{\eta_t}(\theta_{t}-\theta_{t+1})^T(\theta-\theta_{t})
+\frac{\ell}{2}\left\|\theta-\theta_t\right\|^2
\end{array}
\end{equation}

By setting $\theta = \theta_t^*$ and using the fact $f_t(\theta_t^*)\le f_t(\theta_{t+1})$,
we reformulate the above inequality as:
\begin{equation}
\begin{array}{l}
(\theta_{t}-\theta_{t+1})^T(\theta_t^*-\theta_{t}) \\
\le -\frac{\ell(1-\gamma)}{2\ell(\gamma-\gamma^t)+2u(1-\gamma)}\left\|\theta_t^*-\theta_t\right\|^2
-\frac{1}{2}\left\|\theta_{t+1}-\theta_t\right\|^2
\end{array}
\end{equation}

Since $\left\|\theta_{t+1} -\theta_t^*\right\|^2 = \left\|\theta_{t+1}-\theta_t+\theta_t -\theta_t^*\right\|^2$,
we have
\begin{equation}
\begin{array}{ll}
\left\|\theta_{t+1} -\theta_t^*\right\|^2 &= \left\|\theta_{t+1} -\theta_t\right\|^2  + \left\|\theta_{t} -\theta_t^*\right\|^2 \\
&\quad+ 2(\theta_{t}-\theta_{t+1})^T(\theta_t^*-\theta_{t}) \\
& \le \big(1-\frac{\ell(1-\gamma)}{\ell(\gamma-\gamma^t)+u(1-\gamma)}\big)\left\|\theta_{t} -\theta_t^*\right\|^2 \\
& \le \big(1-\frac{\ell(1-\gamma)}{\ell\gamma+u(1-\gamma)}\big)\left\|\theta_{t} -\theta_t^*\right\|^2
\end{array}
\end{equation}

\end{proof}

\paragraph{Proof of Theorem~\ref{thm::gen_prob_dynamic_regret}:}
\begin{proof}

We use the same steps as in the previous section.
First, according to the Mean Value Theorem, 
we have 
$f_t(\theta_t)-f_t(\theta_t^*) = \nabla f_t(x)^T(\theta_t-\theta_t^*)
\le \left\|\nabla f_t(x)\right\|\left\|\theta_t-\theta_t^*\right\|$,
where $x\in \{v| v = \delta \theta_t + (1-\delta)\theta_t^*,\delta\in[0,1]\}$.
Due to the assumption on the upper bound of the norm of the gradient, we have 
$f_t(\theta_t)-f_t(\theta_t^*) \le G\left\|\theta_t-\theta_t^*\right\|$.
As a result, 
$\sum\limits_{t=1}^T\big(f_t(\theta_t)-f_t(\theta_t^*)\big)\le G\sum\limits_{t=1}^T\left\|\theta_t-\theta_t^*\right\|$.

Now we need to upper bound the term $\sum\limits_{t=1}^T\left\|\theta_t-\theta_t^*\right\|$.
$\sum\limits_{t=1}^T\left\|\theta_t-\theta_t^*\right\|$ is equal to $ \left\|\theta_1-\theta_1^*\right\| 
+ \sum\limits_{t=2}^T\left\|\theta_t-\theta_{t-1}^*+\theta_{t-1}^*-\theta_t^*\right\|$,
which is less than 
$\left\|\theta_1-\theta_1^*\right\| + \sum\limits_{t=1}^{T}\left\|\theta_{t+1}-\theta_{t}^*\right\| 
+ \sum\limits_{t=2}^T\left\|\theta_t^*-\theta_{t-1}^*\right\|$.
According to Lemma \ref{lem:gen_prob_var_path},
we have $\sum\limits_{t=1}^{T}\left\|\theta_{t+1}-\theta_{t}^*\right\| 
\le \rho\sum\limits_{t=1}^T\left\|\theta_{t}-\theta_{t}^*\right\|$ 
with $\rho = \sqrt{1-\frac{l(1-\gamma)}{u(1-\gamma)+l\gamma}} $.
Then we have 
$\sum\limits_{t=1}^T\left\|\theta_t-\theta_t^*\right\| \le
\left\|\theta_1-\theta_1^*\right\| + \rho\sum\limits_{t=1}^{T}\left\|\theta_{t}-\theta_{t}^*\right\| 
+ \sum\limits_{t=2}^T\left\|\theta_t^*-\theta_{t-1}^*\right\|$,
which can be reformulated as 
$\sum\limits_{t=1}^T\left\|\theta_t-\theta_t^*\right\| \le
\frac{1}{1-\rho}(\left\|\theta_1-\theta_1^*\right\| +
+ \sum\limits_{t=2}^T\left\|\theta_t^*-\theta_{t-1}^*\right\|)$.

$1-\rho = 1-\sqrt{1-\frac{a_0}{b_0}}= \frac{\sqrt{b_0}-\sqrt{b_0-a_0}}{\sqrt{b_0}}$, 
where $a_0 = \ell$ and $b_0 = \frac{\ell\gamma+u(1-\gamma)}{1-\gamma}$.
Thus, $1/(1-\rho) = \frac{\sqrt{b_0}}{\sqrt{b_0}-\sqrt{b_0-a_0}}
= \frac{\sqrt{b_0}(\sqrt{b_0}+\sqrt{b_0-a_0})}{a_0}$.
After plugging in the expression of $1-\gamma = 1/T^{\beta}$,
$1/(1-\rho) = \frac{\sqrt{\ell(T^{\beta}-1)+u}\big(\sqrt{\ell(T^{\beta}-1)+u}+\sqrt{\ell(T^{\beta}-1)+u-\ell}\big)}{\ell}
\le \frac{2\big(\ell(T^{\beta}-1)+u\big)}{\ell} = 2(T^{\beta}-1)+u/\ell$

Then $\mathcal{R}_d = \sum\limits_{t=1}^T\big(f_t(\theta_t)-f_t(\theta_t^*)\big)
\le G\frac{1}{1-\rho}\big(\left\|\theta_1-\theta_1^*\right\| +
+ \sum\limits_{t=2}^T\left\|\theta_t^*-\theta_{t-1}^*\right\|\big)
\le G\big(2(T^{\beta}-1)+u/\ell\big)\big(\left\|\theta_1-\theta_1^*\right\| +
+ \sum\limits_{t=2}^T\left\|\theta_t^*-\theta_{t-1}^*\right\|\big)$.

\end{proof}

\paragraph{Proof of Theorem~\ref{thm::gen_prob_static_regret}:}
\begin{proof}

The proof follows the similar steps in the proof of Theorem \ref{thm::quad_static_regret}.

According to the non-expansive property of the projection operator and the update rule in Eq.\eqref{eq::gen_prob_update},
we have
\begin{equation*}
\begin{array}{ll}
\left\|\theta_{t+1}-\theta^*\right\|^2& \le \left\|\theta_t-\eta_t\nabla f_t(\theta_t)-\theta^*\right\|^2 \\
& = \left\|\theta_t-\theta^*\right\|^2 -2\eta_t\nabla f_t(\theta_t)^T(\theta_t-\theta^*)\\
&\quad+\eta_t^2\left\|\nabla f_t(\theta_t)\right\|^2
\end{array}
\end{equation*}
The reformulation gives us
\begin{equation} 
\label{eq::gen_prob_contraction_ineq}
\begin{array}{ll}
\nabla f_t(\theta_t)^T(\theta_t-\theta^*)
&\le \frac{1}{2\eta_t}\big(\left\|\theta_t-\theta^*\right\|^2 - \left\|\theta_{t+1}-\theta^*\right\|^2\big)\\
&\quad+ \frac{\eta_t}{2}\left\|\nabla f_t(\theta_t)\right\|^2
\end{array}
\end{equation}

Moreover, from the strong convexity, we have 
$f_t(\theta^*)\ge f_t(\theta_t)+\nabla f_t(\theta_t)^T(\theta^*-\theta_t)+\frac{\ell}{2}\left\|\theta^*-\theta_t\right\|^2$,
which is equivalent to 
$\nabla f_t(\theta_t)^T(\theta_t-\theta^*)\ge f_t(\theta_t)-f_t(\theta^*)+\frac{\ell}{2}\left\|\theta^*-\theta_t\right\|^2$.
Combined with Eq.\eqref{eq::gen_prob_contraction_ineq}, we have
\begin{equation*}
\begin{array}{ll}
f_t(\theta_t)-f_t(\theta^*) &\le \frac{1}{2\eta_t}\big(\left\|\theta_t-\theta^*\right\|^2 
- \left\|\theta_{t+1}-\theta^*\right\|^2\big)\\
&\quad+ \frac{\eta_t}{2}\left\|\nabla f_t(\theta_t)\right\|^2-\frac{\ell}{2}\left\|\theta^*-\theta_t\right\|^2
\end{array}
\end{equation*}

Summing up from $t=1$ to $T$ with $\left\|\nabla f_t(\theta_t)\right\|^2\le G^2$, we get 
\small
\begin{equation}
\label{eq::gen_prob_static_final}
\begin{array}{ll}
\sum\limits_{t=1}^T\big(f_t(\theta_t)-f_t(\theta^*)\big)\\
\le \sum\limits_{t=1}^T\frac{1}{2\eta_t}\big(\left\|\theta_t-\theta^*\right\|^2 - \left\|\theta_{t+1}-\theta^*\right\|^2\big)\\
\quad + \sum\limits_{t=1}^T\frac{\eta_t}{2}G^2-\sum\limits_{t=1}^T\frac{\ell}{2}\left\|\theta^*-\theta_t\right\|^2 \\
\le G^2/2\sum\limits_{t=1}^T\eta_t + \frac{1/\eta_1-\ell}{2}\left\|\theta_1-\theta^*\right\|^2 \\
\quad+ \frac{1}{2}\sum\limits_{t=2}^T\Big[(\frac{1}{\eta_t}-\frac{1}{\eta_{t-1}}-\ell)\left\|\theta^*-\theta_t\right\|^2\Big] 
\end{array}
\end{equation}
\normalsize
Since $\eta_t = \frac{1-\gamma}{\ell(\gamma-\gamma^t)+u(1-\gamma)}$, $1/\eta_1 = u$
and $\frac{1}{\eta_t}-\frac{1}{\eta_{t-1}}-\ell = \frac{\ell(\gamma^{t-1}-1)(1-\gamma)}{1-\gamma}\le 0$.

For the term $\sum\limits_{t=1}^T\eta_t = \sum\limits_{t=1}^T\frac{1-\gamma}{\ell(\gamma-\gamma^t)+u(1-\gamma)}$,
it can be reformulated as
$\frac{1}{u}\sum\limits_{t=1}^T\frac{\frac{u(1-\gamma)}{\ell(\gamma-\gamma^t)}}{1+\frac{u(1-\gamma)}{\ell(\gamma-\gamma^t)}}
= \frac{1}{u} + \frac{1}{u}\sum\limits_{t=2}^T\frac{\frac{u(1-\gamma)}{\ell(\gamma-\gamma^t)}}{1+\frac{u(1-\gamma)}{\ell(\gamma-\gamma^t)}}
\le \frac{1}{u}+\frac{1}{u}\sum\limits_{t=2}^T\frac{u(1-\gamma)}{\ell(\gamma-\gamma^t)}
= \frac{1}{u} + \frac{1-\gamma}{\ell\gamma}\sum\limits_{t=2}^T\frac{1}{1-\gamma^{t-1}}
= \frac{1}{u} + \frac{1-\gamma}{\ell\gamma}\sum\limits_{t=1}^{T-1}\frac{1}{1-\gamma^{t}}$.
For $\sum\limits_{t=1}^{T-1}\frac{1}{1-\gamma^{t}}$, 
we know that $\sum\limits_{t=1}^{T-1}\frac{1}{1-\gamma^{t}} \le O(T)$ as shown in the proof of Theorem \ref{thm::quad_static_regret}.
For the term $\frac{1-\gamma}{\ell\gamma}$, $\frac{1-\gamma}{\ell\gamma} = \frac{1}{\ell(T^{\beta}-1)}$.
Combining these two terms' inequalities, 
we get that $\sum\limits_{t=1}^T\eta_t \le O(T^{1-\beta})$.

As a result, the inequality \eqref{eq::gen_prob_static_final} can be reduced to
\begin{equation*}
\sum\limits_{t=1}^T\big(f_t(\theta_t)-f_t(\theta^*)\big) \le O(T^{1-\beta})
\end{equation*}

\end{proof}

\paragraph{Proof of Corollary~\ref{corr:strongly_dynamic_regret}:}
\begin{proof}
Since $\gamma = 1-\frac{1}{2}\sqrt{\frac{\max\{V,\log^2 T/T\}}{2DT}}$ and $V\in[0,2DT]$,
$1/2\le\gamma<1$. 

Next, we upper bound each term on the right-hand-side of Theorem \ref{thm::strongly_regret} individually. 
$\frac{1}{1-\gamma}V=2\sqrt{\frac{2DT}{\max\{V,\log^2 T/T\}}}V\le O(\sqrt{TV})$.
In order to bound the second term, 
Lemma~\ref{lem:integral} implies that
$(1-\gamma)\sum\limits_{t=1}^T\frac{1}{1-\gamma^t}\le 1+(1-\gamma)(T +
\frac{\ln(1-\gamma)}{\ln\gamma})$.

  In this case, the logarithm terms can be bounded by:
\begin{equation*}
\begin{array}{l}
\frac{\ln(1-\gamma)}{\ln\gamma} \\
= \frac{-\ln (\frac{1}{2}\sqrt{\frac{\max\{V,\log^2 T/T\}}{2DT}})}{-\ln (1-\frac{1}{2}\sqrt{\frac{\max\{V,\log^2 T/T\}}{2DT}})}\\
= \frac{-\ln (\frac{1}{2}\sqrt{\frac{\max\{V,\log^2 T/T\}}
{2DT}})}{\ln\Big(1+\frac{\frac{1}{2}\sqrt{\frac{\max\{V,\log^2 T/T\}}{2DT}}}{1-\frac{1}{2}\sqrt{\frac{\max\{V,\log^2 T/T\}}{2DT}}}\Big)}\\
\le \ln (2\sqrt{\frac{2DT}{\max\{V,\log^2 T/T\}}})4\sqrt{\frac{2DT}{\max\{V,\log^2 T/T\}}}\\
\le O(\ln(T/\log T)\frac{T}{\log T} ) \\
\le O(T) 
\end{array}
\end{equation*}
where the first inequality follows by using $\ln(1+x)\ge \frac{1}{2}x, x\in[0,1]$, 
and $1-\frac{1}{2}\sqrt{\frac{\max\{V,\log^2 T/T\}}{2DT}}<1$.

Thus, $(1-\gamma)\sum\limits_{t=1}^T\frac{1}{1-\gamma^t}\le \max\{O(\log T),O(\sqrt{TV})\}$.
The final result follows by adding the two terms.
\end{proof}

\paragraph{Proof of Lemma~\ref{lem:meta-expert-compare}:}
\begin{proof}
The first part of the proof is the same as the first part of the result in the Proof of Lemma 1 in \cite{zhang2018adaptive},
which follows methods of \cite{cesa2006prediction}.
We define $L_t^\gamma = \sum\limits_{i=1}^tf_i(\theta_i^\gamma)$, 
and $W_t = \sum\limits_{\gamma\in\cH}w_1^\gamma \exp(-\alpha L_t^\gamma)$.

The following update is equivalent to the update rule in Algorithm \ref{alg:meta}:
\begin{equation}
\label{eq:expert_reform}
w_t^\gamma = \frac{w_1^\gamma \exp(-\alpha L_{t-1}^\gamma)}
              {\sum\limits_{\mu\in\cH} w_1^\mu \exp(-\alpha L_{t-1}^\mu)}, \quad t\ge 2.
\end{equation} 

First, we have 
\begin{equation}
\label{eq:logW_lower}
\begin{array}{ll}
\log W_T &= \log\big(\sum\limits_{\gamma\in\cH}w_1^\gamma\exp(-\alpha L_T^\gamma)\big) \\
&\ge \log\big(\max\limits_{\gamma\in\cH}w_1^\gamma\exp(-\alpha L_T^\gamma)\big) \\
&=-\alpha \min\limits_{\gamma\in\cH}\big(L_T^\gamma+\frac{1}{\alpha}\log\frac{1}{w_1^\gamma}\big).
\end{array}
\end{equation}

Then we bound the quantity $\log(W_t/W_{t-1})$. 
For $t\ge2$, we get
\begin{equation}
\begin{array}{l}
\log\Big(\frac{W_t}{W_{t-1}}\Big) \\
= \log\Big(\frac{\sum_{\gamma\in\cH}w_1^\gamma\exp(-\alpha L_t^\gamma)}
        {\sum_{\gamma\in\cH}w_1^\gamma\exp(-\alpha L_{t-1}^\gamma)}\Big)\\
= \log\Big(\frac{\sum_{\gamma\in\cH}w_1^\gamma\exp(-\alpha L_{t-1}^\gamma)\exp(-\alpha f_t(\theta_t^\gamma))}
        {\sum_{\gamma\in\cH}w_1^\gamma\exp(-\alpha L_{t-1}^\gamma)}\Big)\\
=\log\Big(\sum\limits_{\gamma\in\cH}w_t^\gamma\exp(-\alpha f_t(\theta_t^\gamma))\Big)

\end{array}
\end{equation}
where the last equality is due to Eq.\eqref{eq:expert_reform}.

When $t=1$, $\log W_1 = \log\Big(\sum\limits_{\gamma\in\cH}w_1^\gamma\exp(-\alpha f_1(\theta_1^\gamma))\Big)$.
Then $\log W_T$ can be expressed as:
\begin{equation}
\label{eq::logW_upper}
\begin{array}{ll}
\log W_T &= \log W_1 + \sum\limits_{t=2}^T\log\Big(\frac{W_t}{W_{t-1}}\Big) \\
&= \sum\limits_{t=1}^T\log\Big(\sum\limits_{\gamma\in\cH}w_t^\gamma\exp(-\alpha f_t(\theta_t^\gamma))\Big).
\end{array}
\end{equation}

The rest of the proof is new.

Due to the $\alpha$-exp-concavity,
$\exp(-\alpha f_t(\sum_{\gamma\in\cH}w_t^\gamma \theta_t^\gamma))
\ge \sum_{\gamma\in\cH}w_t^\gamma \exp(-\alpha f_t(\theta_t^\gamma))$,
which is equivalent to 
\small
\begin{equation}
\label{eq::exp_concave_ineq}
\begin{array}{ll}
\log\Big(\sum_{\gamma\in\cH}w_t^\gamma \exp(-\alpha f_t(\theta_t^\gamma))\Big) 
&\le -\alpha f_t\Big(\sum_{\gamma\in\cH}w_t^\gamma \theta_t^\gamma\Big) \\
&=-\alpha f_t(\theta_t)
\end{array}
\end{equation}
\normalsize

Combining the Inequalities \eqref{eq:logW_lower}, \eqref{eq::logW_upper}, and \eqref{eq::exp_concave_ineq},
we get 
\begin{equation*}
-\alpha \min\limits_{\gamma\in\cH}\big(L_T^\gamma+\frac{1}{\alpha}\log\frac{1}{w_1^\gamma}\big) 
\le -\alpha\sum_{t=1}^T f_t(\theta_t)
\end{equation*}
which can be reformulated as 
\begin{equation*}
\sum_{t=1}^T f_t(\theta_t)\le\min\limits_{\gamma\in\cH}\Big(\sum_{t=1}^Tf_t(\theta_t^\gamma)+\frac{1}{\alpha}\log\frac{1}{w_1^\gamma}\Big)
\end{equation*}

Since it holds for the minimum value, it is true for all $\gamma\in\cH$,
which completes the proof.

\end{proof}

\paragraph{Proof of Theorem~\ref{thm:mega_exp-concave}:}
\begin{proof}

When $\gamma = \gamma^* = 
1-\frac{1}{2}\frac{\log T}{T}\sqrt{\frac{\max\{\frac{T}{\log^2 T}V,1\}}{2D}}
=1-\eta^*$, we have $\sum_{t=1}^T (f_t(\theta_t^{\gamma^*})-f_t(z_t))\le \max\{O(\log T), O(\sqrt{TV})\}$
based on the Corollary \ref{cor:logBounds}.

Since $0\le V\le 2TD$, $\frac{1}{2}\frac{\log T}{T\sqrt{2D}}\le\eta^*\le \frac{1}{2}$.

According to our definition of $\eta_i$, $\min \eta_i = \frac{1}{2}\frac{\log T}{T\sqrt{2D}}$
and $\frac{1}{2}\le \max \eta_i< 1$, 
which means for any value of $V$, there always exists a $\eta_k$ such that
\begin{equation*}
\eta_k = \frac{1}{2}\frac{\log T}{T\sqrt{2D}}2^{k-1}\le \eta^*\le 2\eta_k = \eta_{k+1}
\end{equation*}
where $k = \lfloor \frac{1}{2}\log_2 (\max\{\frac{T}{\log^2 T}V,1\})\rfloor+1$.

Now we claim that that running the algorithm with $\gamma_k$ incurs at
most a constant factor increase in the dynamic regret. 

Since $0<\eta_k\le \frac{1}{2}$, $\frac{1}{2}\le\gamma_k = 1-\eta_k<1$ and $\gamma_k\ge\gamma^*$.

According to Theorem \ref{thm:expConcaveThm}, we have 
\small
  \begin{equation*}
  \begin{array}{ll}
    \sum_{t=1}^T (f_t(\theta_t^{\gamma_k})-f_t(z_t)) &\le -a_1 T \log \gamma_k -a_2\log(1-\gamma_k)\\
     &\quad+ \frac{a_3}{1-\gamma_k} V + a_4.
   \end{array}
  \end{equation*}
  \normalsize

  Now we bound each term of the regret in terms of the value obtained
  by using $\gamma^*$.
For the first term on the RHS, $-T\log\gamma_k = T\log\frac{1}{\gamma_k}\le T\log\frac{1}{\gamma^*}$.

For the second one, $-\log(1-\gamma_k) = -\log\frac{1}{2}(2-2\gamma_k) = -\log\frac{1}{2}2\eta_k$.
Since $1\ge2\eta_k\ge\eta^*$, $\frac{1}{2}2\eta_k\ge\frac{1}{2}\eta^*$,
which leads to $-\log\frac{1}{2}2\eta_k\le -\log\frac{1}{2}\eta^*$
and $-\log(1-\gamma_k)\le -\log\frac{1}{2}\eta^* = \log2 -\log(1-\gamma^*)$.

For the third one, $\frac{1}{1-\gamma_k} V 
= \frac{1}{\eta_k}V = \frac{2}{2\eta_k}V\le \frac{2}{\eta^*}V
=\frac{2}{1-\gamma^*}V$.
Thus the claim has been proved.

Since using $\gamma_k$ in place of $\gamma^*$ increases the
regret by at most a constant factor, Corollary \ref{cor:logBounds}
implies that:
\begin{equation}
\label{eq:expert_k_regret}
\sum_{t=1}^T (f_t(\theta_t^{\gamma_k})-f_t(z_t))\le \max\{O(\log T), O(\sqrt{TV})\}
\end{equation}

Furthermore, from Lemma \ref{lem:meta-expert-compare} we get 
\begin{equation}
\label{eq:expert_k_comparable}
\begin{array}{ll}
\sum\limits_{t=1}^T (f_t(\theta_t) - f_t(\theta_t^{\gamma_k}))
&\le \frac{1}{\alpha}\log\frac{1}{w_1^{\gamma_k}}\\
&\le \frac{1}{\alpha}\log(k(k+1)) \\
&\le 2\frac{1}{\alpha}\log(k+1) \\
&\le O(\log(\log T))
\end{array}
\end{equation}

Combining the above inequalities \eqref{eq:expert_k_regret} and \eqref{eq:expert_k_comparable} completes the proof.
\end{proof}

\paragraph{Proof of Lemma~\ref{lem::strongly_is_exp}:}
\begin{proof}
Let $g(x) = \exp(-\alpha f(x))$. To prove the concavity of $g(x)$,
it is equivalent to show $\langle\nabla g(x)-\nabla g(y),x-y\rangle\le 0,x,y\in\cS$.
Since $\nabla g(x) = \exp(-\alpha f(x))(-\alpha)\nabla f(x)$,
it is equivalent to prove that 
$\langle \exp(-\alpha f(x))\nabla f(x)-\exp(-\alpha f(y))\nabla f(y),x-y\rangle\ge 0$,
which can be reformulated as
\small
\begin{equation}
\label{eq:strong_is_exp_exp}
\exp(-\alpha f(x))\langle \nabla f(x),x-y\rangle \ge \exp(-\alpha f(y))\langle \nabla f(y),x-y\rangle
\end{equation}
\normalsize

Without loss of generality, let us assume $f(x)\ge f(y)$.
Due to $\ell$-strong convexity, $f(x)\ge f(y) + \langle \nabla f(y),x-y\rangle+\frac{\ell}{2}\|x-y\|^2$,
which leads to 
\begin{equation}
\label{eq:strong_is_exp_p1}
\langle \nabla f(y),x-y\rangle \le f(x)-f(y)-\frac{\ell}{2}\|x-y\|^2
\end{equation}

What's more, $f(y)\ge f(x) + \langle \nabla f(x),y-x\rangle+\frac{\ell}{2}\|x-y\|^2$,
which leads to
\begin{equation}
\label{eq:strong_is_exp_p2}
\langle \nabla f(x),x-y\rangle \ge f(x)-f(y)+\frac{\ell}{2}\|x-y\|^2
\end{equation}

Combining inequalities \eqref{eq:strong_is_exp_exp}, \eqref{eq:strong_is_exp_p1}, and \eqref{eq:strong_is_exp_p2},
it is enough to prove that
$\exp(-\alpha f(x))(f(x)-f(y)+\frac{\ell}{2}\|x-y\|^2)\ge \exp(-\alpha f(y))(f(x)-f(y)-\frac{\ell}{2}\|x-y\|^2)$,
which can be reformulated as
$\frac{\ell}{2}\|x-y\|^2(\exp(-\alpha f(x))+\exp(-\alpha f(y)))
\ge (f(x)-f(y))(\exp(-\alpha f(y))-\exp(-\alpha f(x)))$.
When $x-y = 0$, it is always true. Let us consider the case when $\|x-y\|>0$.
Then we need to show that
$\frac{\ell}{2}\Big(1+\exp\big(\alpha\big(f(x)-f(y)\big)\big)\Big)
\ge \frac{f(x)-f(y)}{\|x-y\|}\frac{\exp\Big(\alpha\big(f(x)-f(y)\big)\Big)-1}{\|x-y\|}$.
Due to bounded gradient and Mean value theorem,$\frac{f(x)-f(y)}{\|x-y\|}\le G$,
which means it is enough to show that
\small
\begin{equation}
\label{eq:strong_is_exp_p3}
\frac{\ell}{2G}\Big(1+\exp\big(\alpha\big(f(x)-f(y)\big)\big)\Big)
\ge\frac{\exp\Big(\alpha\big(f(x)-f(y)\big)\Big)-1}{\|x-y\|}
\end{equation}
\normalsize

According to the Taylor series, 
$\exp\Big(\alpha\big(f(x)-f(y)\big)\Big) 
= 1 + \alpha\big(f(x)-f(y)\big)+\frac{1}{2!}\alpha^2\big(f(x)-f(y)\big)^2
+\dots+\frac{1}{n!}\alpha^n\big(f(x)-f(y)\big)^n,n\to \infty$.
Thus, $\frac{\exp\Big(\alpha\big(f(x)-f(y)\big)\Big)-1}{\|x-y\|}
 = \alpha \frac{f(x)-f(y)}{\|x-y\|}+\frac{1}{2}\alpha^2(f(x)-f(y))\frac{f(x)-f(y)}{\|x-y\|}
 +\dots+\frac{1}{n!}\alpha^n\big(f(x)-f(y)\big)^{n-1}\frac{f(x)-f(y)}{\|x-y\|},n\to \infty$.
 Since $\frac{f(x)-f(y)}{\|x-y\|}\le G$, we have
 \begin{equation}
 \label{eq:strong_is_exp_f1}
 \begin{array}{l}
 \frac{\exp\Big(\alpha\big(f(x)-f(y)\big)\Big)-1}{\|x-y\|}\\
 \le \alpha G+\frac{1}{2}\alpha^2(f(x)-f(y)) G +\dots\\
 \quad+\frac{1}{n!}\alpha^n\big(f(x)-f(y)\big)^{n-1}G
 \end{array}
 \end{equation}

For the LHS of inequality \eqref{eq:strong_is_exp_p3}, it is equal to
\begin{equation}
\label{eq:strong_is_exp_f2}
\begin{array}{l}
\frac{\ell}{G}+\alpha\frac{\ell}{2G}(f(x)-f(y))
+\frac{1}{2!}\alpha^2\frac{\ell}{2G}(f(x)-f(y))^2\\
+\dots
+\frac{1}{n!}\alpha^n\frac{\ell}{2G}(f(x)-f(y))^n,n\to \infty
\end{array}
\end{equation}

If we compare the coefficients of the RHS from the inequality \eqref{eq:strong_is_exp_f1} with the one in \eqref{eq:strong_is_exp_f2}
and plug in $\alpha = \ell/G^2$, we see that it is always smaller or equal,
which completes the proof.

\end{proof}

\paragraph{Proof of Theorem~\ref{thm:mega_strongly}:}
\begin{proof}
As in the proof of Theorem \ref{thm:mega_exp-concave}, all we need to show is that
there exists an algorithm $A^\gamma$, 
which can bound the regret $\sum_{t=1}^T (f_t(\theta_t^\gamma)-f_t(z_t)) 
\le O(\max\{\log T, \sqrt{TV}\})$.

When $\gamma = \gamma^* = 
1-\frac{1}{2}\frac{\log T}{T}\sqrt{\frac{\max\{\frac{T}{\log^2 T}V,1\}}{2D}}
=1-\eta^*$, we have $\sum_{t=1}^T (f_t(\theta_t^{\gamma^*})-f_t(z_t))\le O(\max\{\log T, \sqrt{TV}\})$
based on the Corollary \ref{corr:strongly_dynamic_regret}.

Since $0\le V\le 2TD$, $\frac{1}{2}\frac{\log T}{T\sqrt{2D}}\le\eta^*\le \frac{1}{2}$.

According to our definition of $\eta_i$, $\min \eta_i = \frac{1}{2}\frac{\log T}{T\sqrt{2D}}$
and $\frac{1}{2}\le \max \eta_i< 1$, 
which means for any value of $V$, there always exists a $\eta_k$ such that
\begin{equation*}
\eta_k = \frac{1}{2}\frac{\log T}{T\sqrt{2D}}2^{k-1}\le \eta^*\le 2\eta_k = \eta_{k+1}
\end{equation*}
where $k = \lfloor \frac{1}{2}\log_2 (\max\{\frac{T}{\log^2 T}V,1\})\rfloor+1$.

Since $0<\eta_k\le \frac{1}{2}$, $\frac{1}{2}\le\gamma_k = 1-\eta_k<1$ and $\gamma_k\ge\gamma^*$.

According to Theorem \ref{thm::strongly_regret}, we have 
\begin{equation*}
\sum\limits_{t=1}^T \big(f_t(\theta_t^{\gamma_k})-f_t(z_t)\big) 
\le  \frac{2D\ell}{1-\gamma_k}V + \frac{G^2}{\ell}(1-\gamma_k)\sum\limits_{t=1}^T\frac{1}{1-\gamma_k^t}
\end{equation*}

For the first term on the RHS, $\frac{1}{1-\gamma_k} V
= \frac{1}{\eta_k}V = \frac{2}{2\eta_k}V\le \frac{2}{\eta^*}V
=\frac{2}{1-\gamma^*}V$.

For the second one, $1-\gamma_k\le 1-\gamma^*$. 
According to the proof in Corollary \ref{corr:strongly_dynamic_regret},
$\sum\limits_{t=1}^T\frac{1}{1-\gamma_k^t} \le \frac{1}{1-\gamma_k}+T + \frac{\log(1-\gamma_k)}{\log\gamma_k}$.
\begin{equation}
\label{eq:coef_strongly1}
\frac{\log(1-\gamma_k)}{\log\gamma_k} = \frac{\log \eta_k}{\log (1-\eta_k)}
= \frac{-\log \eta_k}{-\log (1-\eta_k)}. 
\end{equation}
Since $\eta_k\ge \frac{1}{2}\eta^*$,
$\log\eta_k\ge\log\frac{1}{2}\eta^*$ 
and 
\begin{equation}
\label{eq:coef_strongly2}
0<-\log\eta_k\le-\log\frac{1}{2}\eta^* = \log 2-\log \eta^*.
\end{equation}
Since $\eta_k\ge \frac{1}{2}\eta^*$, $1-\eta_k\le 1-\frac{1}{2}\eta^*$.
Then $\log(1-\eta_k)\le \log(1-\frac{1}{2}\eta^*)$,
which results in 
\begin{equation}
\label{eq:coef_strongly3}
-\log(1-\eta_k)\ge -\log(1-\frac{1}{2}\eta^*)>0.
\end{equation}
Combining inequalities \eqref{eq:coef_strongly2} and \eqref{eq:coef_strongly3} with Eq.\eqref{eq:coef_strongly1},
we get
\begin{equation}
\label{eq:integral_mega}
\begin{array}{ll}
\frac{\log(1-\gamma_k)}{\log\gamma_k}
&\le \frac{\log 2-\log \eta^*}{-\log(1-\frac{1}{2}\eta^*)} \\
& = \frac{\log 2}{-\log(1-\frac{1}{2}\eta^*)} + \frac{-\log \eta^*}{-\log(1-\frac{1}{2}\eta^*)}
\end{array}
\end{equation}

For the first term on the RHS,
\begin{equation*} 
\begin{array}{ll}
-\log(1-\frac{1}{2}\eta^*)&=\log\Big(\frac{1}{1-\frac{1}{4}\sqrt{\frac{\max\{V,\log^2 T/T\}}{2DT}}}\Big)\\
& = \log\Big(1+\frac{\frac{1}{4}\sqrt{\frac{\max\{V,\log^2 T/T\}}{2DT}}}{1-\frac{1}{4}\sqrt{\frac{\max\{V,\log^2 T/T\}}{2DT}}}\Big)\\
&\ge \frac{1}{2}\frac{\frac{1}{4}\sqrt{\frac{\max\{V,\log^2 T/T\}}{2DT}}}{1-\frac{1}{4}\sqrt{\frac{\max\{V,\log^2 T/T\}}{2DT}}}\\
&\ge \frac{1}{8}\sqrt{\frac{\max\{V,\log^2 T/T\}}{2DT}}
\end{array}
\end{equation*}
where the first inequality is due to $\log(1+x)\ge\frac{1}{2}x,x\in[0,1]$ and
the second one is due to $\sqrt{\frac{\max\{V,\log^2 T/T\}}{2DT}}>0$.
As a result, 
\begin{equation*}
\begin{array}{ll}
\frac{\log 2}{-\log(1-\frac{1}{2}\eta^*)}
&\le 8 \sqrt{\frac{2DT}{\max\{V,\log^2 T/T\}}}\log 2\\
&\le 8\frac{T}{\log T}\sqrt{2D}\log 2<O(T).
\end{array}
\end{equation*}

For the second term on the RHS of Eq.\eqref{eq:integral_mega}, 
\begin{equation*}
\begin{array}{ll}
-\log \eta^* &= \log\Big(2\sqrt{\frac{2DT}{\max\{V,\log^2 T/T\}}}\Big) \\
&\le \log 2 +\frac{1}{2}\log 2D + \frac{1}{2}\log \frac{T}{\log T}
\end{array}
\end{equation*}

Combining the inequalities for $-\log \eta^*$ and $-\log(1-\frac{1}{2}\eta^*)$,
we get 
$\frac{-\log \eta^*}{-\log(1-\frac{1}{2}\eta^*)} 
\le (\log 2 +\frac{1}{2}\log 2D + \frac{1}{2}\log \frac{T}{\log T})8\frac{T}{\log T}\sqrt{2D}
\le O(T)$.

As a result, $\frac{\log(1-\gamma_k)}{\log\gamma_k}\le O(T)$ and 
$\sum\limits_{t=1}^T\frac{1}{1-\gamma_k^t} \le O(T)$.

Since using $\gamma_k$ does not increase the order when used in place of $\gamma^*$, 
we get 
\begin{equation*}
\sum\limits_{t=1}^T \Big(f_t(\theta_t^{\gamma_k})-f_t(z_t)\Big) \le O(\max\{\log T, \sqrt{TV}\})
\end{equation*}
which combining with the result of Lemma \ref{lem:meta-expert-compare} completes the proof.

\end{proof}

\paragraph{Proof of Proposition \ref{prop:lower-bound}:}

\begin{proof}

Since strongly convex problem with bounded gradient is also exp-concave 
due to Lemma \ref{lem::strongly_is_exp} shown in the next section,
we will only consider the strongly convex problem.

For the case when $V = 0$, $\cR_d$ reduces to the static regret $\cR_s$, which has the lower bound $O(\log T)$
as shown in \cite{abernethy2008optimal}.

Let us now consider the case when $V> 0$. The analysis is inspired by \cite{yang2016tracking}.
We will use $f_t(\theta) = (\theta-\epsilon_t)^2$ as the special case
to show the lower bound. Here $\epsilon_1^T$ is a sequence of independently generated
random variables from $\{-2\sigma,2\sigma\}$ with equal probabilities.
For the dynamic regret 
$\cR_d = \sum\limits_{t=1}^Tf_t(\theta_t) - \min\limits_{z_1^T\in \cS_V}\sum\limits_{t=1}^Tf_t(z_t)
\ge \sum\limits_{t=1}^Tf_t(\theta_t) - \sum\limits_{t=1}^Tf_t(z_t)$,
where $\cS_V = \{z_1^T:\sum\limits_{t=2}^T\|z_t-z_{t-1}\|\le V\}$, and
$z_t = \frac{1}{2}\epsilon_t$.
As a result, the expected value of
$\sum\limits_{t=1}^Tf_t(\theta_t) - \sum\limits_{t=1}^Tf_t(z_t)$
is $\bbE[\sum\limits_{t=1}^Tf_t(\theta_t) - \sum\limits_{t=1}^Tf_t(z_t)]$
$=$ $\bbE[\sum\limits_{t=1}^T(\theta_t^2 -2\theta_t\epsilon_t+\frac{3}{4}\epsilon_t^2)]$
$\ge$ $\sum\limits_{t=1}^T \bbE[-2\theta_t\epsilon_t +\frac{3}{4}\epsilon_t^2]$ $=$ $3\sigma^2 T$.
This implies that $\cR_d \ge 3\sigma^2 T$.
For the path length, $\sum\limits_{t=2}^T\|z_t-z_{t-1}\|\le 2\sigma T$.
Let us set $\sigma = T^{-\frac{2(1-\gamma_0)}{4-\gamma_0}}$ and $\gamma_0 \in (0,1)$.
Then $V = 2\sigma T = 2T^{\frac{2+\gamma_0}{4-\gamma_0}}$
and $(VT)^{\frac{\gamma_0}{2}}$ $=$ $2^{\frac{\gamma_0}{2}}T^{\frac{3\gamma_0}{4-\gamma_0}}$.
Then $\cR_d - \frac{3}{\sqrt{2}}(VT)^{\frac{\gamma_0}{2}}$ $\ge$ 
$3T^{\frac{3\gamma_0}{4-\gamma_0}} -  \frac{3}{\sqrt{2}}2^{\frac{\gamma_0}{2}}T^{\frac{3\gamma_0}{4-\gamma_0}}$
$\ge 0$. In other words, $\cR_d\ge O\Big((VT)^{\frac{\gamma_0}{2}}\Big)$, $\forall \gamma_0\in (0,1)$
with $V = 2T^{\frac{2+\gamma_0}{4-\gamma_0}}$.

In summary, we have
that
there always exist a 
exist a sequence of loss functions $f_1^T$
and a comparison sequence $z_1^T$
such that $\sum\limits_{t=2}^T\|z_t-z_{t-1}\|\le V = O(T^{\frac{2+\gamma_0}{4-\gamma_0}})$ and 
$\cR_d \ge \max\{O(\log T),O\big((VT)^{\frac{\gamma_0}{2}}\big)\}, \forall \gamma_0 \in (0,1)$

\end{proof}

\paragraph{Online Least-Squares Optimization}

Consider the online least-squares problem with:
\begin{equation}
\label{eq::gen_ls_loss}
f_t(\theta) = \frac{1}{2}\left\|y_t - A_t\theta\right\|^2
\end{equation}
where $A_t\in\mathbb{R}^{m\times n}$, $A_t^TA_t$ has full rank with $lI\preceq A_t^TA_t\preceq uI$, 
and $y_t\in\mathbb{R}^m$ comes from a bounded set with
$\left\|y_t\right\|\le D$.

In the main paper, we analyzed the dynamic regret of discounted recursive least squares against comparison
sequences $z_1,\ldots,z_T$ with a path length constraint
$\sum_{t=2}^T\|z_t-z_{t-1}\| \le V$. Additionally, we analyzed the trade-off between static and
dynamic regret of a gradient descent rule with comparison sequence
$\theta_t^* = \argmin_{\theta \in \cS} f_t(\theta)$. In this appendix,
we analyze the trade-off between static regret and dynamic regret with
comparison sequence $\theta_t^*$ achieved by discounted recursive
least squares. We will see that the discounted recursive least squares
achieves trade-offs depend on the condition number, $\delta = u/l$. In
particular, low dynamic regret is only guaranteed for low condition
numbers. 

Recall that discounted recursive least squares corresponds to
Algorithm~\ref{alg:discountedNewton} running with a full Newton step and $\eta
= 1$.
In this case, $P_t =
\sum\limits_{i=1}^t\gamma^{i-1}A_{t+1-i}^TA_{t+1-i} = \gamma
P_{t-1}+A_t^TA_t$, and the update rule can be written more explicitly
as 
\small
\begin{equation}
\label{eq::org_gen_ls_update}
\theta_{t+1} = \Big(\sum\limits_{i=1}^t\gamma^{i-1}A_{t+1-i}^TA_{t+1-i}\Big)^{-1}\Big(\sum\limits_{i=1}^t \gamma^{i-1}A_{t+1-i}^Ty_{t+1-i}\Big)
\end{equation}
\normalsize
The above update rule can be reformulated as:
\begin{equation}
\label{eq::gen_ls_update}
\theta_{t+1} = \theta_t - P_t^{-1}\nabla f_t(\theta_t).
\end{equation}



Before we analyze dynamic and static regret for the update \eqref{eq::gen_ls_update}, 
we first show some supporting results for $\left\|y_t-A_tx\right\|$ 
and $\left\|\nabla f_t(x)\right\|$, where $x\in \{v| v = \beta \theta_t + (1-\beta)\theta_t^*,\beta\in[0,1]\}$.

\begin{lemma}
  \label{lem:norm_gen_ls_dif}
  {\it
  Let $\theta_t$ be the result of Eq.\eqref{eq::gen_ls_update}, and $\theta_t^* = \argmin f_t(\theta)$.
  For $x\in \{v| v = \beta \theta_t + (1-\beta)\theta_t^*,\beta\in[0,1]\}$,
  If $\left\|y_t\right\|\le D$, then $\left\|y_t-A_tx\right\|\le (u/l +1)D$.
  }
\end{lemma}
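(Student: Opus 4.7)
The plan is to reduce the bound on $\|y_t - A_t x\|$ to bounds on the two endpoints of the segment by the triangle inequality together with convexity of the Euclidean norm. First I would apply $\|y_t - A_t x\| \le \|y_t\| + \|A_t x\| \le D + \|A_t x\|$, and since $x = \beta \theta_t + (1-\beta)\theta_t^*$ with $\beta \in [0,1]$, I have $\|A_t x\| \le \beta \|A_t \theta_t\| + (1-\beta)\|A_t \theta_t^*\|$. So the task reduces to bounding $\|A_t \theta_t\|$ and $\|A_t \theta_t^*\|$ separately, and combining them using the fact that $u \ge l$ (forced by $lI \preceq A_t^T A_t \preceq uI$).

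For the minimizer, I would use that $A_t^T A_t \succeq l I$ has full rank, so $\theta_t^* = (A_t^T A_t)^{-1} A_t^T y_t$. Then $A_t \theta_t^* = A_t(A_t^T A_t)^{-1} A_t^T y_t$ is the orthogonal projection of $y_t$ onto the column space of $A_t$, and this projection has operator norm at most $1$. Hence $\|A_t \theta_t^*\| \le \|y_t\| \le D$.

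For $\theta_t$, I would plug into the closed-form update \eqref{eq::org_gen_ls_update}, writing $\theta_t = P_{t-1}^{-1} b_{t-1}$ with
\[
P_{t-1} = \sum_{i=1}^{t-1} \gamma^{i-1} A_{t-i}^T A_{t-i}, \qquad b_{t-1} = \sum_{i=1}^{t-1} \gamma^{i-1} A_{t-i}^T y_{t-i}.
\]
From $A_{t-i}^T A_{t-i} \succeq l I$, I obtain $P_{t-1} \succeq l \tfrac{1-\gamma^{t-1}}{1-\gamma} I$, hence $\|P_{t-1}^{-1}\| \le \tfrac{1-\gamma}{l(1-\gamma^{t-1})}$. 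From $\|A_{t-i}^T y_{t-i}\| \le \sqrt{u}\,D$, the triangle inequality gives $\|b_{t-1}\| \le \sqrt{u}\,D \cdot \tfrac{1-\gamma^{t-1}}{1-\gamma}$. Multiplying these bounds, the geometric-series factors cancel and yield $\|\theta_t\| \le \sqrt{u}\,D/l$. Finally, $\|A_t \theta_t\|^2 = \theta_t^T A_t^T A_t \theta_t \le u\|\theta_t\|^2$ gives $\|A_t \theta_t\| \le uD/l$.

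Putting the two endpoint bounds together with $u/l \ge 1$ gives $\|A_t x\| \le (u/l) D$, and the triangle inequality step yields $\|y_t - A_t x\| \le (u/l + 1) D$ as claimed. I do not anticipate a substantive obstacle here; the main subtlety is just the matrix-norm bookkeeping, specifically the cancellation of the geometric-series factor $\tfrac{1-\gamma^{t-1}}{1-\gamma}$ between the bounds on $\|P_{t-1}^{-1}\|$ and $\|b_{t-1}\|$, which is what makes the final estimate independent of $\gamma$ and $t$.
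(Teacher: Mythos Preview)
Your proof is correct and follows essentially the same approach as the paper: the triangle-inequality reduction and the bound $\|\theta_t\| \le \sqrt{u}\,D/l$ via the geometric-series cancellation between $\|P_{t-1}^{-1}\|$ and $\|b_{t-1}\|$ are identical. The only cosmetic difference is that you bound $\|A_t\theta_t^*\|$ directly via the orthogonal-projection property (yielding $\le D$), whereas the paper first bounds $\|\theta_t^*\| \le \sqrt{u}\,D/l$ and then multiplies by $\|A_t\|_2 \le \sqrt{u}$; both routes give the same final estimate.
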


\begin{proof}

$\left\|y_t-A_tx\right\| \le \left\|A_t\right\|_2\left\|x\right\|+\left\|y_t\right\|$, and 
$\left\|A_t\right\|_2 = \sqrt{\sigma_1(A_t^TA_t)}\le \sqrt{u}$. 
For $\left\|x\right\|$, we have
$\left\|x\right\| = \left\|\beta\theta_t+(1-\beta)\theta_t^*\right\|\le \beta\left\|\theta_t\right\|+(1-\beta)\left\|\theta_t^*\right\|$.

For the term $\left\|\theta_t\right\|$,
$\left\|\theta_t\right\| = \left\|\Big(\sum\limits_{i=1}^{t-1}\gamma^{i-1}A_{t-i}^TA_{t-i}\Big)^{-1}
\Big(\sum\limits_{i=1}^{t-1} \gamma^{i-1}A_{t-i}^Ty_{t-i}\Big)\right\|$,
which can be upper bounded by 
$\left\|\Big(\sum\limits_{i=1}^{t-1}\gamma^{i-1}A_{t-i}^TA_{t-i}\Big)^{-1}\right\|_2
\left\|\Big(\sum\limits_{i=1}^{t-1} \gamma^{i-1}A_{t-i}^Ty_{t-i}\Big)\right\|$.
Then we upper bound these two terms individually.

$\left\|\Big(\sum\limits_{i=1}^{t-1}\gamma^{i-1}A_{t-i}^TA_{t-i}\Big)^{-1}\right\|_2 
= \frac{1}{\sigma_n(\sum\limits_{i=1}^{t-1}\gamma^{i-1}A_{t-i}^TA_{t-i})}$. 
Since $lI\preceq A_{t-i}^TA_{t-i}\preceq uI$, 
$\frac{1-\gamma^{t-1}}{1-\gamma}lI\preceq \sum\limits_{i=1}^{t-1}\gamma^{i-1}A_{t-i}^TA_{t-i})\preceq \frac{1-\gamma^{t-1}}{1-\gamma}uI$.
Thus, $\sigma_n(\sum\limits_{i=1}^{t-1}\gamma^{i-1}A_{t-i}^TA_{t-i})\ge l\frac{1-\gamma^{t-1}}{1-\gamma}$,
which results in $\left\|\Big(\sum\limits_{i=1}^{t-1}\gamma^{i-1}A_{t-i}^TA_{t-i}\Big)^{-1}\right\|_2 
\le \frac{1-\gamma}{l(1-\gamma^{t-1})}$.

For the term $\left\|\Big(\sum\limits_{i=1}^{t-1} \gamma^{i-1}A_{t-i}^Ty_{t-i}\Big)\right\|$, 
we have $\left\|\Big(\sum\limits_{i=1}^{t-1} \gamma^{i-1}A_{t-i}^Ty_{t-i}\Big)\right\|
\le \sum\limits_{i=1}^{t-1}\gamma^{i-1}\left\|A_{t-i}^Ty_{t-i}\right\|
\le \sum\limits_{i=1}^{t-1}\gamma^{i-1}\left\|A_{t-i}^T\right\|_2\left\|y_{t-i}\right\|
\le \frac{1-\gamma^{t-1}}{1-\gamma}\sqrt{u}D$.
Then we have $\left\|\theta_t\right\|\le \frac{\sqrt{u}}{l}D$.

For $\left\|\theta_t^*\right\|$,
we have $\left\|\theta_t^*\right\| = \left\|(A_t^TA_t)^{-1}A_t^Ty_t\right\|
\le \left\|(A_t^TA_t)^{-1}\right\|_2\left\|A_t^T\right\|_2\left\|y_t\right\|
\le \frac{\sqrt{u}}{l}D$. Thus, $\left\|x\right\| \le \frac{\sqrt{u}}{l}D$
and $\left\|y_t-A_tx\right\| \le \left\|A_t\right\|_2\left\|x\right\|+\left\|y_t\right\|
\le (u/l+1)D$.

\end{proof}

\begin{corollary}
  \label{corol:norm_gen_ls_grad}
  {\it
  Let $\theta_t$ be the result of Eq.\eqref{eq::gen_ls_update} and $\theta_t^* = \argmin f_t(\theta)$.
  For $x\in \{v| v = \beta \theta_t + (1-\beta)\theta_t^*,\beta\in[0,1]\}$, we have 
  $\left\|\nabla f_t(x)\right\| \le \sqrt{u}(u/l+1)D$.
  }
\end{corollary}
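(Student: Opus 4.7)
The plan is to derive the bound by a direct two-step chain: first rewrite the gradient of the least-squares loss in a form amenable to norm inequalities, then apply the operator-norm bound on $A_t$ together with Lemma~\ref{lem:norm_gen_ls_dif}.

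More concretely, from the definition $f_t(\theta) = \tfrac{1}{2}\|y_t - A_t\theta\|^2$ in Eq.~\eqref{eq::gen_ls_loss}, a direct computation gives $\nabla f_t(x) = A_t^\top(A_t x - y_t)$. Taking norms and using the sub-multiplicativity of the induced $\ell_2$-norm yields $\|\nabla f_t(x)\| \le \|A_t^\top\|_2 \,\|A_t x - y_t\| = \|A_t\|_2 \,\|y_t - A_t x\|$. Since $A_t^\top A_t \preceq uI$, we have $\|A_t\|_2 = \sqrt{\sigma_1(A_t^\top A_t)} \le \sqrt{u}$, which is the same estimate already used inside the proof of Lemma~\ref{lem:norm_gen_ls_dif}.

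The second factor $\|y_t - A_t x\|$ is exactly what Lemma~\ref{lem:norm_gen_ls_dif} bounds: for any $x$ on the segment between $\theta_t$ and $\theta_t^*$, $\|y_t - A_t x\| \le (u/l+1)D$. Multiplying the two bounds gives the claimed $\|\nabla f_t(x)\| \le \sqrt{u}(u/l+1)D$, so the corollary follows without additional work. There is essentially no obstacle here; the only non-trivial ingredient (the bound $\|y_t - A_t x\| \le (u/l+1)D$, which in turn relies on controlling $\|\theta_t\|$ via the minimum eigenvalue of the discounted Gram matrix $\sum_{i=1}^{t-1}\gamma^{i-1} A_{t-i}^\top A_{t-i}$) has already been established in the preceding lemma, so this corollary is merely a one-line application.
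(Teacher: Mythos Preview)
Your proposal is correct and follows essentially the same approach as the paper: write $\nabla f_t(x) = A_t^\top(A_tx - y_t)$, bound $\|A_t^\top\|_2 \le \sqrt{u}$ from $A_t^\top A_t \preceq uI$, and apply Lemma~\ref{lem:norm_gen_ls_dif} to control $\|A_tx - y_t\|$. The paper's proof is the same one-line chain of inequalities.
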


\begin{proof}

For $\left\|\nabla f_t(x)\right\|$, we have $\left\|\nabla f_t(x)\right\| = \left\|A_t^TA_tx-A_t^Ty_t\right\|
\le \left\|A_t^T\right\|_2\left\|A_tx-y_t\right\|\le \sqrt{u}(u/l+1)D$,
where the second inequality is due to Lemma \ref{lem:norm_gen_ls_dif} and the assumption of $A_t^TA_t\preceq uI$.

\end{proof}

Moreover, we need to obtain the relationship between $\theta_{t+1}-\theta_t^*$ and $\theta_t-\theta_t^*$ 
as another necessary step to get the dynamic regret.

\begin{lemma}
  \label{lem:gen_ls_var_path}
  {\it
  Let $\theta_t^*$ be the solution to $f_t(\theta)$ in Eq.\eqref{eq::gen_ls_loss}.
  When we use the discounted recursive least-squares update in Eq.\eqref{eq::gen_ls_update},
  the following relationship is obtained:
  \begin{equation*}
  \begin{array}{ll}
  \theta_{t+1} -\theta_t^*\\
   = \big(I-\gamma^{-1}P_{t-1}^{-1}A_t^T(I+A_t\gamma^{-1}P_{t-1}^{-1}A_t^T)^{-1}A_t\big)(\theta_t-\theta_t^*) \\
  = \Big(I+\gamma^{-1}P_{t-1}^{-1}A_t^TA_t\Big)^{-1}(\theta_t-\theta_t^*) 
  \end{array}
  \end{equation*}
  }
\end{lemma}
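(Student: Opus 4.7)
The plan is to directly derive both equalities from the update rule, using the fact that $\theta_t^*$ is the unconstrained minimizer of the quadratic $f_t$. Since $f_t(\theta) = \frac{1}{2}\|y_t - A_t \theta\|^2$, the first-order optimality condition gives $A_t^\top A_t \theta_t^* = A_t^\top y_t$, so
\begin{equation*}
\nabla f_t(\theta_t) = A_t^\top A_t \theta_t - A_t^\top y_t = A_t^\top A_t (\theta_t - \theta_t^*).
\end{equation*}
Substituting into $\theta_{t+1} = \theta_t - P_t^{-1} \nabla f_t(\theta_t)$ and subtracting $\theta_t^*$ from both sides yields
\begin{equation*}
\theta_{t+1} - \theta_t^* = (I - P_t^{-1} A_t^\top A_t)(\theta_t - \theta_t^*).
\end{equation*}

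Next I would use the recursion $P_t = \gamma P_{t-1} + A_t^\top A_t$ to rewrite the coefficient matrix in two equivalent ways. First, observe that $P_t - A_t^\top A_t = \gamma P_{t-1}$, so
\begin{equation*}
I - P_t^{-1} A_t^\top A_t = P_t^{-1}(P_t - A_t^\top A_t) = \gamma P_t^{-1} P_{t-1}.
\end{equation*}
Factoring $P_t = \gamma P_{t-1}\bigl(I + \gamma^{-1} P_{t-1}^{-1} A_t^\top A_t\bigr)$ and inverting gives
\begin{equation*}
\gamma P_t^{-1} P_{t-1} = \bigl(I + \gamma^{-1} P_{t-1}^{-1} A_t^\top A_t\bigr)^{-1},
\end{equation*}
which is exactly the second claimed form.

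To recover the first (Woodbury) form, I would apply the push-through identity $(I + UV)^{-1} = I - U(I + VU)^{-1} V$ with $U = \gamma^{-1} P_{t-1}^{-1} A_t^\top$ and $V = A_t$, yielding
\begin{equation*}
\bigl(I + \gamma^{-1} P_{t-1}^{-1} A_t^\top A_t\bigr)^{-1} = I - \gamma^{-1} P_{t-1}^{-1} A_t^\top \bigl(I + A_t \gamma^{-1} P_{t-1}^{-1} A_t^\top\bigr)^{-1} A_t.
\end{equation*}
Combining these observations gives both displayed identities.

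The proof is essentially algebraic, so the only subtlety to watch is the invertibility of $P_{t-1}$: with initialization $P_0 = \epsilon I$ (or $P_0 = 0$ once enough $A_t^\top A_t \succeq lI$ terms have accumulated), $P_{t-1}$ is positive definite, so all the inverses above are well-defined. I do not anticipate a genuine obstacle; the main thing to get right is bookkeeping of the factorization $P_t = \gamma P_{t-1}(I + \gamma^{-1} P_{t-1}^{-1} A_t^\top A_t)$ so that inversion produces the stated right-hand side rather than a left-multiplied variant.
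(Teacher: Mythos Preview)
Your proof is correct, but it takes a cleaner route than the paper's. The paper works from the explicit closed-form update $\theta_{t+1}=(A_t^\top A_t+\gamma P_{t-1})^{-1}(A_t^\top y_t+\gamma\Phi_{t-1})$, expands $(A_t^\top A_t+\gamma P_{t-1})^{-1}$ via the Woodbury identity, subtracts $\theta_t^*=(A_t^\top A_t)^{-1}A_t^\top y_t$, and then matches the resulting terms involving $A_t^\top y_t$ and $\Phi_{t-1}$ against the claimed coefficient matrix by checking several algebraic identities. Your approach instead exploits the gradient form directly: the observation $\nabla f_t(\theta_t)=A_t^\top A_t(\theta_t-\theta_t^*)$ gives $\theta_{t+1}-\theta_t^*=(I-P_t^{-1}A_t^\top A_t)(\theta_t-\theta_t^*)$ in one line, after which the recursion $P_t=\gamma P_{t-1}+A_t^\top A_t$ and a single push-through identity yield both displayed forms. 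Your argument is shorter and avoids the $\Phi_t$ bookkeeping and term-by-term verification; the paper's route stays closer to the traditional RLS derivation and keeps the weighted-sum formula visible throughout. Your caveat about the invertibility of $P_{t-1}$ is also apt, since in this appendix the paper takes $P_0=0$ and relies on $A_t^\top A_t\succeq lI$ to make $P_{t-1}$ positive definite for $t\ge 2$.
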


\begin{proof}

If we set $\Phi_t = \sum\limits_{i=1}^t\gamma^{i-1}A_{t+1-i}^Ty_{t+1-i} = \gamma \Phi_{t-1}+A_t^Ty_t$,
then according to the update of $\theta_{t+1}$ in Eq.\eqref{eq::org_gen_ls_update}, 
we have $\theta_{t+1} = (A_t^TA_t+\gamma P_{t-1})^{-1}(A_t^Ty_t+\gamma\Phi_{t-1})$,
which by the use of inverse lemma can be further reformulated as:
\begin{equation}
\begin{array}{ll}
\theta_{t+1} &= \Big(\gamma^{-1}P_{t-1}^{-1}- \gamma^{-2}P_{t-1}^{-1}A_t^T(I+\\
&\quad A_t\gamma^{-1}P_{t-1}^{-1}A_t^T)^{-1}A_tP_{t-1}^{-1}\Big)\big(A_t^Ty_t+\gamma\Phi_{t-1}\big) \\
\end{array}
\end{equation}
Then for $\theta_{t+1}-\theta_t^* = \theta_{t+1}-(A_t^TA_t)^{-1}A_t^Ty_t$, we have:
\scriptsize
\begin{equation}
\begin{array}{ll}
\theta_{t+1}-\theta_t^* \\
= \underbrace{\big(I-\gamma^{-1}P_{t-1}^{-1}A_t^T(I+A_t\gamma^{-1}P_{t-1}^{-1}A_t^T)^{-1}A_t\big)}_{\circled{1}}\theta_t
+\underbrace{\gamma^{-1}P_{t-1}^{-1}A_t^Ty_t}_{\circled{2.1}} \\
-\underbrace{\big(\gamma^{-2}P_{t-1}^{-1}A_t^T(I+A_t\gamma^{-1}P_{t-1}^{-1}A_t^T)^{-1}A_tP_{t-1}^{-1}
-(A_t^TA_t)^{-1}\big)A_t^Ty_t}_{\circled{2.2}}
\end{array}
\end{equation}
\normalsize
We want to prove $\circled{2.1}+\circled{2.2} = \circled{1}(-\theta_t^*) = \circled{1}(-(A_t^TA_t)^{-1}A_t^Ty_t) = \circled{3}$.

Since $A(I+BA)^{-1}B = AB(I+AB)^{-1}=(I+AB)^{-1}AB$, for any compatible matrix $A$ and $B$, we have:
\scriptsize
\begin{equation}
\begin{array}{ll}
\circled{3}\\
= -\big[I-\gamma^{-1}P_{t-1}^{-1}A_t^T(I+A_t\gamma^{-1}P_{t-1}^{-1}A_t^T)^{-1}A_t\big](A_t^TA_t)^{-1}A_t^Ty_t \\
= -\big[I-(I+\gamma^{-1}P_{t-1}^{-1}A_t^TA_t)^{-1}\gamma^{-1}P_{t-1}^{-1}A_t^TA_t\big](A_t^TA_t)^{-1}A_t^Ty_t \\
= -\big[(A_t^TA_t)^{-1} - (I+\gamma^{-1}P_{t-1}^{-1}A_t^TA_t)^{-1}\gamma^{-1}P_{t-1}^{-1}\big]A_t^Ty_t
\end{array}
\end{equation}
\normalsize
Also, for any compatible $P$, we have $(I+P)^{-1} = I-(I+P)^{-1}P$. 
Then $(I+\gamma^{-1}P_{t-1}^{-1}A_t^TA_t)^{-1} = I - (I+\gamma^{-1}P_{t-1}^{-1}A_t^TA_t)^{-1}\gamma^{-1}P_{t-1}^{-1}A_t^TA_t$.
Then $\circled{3} = -\big[(A_t^TA_t)^{-1}-\gamma^{-1}P_{t-1}^{-1}
+(I+\gamma^{-1}P_{t-1}^{-1}A_t^TA_t)^{-1}\gamma^{-2}P_{t-1}^{-1}A_t^TA_tP_{t-1}^{-1}\big]A_t^Ty_t$.
Compared with $\circled{2.1}+\circled{2.2}$, we are left to prove
$(I+\gamma^{-1}P_{t-1}^{-1}A_t^TA_t)^{-1}\gamma^{-2}P_{t-1}^{-1}A_t^TA_tP_{t-1}^{-1}
 = \gamma^{-2}P_{t-1}^{-1}A_t^T(I+A_t\gamma^{-1}P_{t-1}^{-1}A_t^T)^{-1}A_tP_{t-1}^{-1}$, which is always true.

As a result, we have $\theta_{t+1} -\theta_t^* 
= \big(I-\gamma^{-1}P_{t-1}^{-1}A_t^T(I+A_t\gamma^{-1}P_{t-1}^{-1}A_t^T)^{-1}A_t\big)(\theta_t-\theta_t^*)$,
which can be simplified as $\theta_{t+1} -\theta_t^* 
=\big(I+\gamma^{-1}P_{t-1}^{-1}A_t^TA_t\big)^{-1}(\theta_t-\theta_t^*)$.

\end{proof}

\begin{corollary}
  \label{corol:gen_ls_var_path_norm}
  {\it
  Let $\theta_t^*$ be the solution to $f_t(\theta)$ in Eq.\eqref{eq::gen_ls_loss}.
  When we use the discounted recursive least-squares update in Eq.\eqref{eq::gen_ls_update},
  the following relation is obtained:
  \begin{equation*}
  \begin{array}{ll}
  \left\|\theta_{t+1} -\theta_t^*\right\| & \le \sqrt{\frac{u}{l}}\frac{u\gamma}{u\gamma+l(1-\gamma)}\left\|\theta_t-\theta_t^*\right\|
  \end{array}
  \end{equation*}
  }
\end{corollary}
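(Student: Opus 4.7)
:}

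The plan is to take the second (simpler) expression from Lemma~\ref{lem:gen_ls_var_path},
\[
\theta_{t+1}-\theta_t^* = \bigl(I+\gamma^{-1}P_{t-1}^{-1}A_t^{\top}A_t\bigr)^{-1}(\theta_t-\theta_t^*),
\]
and bound the operator norm of the matrix on the right. The obstacle is that $M := \gamma^{-1} P_{t-1}^{-1} A_t^{\top} A_t$ is a product of two symmetric positive definite matrices and is generally \emph{not} symmetric, so its spectral norm is not just the reciprocal of its smallest eigenvalue. The trick is to symmetrize by a congruence with $P_{t-1}^{1/2}$.

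First I would write
\[
I+M = P_{t-1}^{-1/2}\bigl(I+\tilde M\bigr) P_{t-1}^{1/2},
\qquad
\tilde M := \gamma^{-1} P_{t-1}^{-1/2} A_t^{\top} A_t P_{t-1}^{-1/2},
\]
so that $\tilde M$ is symmetric and positive definite and
\[
(I+M)^{-1} = P_{t-1}^{-1/2}(I+\tilde M)^{-1} P_{t-1}^{1/2}.
\]
Then the submultiplicativity of the spectral norm gives
\[
\|(I+M)^{-1}\|_2 \;\le\; \|P_{t-1}^{-1/2}\|_2\,\|(I+\tilde M)^{-1}\|_2\,\|P_{t-1}^{1/2}\|_2 .
\]

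Next, using the sandwich bound $\frac{1-\gamma^{t-1}}{1-\gamma}\,l I \preceq P_{t-1} \preceq \frac{1-\gamma^{t-1}}{1-\gamma}\,u I$ established in the proof of Lemma~\ref{lem:norm_gen_ls_dif}, the product of the two $P_{t-1}^{\pm 1/2}$ factors contributes at most a condition-number factor:
\[
\|P_{t-1}^{-1/2}\|_2\,\|P_{t-1}^{1/2}\|_2 \;\le\; \sqrt{u/l}.
\]
For the middle factor, since $\tilde M$ is symmetric and $A_t^{\top}A_t \succeq lI$, we have $\tilde M \succeq \gamma^{-1} l\, P_{t-1}^{-1} \succeq \frac{l(1-\gamma)}{\gamma u(1-\gamma^{t-1})}\,I$, so
\[
\|(I+\tilde M)^{-1}\|_2 \;\le\; \frac{1}{1+\tfrac{l(1-\gamma)}{\gamma u(1-\gamma^{t-1})}}
\;=\;\frac{\gamma u(1-\gamma^{t-1})}{\gamma u(1-\gamma^{t-1})+l(1-\gamma)}.
\]
Finally, since $x\mapsto\frac{ax}{ax+b}$ is increasing in $x$ for $a,b>0$, applying this with $x=1-\gamma^{t-1}\le 1$ upgrades the bound to $\frac{\gamma u}{\gamma u+l(1-\gamma)}$, which after multiplication with $\sqrt{u/l}$ yields the claim. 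The main obstacle, as noted, is the symmetrization step; once one notices that $M$ is similar to a symmetric PSD matrix through the congruence by $P_{t-1}^{1/2}$, the rest is bookkeeping with the spectral bounds already developed for $P_{t-1}$ and $A_t^{\top}A_t$.
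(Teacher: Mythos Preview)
Your proposal is correct and follows essentially the same route as the paper's own proof: both start from the second identity in Lemma~\ref{lem:gen_ls_var_path}, symmetrize via the congruence $I+M = P_{t-1}^{-1/2}(I+\tilde M)P_{t-1}^{1/2}$, bound the three factors by submultiplicativity using the sandwich $\frac{1-\gamma^{t-1}}{1-\gamma}lI \preceq P_{t-1} \preceq \frac{1-\gamma^{t-1}}{1-\gamma}uI$, and finish with the monotonicity of $x\mapsto \frac{ax}{ax+b}$. The only cosmetic difference is that you bound $\tilde M$ from below in the Loewner order, whereas the paper phrases the same step as a product bound on smallest singular values; the resulting inequality is identical.
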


\begin{proof}

From Lemma \ref{lem:gen_ls_var_path} we know that
\begin{equation*}
\theta_{t+1} -\theta_t^* = \Big(I+\gamma^{-1}P_{t-1}^{-1}A_t^TA_t\Big)^{-1}(\theta_t-\theta_t^*) 
\end{equation*}
which can be reformulated as:
\footnotesize
\begin{equation*}
\theta_{t+1} -\theta_t^* = P_{t-1}^{-1/2}(I+\gamma^{-1}P_{t-1}^{-1/2}A_t^TA_tP_{t-1}^{-1/2})^{-1}P_{t-1}^{1/2}(\theta_t-\theta_t^*) 
\end{equation*}
\normalsize
which gives us the following inequality:
\begin{equation*}
\begin{array}{l}
\left\|\theta_{t+1} -\theta_t^*\right\| \\
\le \left\|P_{t-1}^{-1/2}\right\|_2\left\|(I+\gamma^{-1}P_{t-1}^{-1/2}A_t^TA_tP_{t-1}^{-1/2})^{-1}\right\|_2\\
\quad\left\|P_{t-1}^{1/2}\right\|_2\left\|\theta_t-\theta_t^*\right\|
\end{array}
\end{equation*}
Then we will upper bound the terms on the right-hand side individually.

Since $lI\preceq A_{t-i}^TA_{t-i}\preceq uI$, 
$\frac{1-\gamma^{t-1}}{1-\gamma}lI\preceq P_{t-1}= \sum\limits_{i=1}^{t-1}\gamma^{i-1}A_{t-i}^TA_{t-i}
\preceq \frac{1-\gamma^{t-1}}{1-\gamma}uI$.

For the term $\left\|P_{t-1}^{-1/2}\right\|_2$, 
we have $\left\|P_{t-1}^{-1/2}\right\|_2 = \frac{1}{\sqrt{\sigma_n(P_{t-1})}}$. 
Since $\sigma_n(P_{t-1})\ge \frac{1-\gamma^{t-1}}{1-\gamma}l$, 
$\left\|P_{t-1}^{-1/2}\right\|_2\le \frac{1}{\sqrt{l}}\sqrt{\frac{1-\gamma}{1-\gamma^{t-1}}}$.

For the term $\left\|P_{t-1}^{1/2}\right\|_2$, 
we have $\left\|P_{t-1}^{1/2}\right\|_2$ $=$ $\sqrt{\sigma_1(P_{t-1})}$.
Since $\sigma_1(P_{t-1})\le \frac{1-\gamma^{t-1}}{1-\gamma}u$,
$\left\|P_{t-1}^{1/2}\right\|_2\le \sqrt{u}\sqrt{\frac{1-\gamma^{t-1}}{1-\gamma}}$.

For the term $\left\|(I+\gamma^{-1}P_{t-1}^{-1/2}A_t^TA_tP_{t-1}^{-1/2})^{-1}\right\|_2$,
we have $\left\|(I+\gamma^{-1}P_{t-1}^{-1/2}A_t^TA_tP_{t-1}^{-1/2})^{-1}\right\|_2
= 1/\sigma_n(I$$+$$\gamma^{-1}P_{t-1}^{-1/2}A_t^TA_tP_{t-1}^{-1/2})$.
For the term
$\sigma_n(I$$+$$\gamma^{-1}P_{t-1}^{-1/2}A_t^TA_tP_{t-1}^{-1/2})$,
it is equal to 
$1+\sigma_n(\gamma^{-1}P_{t-1}^{-1/2}A_t^TA_tP_{t-1}^{-1/2})$,
which is lower bounded by
$1+\gamma^{-1}\sigma_n(P_{t-1}^{-1/2})\sigma_n(A_t^TA_t)\sigma_n(P_{t-1}^{-1/2})$.

Since $\sigma_n(P_{t-1}^{-1/2}) = \frac{1}{\sqrt{\sigma_1(P_{t-1})}}$
and $\sigma_1(P_{t-1})\le \frac{1-\gamma^{t-1}}{1-\gamma}u$, we have
$\sigma_n(P_{t-1}^{-1/2})\ge \frac{1}{\sqrt{u}}\sqrt{\frac{1-\gamma}{1-\gamma^{t-1}}}$.
Together with $\sigma_n(A_t^TA_t)\ge l$, we have
$\sigma_n(P_{t-1}^{-1/2}A_t^TA_tP_{t-1}^{-1/2}) 
\ge \frac{l}{u}\frac{1-\gamma}{1-\gamma^{t-1}}$,
which results in
$\left\|(I+\gamma^{-1}P_{t-1}^{-1/2}A_t^TA_tP_{t-1}^{-1/2})^{-1}\right\|_2
\le \frac{1}{1+\gamma^{-1}\frac{l}{u}\frac{1-\gamma}{1-\gamma^{t-1}}}$.

Combining the above three terms' inequalities, we have
$\left\|\theta_{t+1} -\theta_t^*\right\| 
\le \sqrt{\frac{u}{l}}\frac{u(\gamma-\gamma^t)}{u(\gamma-\gamma^t)+l(1-\gamma)}\left\|\theta_t-\theta_t^*\right\|
\le \sqrt{\frac{u}{l}}\frac{u\gamma}{u\gamma+l(1-\gamma)}\left\|\theta_t-\theta_t^*\right\|$.

\end{proof}

Now we are ready to present the dynamic regret for the general recursive least-squares update:
\begin{theorem}
\label{thm::gen_ls_dynamic_regret}
{\it 
Let $\theta_t^*$ be the solution to $f_t(\theta)$ in Eq.\eqref{eq::gen_ls_loss}
and $\delta = u/l\ge 1$ be the condition number.
When using the discounted recursive least-squares update in Eq.\eqref{eq::gen_ls_update} 
with $\gamma <\frac{1}{\delta^{3/2}-\delta+1}$
and $\rho = \sqrt{\frac{u}{l}}\frac{u\gamma}{u\gamma+l(1-\gamma)}<1$,
  we can upper bound the dynamic regret:
  \small
  \begin{equation*}
  \mathcal{R}_d \le \sqrt{u}(u/l+1)D\frac{1}{1-\rho}\big(\left\|\theta_1-\theta_1^*\right\| +
+ \sum\limits_{t=2}^T\left\|\theta_t^*-\theta_{t-1}^*\right\|\big) 
  \end{equation*}
  \normalsize
}
\end{theorem}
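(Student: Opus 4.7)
The plan is to mirror the proofs of Theorem~\ref{thm::quad_dynamic_regret} and Theorem~\ref{thm::gen_prob_dynamic_regret}, since the needed ingredients---a uniform gradient bound on the segment between $\theta_t$ and $\theta_t^*$, and a contraction-type relation between $\|\theta_{t+1}-\theta_t^*\|$ and $\|\theta_t-\theta_t^*\|$---have already been established as Corollary~\ref{corol:norm_gen_ls_grad} and Corollary~\ref{corol:gen_ls_var_path_norm} respectively.

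First, I would apply the Mean Value Theorem to each summand of the dynamic regret: for every $t$ there exists $x_t$ on the segment $\{\beta\theta_t+(1-\beta)\theta_t^*:\beta\in[0,1]\}$ with
\begin{equation*}
f_t(\theta_t)-f_t(\theta_t^*) = \nabla f_t(x_t)^\top(\theta_t-\theta_t^*) \le \|\nabla f_t(x_t)\|\,\|\theta_t-\theta_t^*\|.
\end{equation*}
By Corollary~\ref{corol:norm_gen_ls_grad}, $\|\nabla f_t(x_t)\|\le \sqrt{u}(u/l+1)D$, so $\mathcal{R}_d\le \sqrt{u}(u/l+1)D\sum_{t=1}^T \|\theta_t-\theta_t^*\|$. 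The task thus reduces to bounding $\sum_{t=1}^T \|\theta_t-\theta_t^*\|$.

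Next I would perform the same add-and-subtract/telescope trick used in the earlier theorems. Writing $\theta_t-\theta_t^* = (\theta_t-\theta_{t-1}^*)+(\theta_{t-1}^*-\theta_t^*)$ for $t\ge 2$ and applying the triangle inequality gives
\begin{equation*}
\sum_{t=1}^T \|\theta_t-\theta_t^*\| \le \|\theta_1-\theta_1^*\| + \sum_{t=1}^{T-1}\|\theta_{t+1}-\theta_t^*\| + \sum_{t=2}^T\|\theta_t^*-\theta_{t-1}^*\|.
\end{equation*}
Corollary~\ref{corol:gen_ls_var_path_norm} gives $\|\theta_{t+1}-\theta_t^*\|\le \rho\|\theta_t-\theta_t^*\|$ with $\rho=\sqrt{u/l}\,u\gamma/(u\gamma+l(1-\gamma))$, so after enlarging the middle sum to run up to $T$ and rearranging we obtain $(1-\rho)\sum_{t=1}^T\|\theta_t-\theta_t^*\|\le \|\theta_1-\theta_1^*\|+\sum_{t=2}^T\|\theta_t^*-\theta_{t-1}^*\|$. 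Dividing by $1-\rho$ and combining with the gradient bound yields the claimed regret inequality.

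The only subtle step is justifying the division, i.e., verifying that the hypothesis $\gamma<1/(\delta^{3/2}-\delta+1)$ is exactly what forces $\rho<1$. A direct manipulation shows $\rho<1$ iff $(\sqrt{\delta}-1)u\gamma<l(1-\gamma)$, which (after dividing by $l$ and using $\delta=u/l$) is equivalent to $(\delta^{3/2}-\delta)\gamma<1-\gamma$, and hence to $\gamma<1/(\delta^{3/2}-\delta+1)$. This condition, together with $\delta\ge 1$, keeps $\rho\in[0,1)$ and closes the argument; it is also the reason the bound degrades as the condition number $\delta$ grows, matching the discussion in the surrounding text. No further estimates are required, so the main effort is really just arranging these pieces in the order above.
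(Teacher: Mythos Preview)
Your proposal is correct and follows essentially the same argument as the paper's own proof: Mean Value Theorem plus Corollary~\ref{corol:norm_gen_ls_grad} to reduce to bounding $\sum_t\|\theta_t-\theta_t^*\|$, then the add-and-subtract telescoping together with Corollary~\ref{corol:gen_ls_var_path_norm} and division by $1-\rho$. Your explicit verification that the hypothesis $\gamma<1/(\delta^{3/2}-\delta+1)$ is equivalent to $\rho<1$ is a nice addition that the paper leaves implicit.
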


\begin{proof}
The proof follows the similar steps in the proof of Theorem \ref{thm::quad_dynamic_regret}.
First, we use the Mean Value Theorem to get 
$f_t(\theta_t)-f_t(\theta_t^*) = \nabla f_t(x)^T(\theta_t-\theta_t^*)
\le \left\|\nabla f_t(x)\right\|\left\|\theta_t-\theta_t^*\right\|$,
where $x\in \{v| v = \beta \theta_t + (1-\beta)\theta_t^*,\beta\in[0,1]\}$.
According to Corollary \ref{corol:norm_gen_ls_grad},
$\left\|\nabla f_t(x)\right\|\le \sqrt{u}(u/l+1)D$.
As a result, 
$\sum\limits_{t=1}^T\big(f_t(\theta_t)-f_t(\theta_t^*)\big)\le \sqrt{u}(u/l+1)D\sum\limits_{t=1}^T\left\|\theta_t-\theta_t^*\right\|$.

Now we need to upper bound the term $\sum\limits_{t=1}^T\left\|\theta_t-\theta_t^*\right\|$.
$\sum\limits_{t=1}^T\left\|\theta_t-\theta_t^*\right\| = \left\|\theta_1-\theta_1^*\right\| 
+ \sum\limits_{t=2}^T\left\|\theta_t-\theta_{t-1}^*+\theta_{t-1}^*-\theta_t^*\right\|
\le \left\|\theta_1-\theta_1^*\right\| + \sum\limits_{t=1}^{T-1}\left\|\theta_{t+1}-\theta_{t}^*\right\| 
+ \sum\limits_{t=2}^T\left\|\theta_t^*-\theta_{t-1}^*\right\| 
\le \left\|\theta_1-\theta_1^*\right\| + \sum\limits_{t=1}^{T}\left\|\theta_{t+1}-\theta_{t}^*\right\| 
+ \sum\limits_{t=2}^T\left\|\theta_t^*-\theta_{t-1}^*\right\|$.
According to Corollary \ref{corol:gen_ls_var_path_norm}, 
$\left\|\theta_{t+1} -\theta_t^*\right\| \le \rho\left\|\theta_t-\theta_t^*\right\|$.
$\sum\limits_{t=1}^T\left\|\theta_t-\theta_t^*\right\| \le
\left\|\theta_1-\theta_1^*\right\| + \rho\sum\limits_{t=1}^{T}\left\|\theta_{t}-\theta_{t}^*\right\| 
+ \sum\limits_{t=2}^T\left\|\theta_t^*-\theta_{t-1}^*\right\|$,
which can be reformulated as 
$\sum\limits_{t=1}^T\left\|\theta_t-\theta_t^*\right\| \le
\frac{1}{1-\rho}(\left\|\theta_1-\theta_1^*\right\| +
+ \sum\limits_{t=2}^T\left\|\theta_t^*-\theta_{t-1}^*\right\|)$.
Then $\mathcal{R}_d = \sum\limits_{t=1}^T\big(f_t(\theta_t)-f_t(\theta_t^*)\big)
\le \sqrt{u}(u/l+1)D\frac{1}{1-\rho}(\left\|\theta_1-\theta_1^*\right\| +
+ \sum\limits_{t=2}^T\left\|\theta_t^*-\theta_{t-1}^*\right\|)$.

\end{proof}

In the above Theorem \ref{thm::gen_ls_dynamic_regret}, the valid range of $\gamma$ is in $(0,1/(\delta^{3/2}-\delta+1))$.
Let us now examine the requirement of $\gamma$ to achieve the sub-linear static regret:
\begin{theorem}
\label{thm::gen_ls_static_regret}
{\it Let $\theta^*$ be the solution to $\min\sum\limits_{t=1}^T f_t(\theta)$. 
  When using the discounted recursive least-squares update in Eq.\eqref{eq::gen_ls_update} with $1-\gamma = 1/T^{\alpha}, \alpha\in (0,1)$,
  we can upper bound the static regret:
  \begin{equation*}
  \mathcal{R}_s \le O(T^{1-\alpha})
  \end{equation*}
}
\end{theorem}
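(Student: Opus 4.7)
The plan is to reduce this to the general exp-concave result by setting the comparator sequence to be constant. The discounted recursive least-squares update \eqref{eq::gen_ls_update} coincides with Algorithm~\ref{alg:discountedNewton} using the full-Newton step \eqref{eq:fullP} with $\eta=1$ (and, classically, $P_0=0$). Because $f_t(\theta)=\tfrac{1}{2}\|y_t-A_t\theta\|^2$ is quadratic, it satisfies inequality \eqref{eq:quadBound} with equality, so the hypotheses of Case~\ref{it:quad} of Theorem~\ref{thm:expConcaveThm} are met (note that the Lipschitz constant $u$ is already assumed via $A_t^\top A_t \preceq uI$, and exp-concavity of $f_t$ on the bounded set follows from the bounded-gradient assumption). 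Taking the constant comparator $z_t=\theta^\ast$ for all $t$ gives $V=0$, and Theorem~\ref{thm:expConcaveThm} reduces to
\[
\mathcal{R}_s \;\le\; -a_1 T\log\gamma - a_2\log(1-\gamma) + a_4,
\]
for positive constants $a_1,a_2,a_4$ depending only on $u$, $l$, and $D$.

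With $1-\gamma = T^{-\alpha}$ and $\alpha \in (0,1)$, both surviving terms are routine. Using $-\log(1-x)\le x/(1-x)$ for $x\in[0,1)$ gives $-T\log\gamma \le T^{1-\alpha}/(1-T^{-\alpha}) = O(T^{1-\alpha})$, while $-\log(1-\gamma)=\alpha\log T = o(T^{1-\alpha})$. Summing yields $\mathcal{R}_s \le O(T^{1-\alpha})$. Note that since no path length appears in the static-regret setting, the restrictive constraint $\gamma<1/(\delta^{3/2}-\delta+1)$ needed in Theorem~\ref{thm::gen_ls_dynamic_regret} for bounding dynamic regret is not required here; any $\gamma \in (0,1)$ suffices, so the scaling $1-\gamma = T^{-\alpha}$ is admissible for all $\alpha \in (0,1)$.

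The one subtlety I expect to encounter is reconciling the initial condition of classical discounted RLS ($P_0=0$) with the requirement $P_0 = \epsilon I$ used in Algorithm~\ref{alg:discountedNewton} for invertibility. The cleanest resolution is to run with a small $\epsilon>0$: Lemma~\ref{lem:pBound} then gives $\|P_t\|\le \epsilon + u/(1-\gamma)$, and the lower bound $\sigma_n(P_t)\ge l(1-\gamma^t)/(1-\gamma)$ established in the proof of Corollary~\ref{corol:gen_ls_var_path_norm} ensures that the log-determinant term $\sum_t \nabla_t^\top P_t^{-1}\nabla_t$ in the proof of Theorem~\ref{thm:expConcaveThm} remains $O(-T\log\gamma)$ with no $\epsilon$-dependence in the leading order. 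Alternatively, one may mimic the proof of Theorem~\ref{thm::quad_static_regret} directly by Pythagorean expansion in the $P_t$-norm, using the uniform gradient bound $\|\nabla f_t(x)\|\le \sqrt{u}(u/l+1)D$ from Corollary~\ref{corol:norm_gen_ls_grad} in place of the scalar $2D$ bound used there; since $\|\theta_1-\theta^\ast\|_{P_0}^2 = 0$ when $P_0 = 0$, the argument goes through with no regularization at all, and the dominant term is precisely $\sum_t \nabla_t^\top P_t^{-1}\nabla_t$, which telescopes to $O(-T\log\gamma) = O(T^{1-\alpha})$ as above.
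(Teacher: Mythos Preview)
Your proposal is correct, but it takes a different route from the paper. You reduce the result to Theorem~\ref{thm:expConcaveThm}/Corollary~\ref{cor::newton_static} by observing that the RLS update is Algorithm~\ref{alg:discountedNewton} in Case~\ref{it:quad} and then specializing to the constant comparator $z_t=\theta^*$. The paper instead gives a self-contained argument: it carries out the $P_t$-norm Pythagorean expansion directly (with $P_0=0$, so the initial term vanishes and the quadratic identity $f_t(\theta_t)-f_t(\theta^*)=\nabla f_t(\theta_t)^\top(\theta_t-\theta^*)-\tfrac12(\theta_t-\theta^*)^\top A_t^\top A_t(\theta_t-\theta^*)$ makes the telescoping exact), and then bounds each term $\nabla_t^\top P_t^{-1}\nabla_t$ not via the log-determinant telescope you suggest but via the eigenvalue estimate
\[
\sigma_1\!\left(P_t^{-1/2}A_t^\top A_t P_t^{-1/2}\right)\le \frac{u}{l}\cdot\frac{1-\gamma}{1-\gamma^t},
\]
together with the bound $\|A_t\theta_t-y_t\|\le(u/l+1)D$ from Lemma~\ref{lem:norm_gen_ls_dif}. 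Summing $(1-\gamma)\sum_t 1/(1-\gamma^t)=O(T^{1-\alpha})$ finishes.

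What each approach buys: your reduction is more economical and makes the dependence on the general machinery transparent, but it forces you to deal with the $P_0=0$ versus $P_0=\epsilon I$ mismatch (which you correctly flag and resolve). The paper's direct eigenvalue bound sidesteps that entirely, works without any projection onto a compact $\cS$, and yields explicit constants in terms of $u,l,D$. Your log-determinant telescope would also work here once $t=1$ is handled separately, but the paper's per-step eigenvalue bound is arguably cleaner for this specific quadratic structure since it avoids the $\log|P_T|/|P_1|$ term altogether.
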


\begin{proof}

The proof follows the analysis of the online Newton method \cite{hazan2007logarithmic}.
From the update in Eq.\eqref{eq::gen_ls_update}, 
we have $\theta_{t+1}-\theta^* = \theta_t-\theta^*-P_{t}^{-1}\nabla f_t(\theta_t)$
and $P_t(\theta_{t+1}-\theta^*) = P_t(\theta_t-\theta^*)-\nabla f_t(\theta_t)$.
Multiplying the two equalities, we have
$(\theta_{t+1}-\theta^*)^TP_t(\theta_{t+1}-\theta^*) 
= (\theta_t-\theta^*)^TP_t(\theta_t-\theta^*)-2\nabla f_t(\theta_t)^T(\theta_t-\theta^*)
+ \nabla f_t(\theta_t)^TP_{t}^{-1}\nabla f_t(\theta_t)$.

After the reformulation, we have 
$\nabla f_t(\theta_t)^T(\theta_t-\theta^*) = \frac{1}{2}\nabla f_t(\theta_t)^TP_{t}^{-1}\nabla f_t(\theta_t)
+\frac{1}{2}(\theta_t-\theta^*)^TP_t(\theta_t-\theta^*)-\frac{1}{2}(\theta_{t+1}-\theta^*)^TP_t(\theta_{t+1}-\theta^*)
\le \frac{1}{2}\nabla f_t(\theta_t)^TP_{t}^{-1}\nabla f_t(\theta_t)
+\frac{1}{2}(\theta_t-\theta^*)^TP_t(\theta_t-\theta^*)-\frac{1}{2}(\theta_{t+1}-\theta^*)^T\gamma P_t(\theta_{t+1}-\theta^*)$.

Summing the above inequality from $t=1$ to $T$, we have:
$\sum\limits_{t=1}^T \nabla f_t(\theta_t)^T(\theta_t-\theta^*) 
\le \sum\limits_{t=1}^T\frac{1}{2}\nabla f_t(\theta_t)^TP_{t}^{-1}\nabla f_t(\theta_t)
+ \frac{1}{2}(\theta_1-\theta^*)^TP_1(\theta_1-\theta^*)
+ \sum\limits_{t=2}^T\frac{1}{2}(\theta_t-\theta^*)^T(P_t-\gamma P_{t-1})(\theta_t-\theta^*)
- \frac{1}{2}(\theta_{T+1}-\theta^*)^T\gamma P_T(\theta_{T+1}-\theta^*)
\le \sum\limits_{t=1}^T\frac{1}{2}\nabla f_t(\theta_t)^TP_{t}^{-1}\nabla f_t(\theta_t)
+ \frac{1}{2}(\theta_1-\theta^*)^T(P_1-A_1^TA_1)(\theta_1-\theta^*)
+ \sum\limits_{t=1}^T\frac{1}{2}(\theta_t-\theta^*)^TA_t^TA_t(\theta_t-\theta^*)$.

Since $P_1 = A_1^TA_1$ and $f_t(\theta_t) - f_t(\theta^*) = \nabla f_t(\theta_t)^T(\theta_t-\theta^*)
-\frac{1}{2}(\theta_t-\theta^*)^TA_t^TA_t(\theta_t-\theta^*)$,
we reformulate the above inequality as:
\begin{equation}
\begin{array}{ll}
\sum\limits_{t=1}^T \Big(f_t(\theta_t) - f_t(\theta^*) \Big)
\\= \sum\limits_{t=1}^T \Big(\nabla f_t(\theta_t)^T(\theta_t-\theta^*)
- \frac{1}{2}(\theta_t-\theta^*)^TA_t^TA_t(\theta_t-\theta^*)\Big) \\
\le \sum\limits_{t=1}^T\frac{1}{2}\nabla f_t(\theta_t)^TP_{t}^{-1}\nabla f_t(\theta_t)\\
 = \sum\limits_{t=1}^T\frac{1}{2}(A_t\theta_t-y_t)^TA_tP_t^{-1}A_t^T(A_t\theta_t-y_t) \\
 \le \sum\limits_{t=1}^T \frac{1}{2}\sigma_1(P_t^{-1/2}A_t^TA_tP_t^{-1/2})\left\|A_t\theta_t-y_t\right\|^2
\end{array}
\end{equation}
Since $\sigma_1(P_t^{-1/2}A_t^TA_tP_t^{-1/2})\le \sigma_1(P_t^{-1})\sigma_1(A_t^TA_t)
= \frac{1}{\sigma_n(P_t)}\sigma_1(A_t^TA_t)$.
From the proof of Corollary \ref{corol:gen_ls_var_path_norm} we know that 
$\sigma_n(P_t)\ge \frac{1-\gamma^t}{1-\gamma}l$ and $\sigma_1(A_t^TA_t)\le u$.
Then $\sigma_1(P_t^{-1/2}A_t^TA_tP_t^{-1/2})\le \frac{u}{l}\frac{1-\gamma}{1-\gamma^t}$.
As a result, we have 
\begin{equation}
\begin{array}{ll}
\sum\limits_{t=1}^T \Big(f_t(\theta_t) - f_t(\theta^*) \Big) 
&\le \sum\limits_{t=1}^T\frac{1}{2}\frac{u}{l}\frac{1-\gamma}{1-\gamma^t}\left\|A_t\theta_t-y_t\right\|^2 \\
&\le \sum\limits_{t=1}^T\frac{1}{2}\frac{u}{l}\frac{1-\gamma}{1-\gamma^t}(u/l+1)^2D^2 \\
&\le O(T^{1-\alpha})
\end{array}
\end{equation}
where the second inequality is due to Lemma \ref{lem:norm_gen_ls_dif} 
and the third inequality is due to the fact that $\sum\limits_{t=1}^T1/(1-\gamma^t)\le O(T)$
as shown in the proof of Theorem \ref{thm::quad_static_regret}.

\end{proof}

Recall that the valid range of $\gamma$ in Theorem \ref{thm::gen_ls_dynamic_regret} is
$(0,1/(\delta^{3/2}-\delta+1))$, 
while having sub-linear static regret requires $\gamma = \frac{T^{\alpha}-1}{T^{\alpha}}$.
Although for some specific $T$, there might be some intersection.
In general, these two are contradictory.
However, as discussed in the main body of the paper, more flexible
trade-offs between static and dynamic regret can be achieved via the
gradient descent rule.

\end{document}